\newcommand{\lyxdot}{.}
\DeclareRobustCommand{\lyxdeleted}[3]{{\texorpdfstring{\color{lyxdeleted}\lyxsout{#3}}{}}}
\DeclareRobustCommand{\lyxsout}[1]{\ifx\\#1\else\sout{#1}\fi}
\theoremstyle{plain}
\newtheorem{thm}{\protect\theoremname}
\theoremstyle{plain}
\newtheorem{lem}[thm]{\protect\lemmaname}
\theoremstyle{plain}
\newtheorem{cor}[thm]{\protect\corollaryname}
\theoremstyle{definition}
\newtheorem{defn}[thm]{\protect\definitionname}
\theoremstyle{remark}
\newtheorem{rem}[thm]{\protect\remarkname}
\theoremstyle{plain}
\newtheorem{fact}[thm]{\protect\factname}
\theoremstyle{plain}
\newtheorem{prop}[thm]{\protect\propositionname}
\journal{Journal of the Franklin Institute}
\providecommand{\corollaryname}{Corollary}
\providecommand{\definitionname}{Definition}
\providecommand{\factname}{Fact}
\providecommand{\lemmaname}{Lemma}
\providecommand{\propositionname}{Proposition}
\providecommand{\remarkname}{Remark}
\providecommand{\theoremname}{Theorem}
\begin{document}
\begin{frontmatter}{}

\title{Pose consensus based on dual quaternion algebra with application
to decentralized formation control of mobile manipulators}

\author[UFAL]{Heitor J. Savino}

\ead{heitor.savino@ic.ufal.br}

\author[DELT]{Luciano C. A. Pimenta}

\ead{lucpim@cpdee.ufmg.br}

\author[MIT]{Julie A. Shah}

\ead{julie\_a\_shah@csail.mit.edu}

\author[DEE]{Bruno V. Adorno\corref{cor1}}

\ead{adorno@ufmg.br}

\cortext[cor1]{Corresponding author}

\address[UFAL]{Institute of Computing, Federal University of Alagoas, Av Lourival
Melo Mota, S/N, Tabuleiro do Martins, Maceio, AL, 57072-970, Brazil}

\address[DELT]{Department of Electronic Engineering, Federal University of Minas
Gerais, , Av. Antonio Carlos, 6627, Pampulha, Belo Horizonte, MG,
31270-901, Brazil}

\address[MIT]{Department of Aeronautics and Astronautics, Massachusetts Institute
of Technology, 77 Massachusetts Av, Cambridge, MA, 02139, USA}

\address[DEE]{Department of Electrical Engineering, Federal University of Minas
Gerais, Av. Antonio Carlos, 6627, Pampulha, Belo Horizonte, MG, 31270-901,
Brazil}


\begin{abstract}
This paper presents a solution based on dual quaternion algebra to
the general problem of pose (i.e., position and orientation) consensus
for systems composed of multiple rigid-bodies. The dual quaternion
algebra is used to model the agents' poses and also in the distributed
control laws, making the proposed technique easily applicable to time-varying
formation control of general robotic systems. The proposed pose consensus
protocol has guaranteed convergence when the interaction among the
agents is represented by directed graphs with directed spanning trees,
which is a more general result when compared to the literature on
formation control. In order to illustrate the proposed pose consensus
protocol and its extension to the problem of formation control, we
present a numerical simulation with a large number of free-flying
agents and also an application of cooperative manipulation by using
real mobile manipulators.
\end{abstract}
\begin{keyword}
formation control \sep pose consensus \sep dual quaternion algebra
\sep mobile manipulator
\end{keyword}
\end{frontmatter}{}





\global\long\def\dq#1{\underline{\bm{#1}}}%

\global\long\def\quat#1{\boldsymbol{#1}}%

\global\long\def\mymatrix#1{\boldsymbol{#1}}%

\global\long\def\myvec#1{\boldsymbol{#1}}%

\global\long\def\mapvec#1{\boldsymbol{#1}}%

\global\long\def\crossproduct#1#2{\frac{#1#2-#2#1}{2}}%

\global\long\def\dualvector#1{\underline{\boldsymbol{#1}}}%

\global\long\def\dual{\varepsilon}%

\global\long\def\dotproduct#1{\langle#1\rangle}%

\global\long\def\norm#1{\left\Vert #1\right\Vert }%

\global\long\def\mydual#1{\underline{#1}}%

\global\long\def\hami#1{\overset{#1}{\mymatrix H}}%

\global\long\def\hamidq#1#2{\overset{#1}{\mymatrix H}_{8}\left(#2\right)}%

\global\long\def\hamilton#1#2{\overset{#1}{\mymatrix H}\left(#2\right)}%

\global\long\def\hamiquat#1#2{\overset{#1}{\mymatrix H}_{4}\left(#2\right)}%

\global\long\def\tplus{\dq{\mathcal{T}}}%

\global\long\def\gett#1{\dq{\mathcal{T}}\left(#1\right)}%

\global\long\def\dgett#1{\dq{\mathcal{T}}'\left(#1\right)}%

\global\long\def\getp#1{\operatorname{\mathcal{P}}\left(#1\right)}%

\global\long\def\dgetp#1{\operatorname{\mathcal{P}}'\left(#1\right)}%

\global\long\def\getd#1{\operatorname{\mathcal{D}}\left(#1\right)}%

\global\long\def\swap#1{\text{swap}\{#1\}}%

\global\long\def\imi{\hat{\imath}}%

\global\long\def\imj{\hat{\jmath}}%

\global\long\def\imk{\hat{k}}%

\global\long\def\real#1{\operatorname{\mathrm{Re}}\left(#1\right)}%

\global\long\def\imag#1{\operatorname{\mathrm{Im}}\left(#1\right)}%

\global\long\def\imvec{\boldsymbol{\imath}}%

\global\long\def\vector{\operatorname{vec}}%

\global\long\def\mathpzc#1{\fontmathpzc{#1}}%

\global\long\def\cost#1#2{\underset{\text{#2}}{\operatorname{\text{cost}}}\left(\ensuremath{#1}\right)}%

\global\long\def\diag#1{\operatorname{diag}\left(#1\right)}%

\global\long\def\frame#1{\mathcal{F}_{#1}}%

\global\long\def\ad#1#2{\text{Ad}\left(#1\right)#2}%

\global\long\def\adsharp#1#2{\text{Ad}_{\sharp}\left(#1\right)#2}%

\global\long\def\error#1{\tilde{#1}}%

\global\long\def\derror#1{\dot{\tilde{#1}}}%

\global\long\def\dderror#1{\ddot{\tilde{#1}}}%

\global\long\def\spin{\text{Spin}(3)}%

\global\long\def\spinr{\text{Spin}(3){\ltimes}\mathbb{R}^{3}}%

\section{Introduction}

Recent technological advances have enabled the use of distributed
multi-agent systems in the solution of different real-world problems.
In fact, replacing a single complex agent by multiple yet simpler
ones yields many benefits such as flexibility, fault tolerance, cost
reduction, etc., which justifies the development of decentralized
controllers for this class of systems. There exist many results regarding
the use of decentralized controllers in autonomous systems such as
formation control of autonomous vehicles \citep{ren2008distributed,oh2015survey},
networked robotics \citep{ARTEAGAPEREZ20186810,hatanaka2015}, etc.
Many other results are summarized in \citep{Cao2013}.

Some decentralized strategies are based on the solution of a consensus
problem, whose main objective is to enable agents in a multi-agent
system to reach an agreement about some variable of interest by means
of local distributed control laws, called consensus protocols. These
protocols rely on the assumption that each agent has access to the
information provided by only a subset of agents, called neighbors.
This subset is defined according to an interaction network that is
usually modeled by a graph. The problem of achieving consensus based
only on neighbors interactions was initially proposed in \citep{Vicsek1995}
and algebraically formulated in the works of \citep{fax2002,jadbabaie2003}.

On the application side, an interesting use of consensus-based algorithms
is in the solution of decentralized formation control problems in
multi-agent systems \citep{oh2015survey} and robotics \citep{schwager2011}.
In fact, several tasks may benefit from solutions of formation control,
such as load transportation with cooperative robots to move flexible
payloads \citep{bai2010cooperative}. Different formation control
scenarios have been investigated such as the ones incorporating, but
not limited to, time-varying formations \citep{brinon2014cooperative,wang2016distributed,Zhao2018,Li2018},
formations with multiple leaders \citep{Li2013,dong2017time,Dong2019},
switching network topologies \citep{Wang2012f,dong2016time,Hua2019},
time-delays \citep{dong2015formation,Yu2018}, etc. Stochastic switching
topologies with time-varying delays have also been considered in \citep{savinoTIE}.

Devising new solutions for different aspects of consensus and formation
control problems is still an active research topic. Some recent studies
have considered multi-agent systems composed of rigid-body agents,
usually with the objective of achieving a common orientation or, more
generally, a common pose (position and orientation). Hatanaka et al.
\citep{hatanaka2015}, for example, use homogeneous representations
to describe the complete pose and make use of passivity theory to
show consensus in the case of strongly connected networks. Mayhew
et al. \citep{mayhew2012} show consensus in the orientation for undirected
networks by applying a hybrid controller and a representation based
on quaternions. Sarlette et al. \citep{Sarlette2009} show relaxed
conditions for directed and varying networks. Aldana et al. \citep{Aldana2014a}
decoupled agents' positions and orientations expressing poses as two
independent entities, position vectors and orientation quaternions,
and addressed leader-follower and leaderless pose-consensus problems
in undirected networks. The same authors \citep{ALDANA20141517} extend
the previous results to consensus problems in the operational space
of robotic manipulators without velocity measurements. Wang et al.
\citep{wang2012dual} consider dual quaternions to represent the pose
and propose a control law based on the logarithm of dual quaternions
to show consensus in networks with rooted-tree topologies. Wang and
Yu \citep{WANG20173594} also consider dual quaternions for leader-followers
in undirected topologies. The logarithm of a quaternion was defined
by \citep{kim1996compact}, which served as base for the logarithmic
controller proposed by \citep{wang2012dual}.

In networks composed of multiple robotic manipulators, described as
rigid-body agents, the agents can be modeled with dual quaternions
\citep{Adorno2011} and consensus theory can be used to analyze or
design distributed control laws. Some advantages of using quaternions
and dual quaternions in formation control are shown by Mas and Kitts
\citep{Mas2017} in the framework of Cluster Space Control, by defining
each relative position of the agents by means of relative transformations
given by dual quaternions. An application on formation of unmanned
aerial vehicles is shown in \citep{MasICUAS}.

A growing interest in dual quaternions for rigid-body pose consensus
and formation control arises from the many benefits of using dual
quaternion algebra. As pointed by \citep{adornoCMI}, it is straightforward
to use dual quaternions in the representation of rigid motions, twists,
wrenches, and several geometric primitives---e.g., Plücker lines
and planes. In addition, dual quaternions are more compact than homogeneous
transformation matrices (HTM)---the former has only eight parameters
whereas the latter has sixteen---and dual quaternion multiplications
have lower computational cost than HTM multiplications \citep{Adorno2011}.
Furthermore, unit dual quaternions do not have representational singularities
(although this feature is also present in HTM) and, given a unit dual
quaternion, it is easy to extract relevant geometric parameters as,
for example, translation, axis of rotation, and angle of rotation.
Moreover, dual quaternions are easily mapped into a vector structure,
which can be particularly convenient when controlling a robot as they
can be used directly in the control law. Finally, complex systems
(e.g., mobile manipulators and humanoids) can be easily modeled with
dual quaternions using a whole-body approach \citep{Adorno2011,Fonseca2016}.
Thanks to the aforementioned advantages, dual quaternions are used
throughout the paper as the main mathematical tool for representing
poses and rigid motions.

\subsection{Statement of Contributions and Paper Organization}

The contributions of this paper are the following:
\begin{enumerate}
\item First, we derive a logarithmic differentiable mapping of dual quaternions,
extending the result in \citep{kim1996compact}. This allows a straightforward
theoretical connection between the myriad of results of rigid-body
modeling based on dual quaternion algebra and the results of linear
consensus theory applied to Euclidean spaces. The advantage of such
connection is that previous results in linear consensus theory for
time-delays and switching topologies in Euclidean spaces, such as
the ones presented in \citep{savinoTIE}, may be easily applied to
the problem of formation control of rigid bodies, which is non-linear
and whose underlying topological space is a non-Euclidean manifold;
\item Next, by defining the agent's output as the logarithmic mapping of
the unit dual quaternion corresponding to the agent's pose, we propose
a pose-consensus protocol with guaranteed convergence for scenarios
where the interaction graphs are given by directed graphs with directed
spanning trees, which is a more general case when compared to previous
results, for instance, the ones in \citep{wang2012dual,Aldana2014a,ALDANA20141517,hatanaka2015}.
It is important to note that guaranteeing consensus in the pose is
not a trivial task as unit dual quaternions lie in a non-Euclidean
topological space (more specifically, unit dual quaternions belong
to the Lie group $\text{Spin(3)}\ltimes\mathbb{R}^{3}$, whose underlying
manifold is $\mathbb{S}^{3}\times\mathbb{R}^{3}$ \citep{Kussaba2017});
\item Different from other works such as \citep{Aldana2014a,ALDANA20141517},
we propose a consensus-based strategy for decentralized formation
control of rigid-bodies in which both position and orientation are
treated in a unified manner, which allows to consider any arbitrary
communication network containing a directed spanning tree. An extension
to consider time-varying formations is also devised. This result is
more general than the previous ones found in the literature that also
focus on the formation control of systems composed of rigid bodies
\citep{wang2012dual,Mas2017,MasICUAS,WANG20173594} as our approach:
(i) is decentralized in the sense that only neighbor information is
needed by each agent, in contrast to the necessity of obtaining global
information such as the state variables of a shape or of a leader
as in \citep{Mas2017,MasICUAS}; and (ii) is also able to deal with
general directed graph topologies, in contrast to the requirement
of imposing some specific graph topologies such as undirected graphs
\citep{WANG20173594} and rooted trees \citep{wang2012dual};
\item On the application side, whole-body control and consensus protocols
are used to propose a strategy that allows decentralized formation
control of the end-effectors of mobile manipulators whose kinematic
models are given directly in the algebra of dual quaternions;
\item Finally, the proposed strategy is verified by means of numerical simulations
and also in a real-world cooperative manipulation task.
\end{enumerate}
The paper is organized as follows. Section~\ref{sec:preliminaries}
presents a brief mathematical background whereas Section~\ref{differential_logarithmic_mapping}
presents the differential logarithmic mapping of unit dual quaternions,
which is of central importance in the development of the pose-consensus
protocols proposed in Section~\ref{sec:consensus}. In Section~\ref{sec:formation}
we solve the problem of formation control of multiple rigid-bodies
by using dual quaternion algebra. Section~\ref{sec:Examples} shows
a numerical simulation with a large number of agents to illustrate
the results and scalability of the proposed method, and also shows
the formation control applied to real robots in a cooperative manipulation
task. Finally, Section~\ref{sec:Conclusion} concludes the paper
and provides indications of future works.

\section{Mathematical Preliminaries\label{sec:preliminaries}}

This section briefly presents the main mathematical tools and notations
used throughout the paper. For more information on the algebraic formulation
of the consensus problem and dual quaternion algebra, please refer
to \citep{jadbabaie2003} and \citep{Adorno2017}, respectively.

\subsection{Algebraic Graph Theory\label{subsec:Algebraic-Graph-Theory}}

The information flow of the multi-agent system is represented by a
simple directed graph. Let a simple weighted directed graph be defined
by the ordered triplet $\mathcal{G}\left(\mathcal{V},\mathcal{E},\mymatrix A\right)$,
where: $\mathcal{V}$ is a set of $n\in\mathbb{N}$ vertices (nodes)
arbitrarily labeled as $v_{1},v_{2},\ldots,v_{n}$; the set $\mathcal{E}$
contains the directed edges $e_{ij}=(v_{i},v_{j})$ that connect the
vertices, where the first element $v_{i}\in\mathcal{V}$ is said to
be the parent node (tail) and the latter, $v_{j}\in\mathcal{V}$,
to be the child node (head); and $\mymatrix A=[a_{ij}]$ is the adjacency
matrix of order $n\times n$ related to the edges that assigns a real
non-negative weight value for each $e_{ji}$: 
\begin{gather}
a_{ij}\begin{cases}
=0, & \text{if }i=j\text{ or }\nexists e_{ji},\\
>0, & \text{iff }\exists e_{ji}.
\end{cases}\label{ruleaij}
\end{gather}

The degree matrix $\mymatrix{\Delta}=[\Delta_{ij}]$, which is related
to $\mymatrix A$, is a diagonal matrix with elements $\Delta_{ii}=\sum_{j=1}^{n}a_{ij}$.
The Laplacian matrix associated to the graph $\mathcal{G}$ is given
by $\mymatrix L=\mymatrix{\Delta}-\mymatrix A$, and the following
property holds: 
\begin{align}
\mymatrix L\myvec 1_{n} & =\myvec 0_{n},\label{eq:laplacian}
\end{align}
where $\myvec 1_{n}$ and $\myvec 0_{n}$ are $n$-dimensional column-vectors
of ones and zeros, respectively.

A directed tree is a directed graph with only one node without parent
nodes (or without directed edges pointing towards it) called root,
and all other nodes having exactly one parent. Also, there is a path,
i.e. a sequence of edges, connecting the root to any other node in
the tree. A directed spanning tree is a directed tree that can be
formed from the removal of some of the edges of a directed graph,
such that all nodes are included and there is a unique directed path
from the root node to any other node in the graph.

\subsection{Quaternions and dual quaternions}

Quaternions can be regarded as an extension of complex numbers, and
the quaternion set is defined as 
\begin{align}
\mathbb{H} & \triangleq\left\{ h_{1}+\imi h_{2}+\imj h_{3}+\imk h_{4}\,:\,h_{1},h_{2},h_{3},h_{4}\in\mathbb{R}\right\} ,\label{eq:quaternion_set}
\end{align}
in which the imaginary units $\imi$, $\imj$, and $\imk$ have the
following properties: 
\begin{align}
\hat{\imath}^{2} & =\hat{\jmath}^{2}=\hat{k}^{2}=\hat{\imath}\hat{\jmath}\hat{k}=-1.\label{eq:imaginary_units_properties}
\end{align}
Addition and multiplication are defined for quaternions analogously
to complex numbers (i.e., in the usual way), and one just needs to
respect the properties in \eqref{eq:imaginary_units_properties} for
the imaginary units. Given $\quat h\in\mathbb{H}$, such that $\quat h=h_{1}+\imi h_{2}+\imj h_{3}+\imk h_{4}$,
we define $\real{\quat h}\triangleq h_{1}$ and $\imag{\quat h}\triangleq\imi h_{2}+\imj h_{3}+\imk h_{4}$.
The conjugate of $\quat h$ is defined as $\quat h^{*}\triangleq\real{\quat h}-\imag{\quat h}$
and its norm is given by $\norm{\quat h}\triangleq\sqrt{\quat h^{*}\quat h}=\sqrt{\quat h\quat h^{*}}.$

The set 
\begin{align}
\mathbb{H}_{p} & \triangleq\left\{ \quat h\in\mathbb{H}\,:\,\real{\quat h}=0\right\} \label{eq:set_pure_quaternions}
\end{align}
is usually called the set of \emph{pure} quaternions and has a bijective
relation with $\mathbb{R}^{3}$. Hence, the quaternion $\left(x\hat{\imath}+y\hat{\jmath}+z\hat{k}\right)\in\mathbb{H}_{p}$
represents the point $\left(x,y,z\right)\in\mathbb{R}^{3}$ \citep{Selig2005}.
The set of quaternions with unit norm is defined as 
\begin{align}
\mathbb{S}^{3} & \triangleq\left\{ \quat h\in\mathbb{H}\,:\,\norm{\quat h}=1\right\} ,\label{eq:quaternions_unit_norm}
\end{align}
and elements of $\mathbb{S}^{3}$ equipped with the multiplication
operation form the group of rotations $\text{Spin}(3)$, which double
covers $\mathrm{SO}\left(3\right)$. A unit quaternion $\quat r\in\mathbb{S}^{3}$
represents a rotation from an inertial frame $\frame{}$ to frame
$\frame i$ and can always be written as 
\begin{align}
\quat r_{i} & =\cos\left(\frac{\phi_{i}}{2}\right)+\sin\left(\frac{\phi_{i}}{2}\right)\quat n_{i},\label{eq:rotation}
\end{align}
where $\phi_{i}\in\mathbb{R}$ is a rotation angle around the rotation
axis $\quat n_{i}\in\mathbb{S}^{3}\cap\mathbb{H}_{p}$ \citep{Adorno2017}.
Notice that $\quat n_{i}$ is pure (hence it is equivalent to a vector
in $\mathbb{R}^{3}$) and has unit norm.

The set of dual quaternions extends the set of quaternions and is
defined as 
\begin{align}
\mathcal{H} & \triangleq\left\{ \quat h+\epsilon\quat h'\,:\,\quat h,\quat h'\in\mathbb{H},\,\dual^{2}=0,\,\dual\neq0\right\} ,\label{eq:dual_quaternion_set}
\end{align}
where $\dual$ is usually called dual (or Clifford) unit \citep{Selig2005}.
Similarly to quaternions, addition and multiplication are defined
in the usual way, and one just needs to respect the properties of
the imaginary and dual units.

Given $\dq h\in\mathcal{H}$ such that $\dq h=h_{1}+\imi h_{2}+\imj h_{3}+\imk h_{4}+\dual\left(h_{1}'+\imi h_{2}'+\imj h_{3}'+\imk h_{4}'\right)$,
we define the operators 
\begin{align*}
\begin{split}\real{\dq h} & \triangleq h_{1}+\dual h_{1}',\\
\imag{\dq h} & \triangleq\imi h_{2}+\imj h_{3}+\imk h_{4}+\dual\left(\imi h_{2}'+\imj h_{3}'+\imk h_{4}'\right).
\end{split}
\end{align*}
Analogously to quaternions, the conjugate of $\dq h\in\mathcal{H}$
is defined as $\dq h^{*}\triangleq\real{\dq h}-\imag{\dq h}$, and
its norm is given by $\norm{\dq h}\triangleq\sqrt{\dq h\dq h^{*}}=\sqrt{\dq h^{*}\dq h}$.

The set 
\[
\mathcal{H}_{p}\triangleq\left\{ \dq h\in\mathcal{H}\,:\,\real{\dq h}=0\right\} 
\]
is called set of pure dual quaternions and is isomorphic to $\mathbb{R}^{6}$.
Some physical objects---for instance, twists (i.e., linear and angular
velocities) and wrenches (i.e., forces and moments)---can be represented
as elements of $\mathcal{H}_{p}$ \citep{Adorno2017}.

Elements of the set 
\begin{align*}
\dq{\mathcal{S}} & \triangleq\left\{ \dq h\in\mathcal{H}\,:\,\norm{\dq h}=1\right\} 
\end{align*}
are called unit dual quaternions. The set $\dq{\mathcal{S}}$ equipped
with the multiplication operation form the group $\spinr$, which
double covers $\mathrm{SE}\left(3\right)$. A unit dual quaternion
$\dq x\in\dq{\mathcal{S}}$ represents a rigid motion from an inertial
frame $\frame{}$ to frame $\frame i$ and is represented by 
\begin{align}
\dq x_{i} & =\quat r_{i}+\dual\frac{1}{2}\quat p_{i}\quat r_{i},\label{eq:rigidmotion}
\end{align}
where $\quat r_{i}\in\mathbb{S}^{3}$ and $\quat p_{i}\in\mathbb{H}_{p}$
represent the rotation and translation, respectively \citep{Selig2005}.

Since $\text{Spin}(3)$ and $\text{Spin}(3)\ltimes\mathbb{R}^{3}$
are non-commutative groups---analogously to $\mathrm{SO}\left(3\right)$
and $\mathrm{SE}\left(3\right)$---, quaternions and dual quaternions
are non-commutative under multiplication. However, we can use the
Hamilton operators, which are matrices defined in \citep{McCarthy1990,Adorno2017}
for both quaternions and dual quaternions, that can be used to commute
these terms in algebraic expressions such that, for $\quat h_{1},\quat h_{2}\in\mathbb{H}$
and $\dq h_{1},\dq h_{2}\in\mathcal{H}$, 
\begin{align}
\vector_{4}(\quat h_{1}\quat h_{2}) & =\hami +_{4}(\quat h_{1})\vector_{4}\quat h_{2}=\hami -_{4}(\quat h_{2})\vector_{4}\quat h_{1},\\
\vector_{8}(\dq h_{1}\dq h_{2}) & =\hami +_{8}(\dq h_{1})\vector_{8}\dq h_{2}=\hami -_{8}(\dq h_{2})\vector_{8}\dq h_{1},\label{eq:hami}
\end{align}
where $\vector_{4}\quat h=\begin{bmatrix}h_{1} & \cdots & h_{4}\end{bmatrix}^{T}$
and $\vector_{8}\dq h=\begin{bmatrix}h_{1} & \cdots & h_{8}\end{bmatrix}^{T}$
are mappings of quaternions into $\mathbb{R}^{4}$ and dual quaternions
into $\mathbb{R}^{8}$, respectively; i.e, $\vector_{4}:\mathbb{H}\rightarrow\mathbb{R}^{4}$
and $\vector_{8}:\mathcal{H}\rightarrow\mathbb{R}^{8}$.The Hamilton
operators are given explicitly by 
\begin{align}
\hamiquat +{\quat h} & =\begin{bmatrix}h_{1} & -h_{2} & -h_{3} & -h_{4}\\
h_{2} & h_{1} & -h_{4} & h_{3}\\
h_{3} & h_{4} & h_{1} & -h_{2}\\
h_{4} & -h_{3} & h_{2} & h_{1}
\end{bmatrix}, & \hamiquat -{\quat h} & =\begin{bmatrix}h_{1} & -h_{2} & -h_{3} & -h_{4}\\
h_{2} & h_{1} & h_{4} & -h_{3}\\
h_{3} & -h_{4} & h_{1} & h_{2}\\
h_{4} & h_{3} & -h_{2} & h_{1}
\end{bmatrix},\label{eq:hamilton-four}\\
\hamidq +{\dq h} & =\begin{bmatrix}\hamiquat +{\quat h} & \mymatrix 0_{4\times4}\\
\hamiquat +{\quat h'} & \hamiquat +{\quat h}
\end{bmatrix}, & \hamidq -{\dq h} & =\begin{bmatrix}\hamiquat -{\quat h} & \mymatrix 0_{4\times4}\\
\hamiquat -{\quat h'} & \hamiquat -{\quat h}
\end{bmatrix}.\label{eq:hamilton_eight}
\end{align}

We also define the mappings $\vector_{3}\,:\,\mathbb{H}_{p}\rightarrow\mathbb{R}^{3}$
and $\vector_{6}\,:\,\mathcal{H}_{p}\rightarrow\mathbb{R}^{6}$. Thus,
given a pure quaternion $\quat h\in\mathbb{H}_{p}$ such that $\quat h=\imag{\quat h}=h_{1}\imi+h_{2}\imj+h_{3}\imk$,
then $\vector_{3}\quat h=\begin{bmatrix}h_{1} & h_{2} & h_{3}\end{bmatrix}^{T}$.
Analogously, given a pure dual quaternion $\dq h\in\mathcal{H}_{p}$
such that $\dq h=\imag{\dq h}=h_{1}\imi+h_{2}\imj+h_{3}\imk+\dual\left(h_{4}\imi+h_{5}\imj+h_{6}\imk\right)$,
then $\vector_{6}\dq h=\begin{bmatrix}h_{1} & \cdots & h_{6}\end{bmatrix}^{T}$.

The logarithm of a unit quaternion given as in \eqref{eq:rotation}
yields \citep{kim1996compact} 
\begin{align}
\log\quat r_{i} & \triangleq\frac{\phi_{i}}{2}\quat n_{i}.\label{eq:logquat}
\end{align}

Similarly, the logarithm of a unit dual quaternion given as in \eqref{eq:rigidmotion}
is defined as \citep{Han2008}: 
\begin{align}
\log\dq x_{i} & \triangleq\frac{1}{2}(\phi_{i}\quat n_{i}+\dual\quat p_{i}),\label{eq:logdq}
\end{align}
where $\log\dq x_{i}\in\mathcal{H}_{p}$.

Let $\dq g\in\mathcal{H}_{p}$, such that $\dq g=\quat g+\dual\quat g'$,
the inverse mapping $\exp:\mathcal{H}_{p}\to\spinr$ is given by \citep{Adorno2011}
\begin{align}
\exp\dq g & =\exp\quat g+\dual\quat g'\exp\quat g,\label{eq:expdg}\\
\exp\quat g & =\begin{cases}
\cos\norm{\quat g}+\frac{\sin\norm{\quat g}}{\norm{\quat g}}\quat g & \text{if }\quat g\neq0,\\
1 & \text{otherwise.}
\end{cases}\label{eq:expg}
\end{align}
 Therefore, $\dq x=\exp(\log\dq x)$ and \eqref{eq:expdg} is an injective
mapping for $\phi_{i}\in[0,2\pi)$.

The twist $\dq{\xi}_{i}\in\mathcal{H}_{p}$ of frame $\frame i$ expressed
with respect to the inertial frame $\frame{}$ is defined as 
\begin{align}
\dq{\xi}_{i} & \triangleq\quat{\omega}_{i}+\dual(\dot{\quat p}_{i}+\quat p_{i}\times\quat{\omega}_{i}),\label{eq:twist2}
\end{align}
where $\quat{\omega}_{i}\in\mathbb{H}_{p}$ is the angular velocity
and $\dot{\quat p}_{i}\in\mathbb{H}_{p}$ is the linear velocity.
The cross-product for pure quaternions is given by 
\begin{equation}
\quat p_{i}\times\quat{\omega}_{i}=\frac{\quat p_{i}\quat{\omega}_{i}-\quat{\omega}_{i}\quat p_{i}}{2},\label{eq:crossproduct}
\end{equation}
which is equivalent to the vector cross-product in $\mathbb{R}^{3}$
thanks to the isomorphism between $\mathbb{H}_{p}$ and $\mathbb{R}^{3}$
under addition operations.

The derivative of $\dq x_{i}$ can be expressed by \citep{Adorno2017}
\begin{align}
\dot{\dq x}_{i} & =\frac{1}{2}\dq{\xi}_{i}\dq x_{i}.\label{eq:dqkinxi2}
\end{align}

\section{The differential logarithmic mapping \label{differential_logarithmic_mapping}}

In order to design the consensus protocols and the corresponding consensus-based
formation controllers, we use the differential logarithmic mapping
of dual quaternions. This differential mapping allows us to circumvent
the difficulties related to the topology of the non-Euclidean manifold
$\dq{\mathcal{S}}$. Indeed, as shown in \citep{Kussaba2017} the
set $\dq{\mathcal{S}}$ of unit dual quaternions can be regarded as
the product manifold $\mathbb{S}^{3}\times\mathbb{R}^{3}$. Therefore,
the consensus protocols usually found in the literature cannot be
directly applied to elements of $\dq{\mathcal{S}}$ because those
protocols assume an $n$-dimensional Euclidean space.

We extend the results of Kim et al. \citep{kim1996compact}, which
were proposed only for quaternions, to derive the differential logarithm
mapping for dual quaternions.\lyxdeleted{Bruno Vilhena Adorno}{Sat Jun 15 00:05:08 2019}{ }
\begin{lem}[\citep{kim1996compact}]
\label{thm:quaternion_logarithm} Consider $\quat r\in\mathbb{S}^{3}$,
with $\quat r=\cos\left(\phi/2\right)+\quat n\sin\left(\phi/2\right)$,
where $\quat n\in\mathbb{S}^{3}\cap\mathbb{H}_{p}$ and $\phi\in\left[0,2\pi\right)$,
and $\quat y=\left(y_{x}\imi+y_{y}\imj+y_{z}\imk\right)\in\mathbb{H}_{p}$
such that $\quat y=\log\quat r$. Thus 
\begin{equation}
\frac{\partial\vector_{4}\quat r}{\partial\vector_{3}\quat y}=\begin{bmatrix}-ay_{x} & -ay_{y} & -ay_{z}\\
by_{x}^{2}+a & by_{x}y_{y} & by_{x}y_{z}\\
by_{x}y_{y} & by_{y}^{2}+a & by_{y}y_{z}\\
by_{x}y_{z} & by_{y}y_{z} & by_{z}^{2}+a
\end{bmatrix},\label{eq:M}
\end{equation}
where 
\begin{align*}
a & =\frac{\sin\norm{\quat y}}{\norm{\quat y}}, & b & =\frac{\cos\norm{\quat y}}{\norm{\quat y}^{2}}-\frac{\sin\norm{\quat y}}{\norm{\quat y}^{3}}
\end{align*}
for $\quat y\neq0$; 
\[
\frac{\partial\vector_{4}\quat r}{\partial\vector_{3}\quat y}=\begin{bmatrix}\mymatrix 0_{1\times3}\\
\mymatrix I_{3}
\end{bmatrix},
\]
if $\quat y=0$.\lyxdeleted{Bruno Vilhena Adorno}{Sat Jun 15 00:05:08 2019}{ }
\end{lem}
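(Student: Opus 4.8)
\emph{Proof plan.} The strategy is simply to write $\quat r=\exp\quat y$ in coordinates and differentiate entrywise. Set $\theta\triangleq\norm{\quat y}=\sqrt{y_{x}^{2}+y_{y}^{2}+y_{z}^{2}}$. For $\quat y\neq0$, equation \eqref{eq:expg} gives $\quat r=\cos\theta+\frac{\sin\theta}{\theta}\quat y$, so with $a=\sin\theta/\theta$ the vector $\vector_{4}\quat r$ has first component $\cos\theta$ and remaining components $a\,y_{x},\,a\,y_{y},\,a\,y_{z}$. First I would record the chain-rule identity $\partial\theta/\partial y_{k}=y_{k}/\theta$ for $k\in\{x,y,z\}$, valid whenever $\quat y\neq0$.

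For the first component, $\partial(\cos\theta)/\partial y_{k}=-\sin\theta\,(y_{k}/\theta)=-a\,y_{k}$, which is exactly the first row of \eqref{eq:M}. For the lower three components I would apply the product rule: $\partial(a\,y_{j})/\partial y_{k}=a'(\theta)\,(y_{k}/\theta)\,y_{j}+a\,\delta_{jk}$, where $\delta_{jk}$ is the Kronecker delta. Since $a'(\theta)=\frac{d}{d\theta}\frac{\sin\theta}{\theta}=\frac{\cos\theta}{\theta}-\frac{\sin\theta}{\theta^{2}}$, one gets $a'(\theta)/\theta=\frac{\cos\theta}{\theta^{2}}-\frac{\sin\theta}{\theta^{3}}=b$, hence $\partial(a\,y_{j})/\partial y_{k}=b\,y_{j}y_{k}+a\,\delta_{jk}$. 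Collecting these entries---diagonal terms $b\,y_{j}^{2}+a$ and off-diagonal terms $b\,y_{j}y_{k}$---reproduces the lower $3\times3$ block of \eqref{eq:M}.

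It remains to treat $\quat y=0$, and this is the only delicate point: there one cannot substitute into the formulas above, because $\theta=\norm{\quat y}$ is not differentiable at the origin. Instead I would use that $\cos\theta$ and $\sin\theta/\theta$ are (entire) power series in $\theta^{2}=\norm{\quat y}^{2}$, which is a polynomial in $(y_{x},y_{y},y_{z})$; hence, by \eqref{eq:expg}, $\quat r=\exp\quat y$ is smooth at $\quat y=0$ with $\exp\quat y=1+\quat y+O(\norm{\quat y}^{2})$, and reading off the linear part yields $\partial\vector_{4}\quat r/\partial\vector_{3}\quat y=\begin{bmatrix}\mymatrix 0_{1\times3}\\ \mymatrix I_{3}\end{bmatrix}$ at the origin. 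Equivalently, this matches the limit of \eqref{eq:M} as $\theta\to0$, since $a\to1$ and $b\,y_{j}y_{k}\to0$; everything else is a routine computation.
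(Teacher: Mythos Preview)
Your argument is correct: writing $\quat r=\exp\quat y$ via \eqref{eq:expg} and differentiating entrywise using $\partial\theta/\partial y_{k}=y_{k}/\theta$ gives exactly the rows of \eqref{eq:M}, and your power-series observation handles the origin cleanly. The paper itself does not supply a proof for this lemma---it simply refers the reader to \citep{kim1996compact}---so your self-contained computation is in fact more explicit than what appears here, and is the natural derivation one would expect that reference to contain.
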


\begin{proof}
See \citep{kim1996compact}.\lyxdeleted{Bruno Vilhena Adorno}{Sat Jun 15 00:05:08 2019}{ }
\end{proof}
The entries of $\partial\vector_{4}\quat r/\partial\vector_{3}\quat y$,
given in Theorem~\ref{thm:quaternion_logarithm}, depend on the coefficients
of $\quat y$, which is the logarithm of $\quat r\in\mathbb{S}^{3}$.
However, it is convenient to rewrite that matrix as a function of
only the coefficients of $\quat r$ in order to exploit some useful
properties later on.\lyxdeleted{Bruno Vilhena Adorno}{Sat Jun 15 00:05:08 2019}{ }
\begin{thm}[Alternative form of Lemma~\ref{thm:quaternion_logarithm}]
\label{th:rQy} Consider $\quat r=\left(r_{1}+r_{2}\imi+r_{3}\imj+r_{4}\imk\right)\in\mathbb{S}^{3}$,
with $\quat r=\cos\left(\phi/2\right)+\quat n\sin\left(\phi/2\right)$,
where $\quat n=\left(n_{x}\imi+n_{y}\imj+n_{z}\imk\right)\in\mathbb{S}^{3}\cap\mathbb{H}_{p}$
and $\phi\in\left[0,2\pi\right)$, and $\quat y=\left(y_{x}\imi+y_{y}\imj+y_{z}\imk\right)\in\mathbb{H}_{p}$
such that $\quat y\triangleq\log\quat r=\quat n\left(\phi/2\right)$.
Thus, 
\begin{gather}
\frac{\partial\vector_{4}\quat r}{\partial\vector_{3}\quat y}=\begin{bmatrix}-r_{2} & -r_{3} & -r_{4}\\
\Gamma n_{x}^{2}+\Theta & \Gamma n_{x}n_{y} & \Gamma n_{x}n_{z}\\
\Gamma n_{y}n_{x} & \Gamma n_{y}^{2}+\Theta & \Gamma n_{y}n_{z}\\
\Gamma n_{z}n_{x} & \Gamma n_{z}n_{y} & \Gamma n_{z}^{2}+\Theta
\end{bmatrix},\label{eq:L}
\end{gather}
where $\Gamma=r_{1}-\Theta$ and 
\begin{gather*}
\Theta=\begin{cases}
1 & \text{if }\phi=0,\\
\frac{\sin\left(\phi/2\right)}{\phi/2} & \text{otherwise.}
\end{cases}
\end{gather*}
\end{thm}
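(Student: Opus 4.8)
The plan is to obtain \eqref{eq:L} by directly substituting the explicit relation $\quat y = \log\quat r = \quat n\left(\phi/2\right)$ into the matrix \eqref{eq:M} of Lemma~\ref{thm:quaternion_logarithm}; that is, nothing deep is needed beyond a change of variables, since the Jacobian itself is already provided by Lemma~\ref{thm:quaternion_logarithm}. From $\quat y = \left(\phi/2\right)\quat n$ we get $y_x = \left(\phi/2\right)n_x$, $y_y = \left(\phi/2\right)n_y$, $y_z = \left(\phi/2\right)n_z$, and, because $\quat n\in\mathbb{S}^{3}\cap\mathbb{H}_{p}$ has unit norm and $\phi/2\in\left[0,\pi\right)$, we have $\norm{\quat y} = \phi/2$ with no branch ambiguity in the square root. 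I would first treat the generic case $\phi\neq 0$, where $\sin\left(\phi/2\right)\neq 0$; then $\quat r = \cos\left(\phi/2\right) + \quat n\sin\left(\phi/2\right)$ gives $r_1 = \cos\left(\phi/2\right)$ and $r_2 = n_x\sin\left(\phi/2\right)$, $r_3 = n_y\sin\left(\phi/2\right)$, $r_4 = n_z\sin\left(\phi/2\right)$.

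Next I would rewrite the scalar coefficients $a$ and $b$ of Lemma~\ref{thm:quaternion_logarithm} in terms of $r_1$ and $\Theta$. Directly, $a = \sin\left(\phi/2\right)/\left(\phi/2\right) = \Theta$, and
\[
b = \frac{\cos\left(\phi/2\right)}{\left(\phi/2\right)^{2}} - \frac{\sin\left(\phi/2\right)}{\left(\phi/2\right)^{3}} = \frac{1}{\left(\phi/2\right)^{2}}\left(r_1 - \Theta\right) = \frac{\Gamma}{\left(\phi/2\right)^{2}}.
\]
Using these, the first row of \eqref{eq:M} becomes $-ay_x = -\Theta\left(\phi/2\right)n_x = -n_x\sin\left(\phi/2\right) = -r_2$, and analogously $-ay_y = -r_3$, $-ay_z = -r_4$; the diagonal entries of the lower block become $by_x^{2}+a = \Gamma n_x^{2}+\Theta$ and cyclically; and the off-diagonal entries become $by_xy_y = \Gamma n_xn_y$, $by_xy_z = \Gamma n_xn_z$, $by_yy_z = \Gamma n_yn_z$. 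Collecting these substitutions yields precisely the matrix \eqref{eq:L}.

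Finally I would dispose of the degenerate case $\phi = 0$: then $\quat y = 0$, so $\quat r = 1$, whence $r_1 = 1$ and $r_2 = r_3 = r_4 = 0$; by definition $\Theta = 1$, hence $\Gamma = r_1 - \Theta = 0$. Substituting these into \eqref{eq:L} gives $\begin{bmatrix}\mymatrix 0_{1\times 3}\\ \mymatrix I_3\end{bmatrix}$, which is exactly the $\quat y = 0$ branch of Lemma~\ref{thm:quaternion_logarithm}, so the two cases agree; one may also note $\Theta\to 1$ and $\Gamma\to 0$ as $\phi\to 0$, confirming continuity of \eqref{eq:L} across the cases. The whole argument is routine bookkeeping; the only point requiring a little care is the identification $\norm{\quat y} = \phi/2$, which uses the hypothesis $\phi\in\left[0,2\pi\right)$, together with the observation that $\quat n$ is undefined at $\phi = 0$ — which is why that case is handled separately rather than through the substitution $n_x = r_2/\sin\left(\phi/2\right)$.
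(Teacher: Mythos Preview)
Your proof is correct and follows essentially the same approach as the paper: both handle the cases $\phi=0$ and $\phi\neq 0$ separately, use $\norm{\quat y}=\phi/2$ (which is the content of the paper's Fact~\ref{fact:identities with quaternion norm}), and verify entry-by-entry that the matrix \eqref{eq:M} of Lemma~\ref{thm:quaternion_logarithm} coincides with \eqref{eq:L}. Your organization is slightly cleaner in that you first identify $a=\Theta$ and $b=\Gamma/(\phi/2)^{2}$ and then substitute uniformly, whereas the paper expands each entry from scratch, but the substance is the same.
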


\begin{proof}
First, let us denote the matrix \eqref{eq:M} in Theorem~\ref{thm:quaternion_logarithm}
by $\mymatrix M=\left[m_{ij}\right]$ and the matrix \eqref{eq:L}
by $\mymatrix Q=\left[q_{ij}\right]$. For the case when $\phi=0$,
$r_{1}=1$, we have $\Gamma=0$ and then, clearly, $\mymatrix M=\mymatrix Q=\begin{bmatrix}\mymatrix 0_{3\times1} & \mymatrix I_{3}\end{bmatrix}^{T}.$

In order to show that $\mymatrix M=\mymatrix Q$ when $\phi\neq0$,
we start by verifying the terms of the first row. Using Fact~\ref{fact:identities with quaternion norm}
(see \ref{sec:AppA}) we obtain 
\begin{gather*}
m_{11}=-\frac{\sin\norm{\quat y}}{\norm{\quat y}}\frac{\phi}{2}n_{x}=-\sin\left(\frac{\phi}{2}\right)n_{x}=-r_{2}=q_{11}.
\end{gather*}
Analogously, $m_{12}=-\sin\left(\phi/2\right)n_{y}=-r_{3}=q_{12}$
and $m_{13}=-\sin\left(\phi/2\right)n_{z}=-r_{4}=q_{13}$.

Thanks to the symmetry of the the last three rows of $\mymatrix M$
and $\mymatrix Q$ only a few terms must be verified, namely $q_{21}$,
$q_{22}$, $q_{23}$, $q_{32}$, $q_{33}$, and $q_{43}$. Starting
from $m_{21}$ and using Fact~\ref{fact:identities with quaternion norm},
we obtain 
\begin{align*}
m_{21} & =by_{x}^{2}+a\\
 & =\left(\frac{\cos\norm{\quat y}}{\norm{\quat y}^{2}}-\frac{\sin\norm{\quat y}}{\norm{\quat y}^{3}}\right)y_{x}^{2}+\frac{\sin\norm{\quat y}}{\norm{\quat y}}\\
 & =\left(\frac{\cos\left(\phi/2\right)}{\left(\phi/2\right)^{2}}-\frac{\sin\left(\phi/2\right)}{\left(\phi/2\right)^{3}}\right)\left(n_{x}\frac{\phi}{2}\right)^{2}+\frac{\sin\left(\phi/2\right)}{\left(\phi/2\right)}\\
 & =\cos\left(\frac{\phi}{2}\right)n_{x}^{2}+\frac{\sin\left(\phi/2\right)}{\left(\phi/2\right)}\left(1-n_{x}^{2}\right)\\
 & =\left(r_{1}-\Theta\right)n_{x}^{2}+\Theta\\
 & =\Gamma n_{x}^{2}+\Theta=q_{21}.
\end{align*}
Analogously, $m_{32}=\Gamma n_{y}^{2}+\Theta=q_{32}$ and $m_{43}=\Gamma n_{z}^{2}+\Theta=q_{43}$.
Furthermore, 
\begin{align*}
m_{22}=by_{x}y_{y} & =\left(\frac{\cos\norm{\quat y}}{\norm{\quat y}^{2}}-\frac{\sin\norm{\quat y}}{\norm{\quat y}^{3}}\right)\left(\frac{\phi}{2}\right)^{2}n_{x}n_{y}\\
 & =\left(\cos\left(\frac{\phi}{2}\right)-\frac{\sin\left(\phi/2\right)}{\left(\phi/2\right)}\right)n_{x}n_{y}\\
 & =\Gamma n_{x}n_{y}=q_{22}.
\end{align*}
Analogously, $m_{23}=by_{x}y_{z}=\Gamma n_{x}n_{z}=q_{23}$ and $m_{33}=by_{y}y_{z}=\Gamma n_{y}n_{z}=q_{33}$,
which concludes the proof.\lyxdeleted{Bruno Vilhena Adorno}{Sat Jun 15 00:05:08 2019}{ }
\end{proof}
\begin{cor}
Consider $\quat r=\left(r_{1}+r_{2}\imi+r_{3}\imj+r_{4}\imk\right)\in\mathbb{S}^{3}$
and $\quat y\in\mathbb{H}_{p}$ such that $\quat y\triangleq\log\quat r$,
then 
\[
\lim_{\phi\rightarrow0}\frac{\partial\vector_{4}\quat r}{\partial\vector_{3}\quat y}=\begin{bmatrix}\mymatrix 0_{1\times3}\\
\mymatrix I_{3}
\end{bmatrix}.
\]
\end{cor}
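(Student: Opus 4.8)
The plan is to read off the result directly from the alternative form established in Theorem~\ref{th:rQy}, taking the limit coefficient by coefficient. Write the matrix $\partial\vector_{4}\quat r/\partial\vector_{3}\quat y$ as in \eqref{eq:L}, i.e. with first row $\begin{bmatrix}-r_{2} & -r_{3} & -r_{4}\end{bmatrix}$ and lower $3\times3$ block having entries $\Gamma n_{i}n_{j}+\Theta\delta_{ij}$, where $\Gamma=r_{1}-\Theta$ and, for $\phi\neq0$, $\Theta=\sin(\phi/2)/(\phi/2)$. Since $\quat r=\cos(\phi/2)+\quat n\sin(\phi/2)$, we have $r_{1}=\cos(\phi/2)$ and $r_{k+1}=n_{k}\sin(\phi/2)$ for the three imaginary components, so as $\phi\rightarrow0$ we get $r_{1}\rightarrow1$ and $r_{2},r_{3},r_{4}\rightarrow0$; hence the first row tends to $\mymatrix 0_{1\times3}$.

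Next I would handle $\Theta$ and $\Gamma$. Using the standard limit $\lim_{\phi\rightarrow0}\sin(\phi/2)/(\phi/2)=1$, we obtain $\Theta\rightarrow1$, and therefore $\Gamma=r_{1}-\Theta\rightarrow1-1=0$. For the lower block, the off-diagonal and diagonal entries are $\Gamma n_{i}n_{j}$ and $\Gamma n_{i}^{2}+\Theta$ respectively; since $\quat n\in\mathbb{S}^{3}\cap\mathbb{H}_{p}$ forces $|n_{i}|\le1$, each product $n_{i}n_{j}$ stays bounded, so $\Gamma n_{i}n_{j}\rightarrow0$. Thus every off-diagonal entry vanishes in the limit and every diagonal entry tends to $\Theta\rightarrow1$, giving $\mymatrix I_{3}$ for the lower block. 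Assembling the two pieces yields the claimed limit $\begin{bmatrix}\mymatrix 0_{1\times3} & \mymatrix I_{3}\end{bmatrix}^{T}$.

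The only point requiring a word of care—and the closest thing to an ``obstacle''—is that the rotation axis $\quat n$ is not uniquely determined as $\phi\rightarrow0$ (the rigid motion degenerates to the identity rotation, whose axis is arbitrary). This causes no trouble here because $\quat n$ never appears except multiplied by $\Gamma$, which converges to $0$, and the factors $n_{i}n_{j}$ are uniformly bounded by $1$; hence the limit is independent of how $\quat n$ is chosen along the way. Equivalently, one may simply observe that the computation above shows the expression in \eqref{eq:L} is continuous at $\phi=0$ and its value there coincides with the $\phi=0$ case already recorded in Theorem~\ref{th:rQy}, namely $\begin{bmatrix}\mymatrix 0_{3\times1} & \mymatrix I_{3}\end{bmatrix}^{T}$, which completes the proof.
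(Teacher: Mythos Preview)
Your proof is correct and follows essentially the same route as the paper: you invoke the alternative form \eqref{eq:L} from Theorem~\ref{th:rQy}, use $r_{1}\to1$, $r_{2},r_{3},r_{4}\to0$, $\Theta\to1$ and hence $\Gamma\to0$, and read off the limit entrywise. Your added remark about the indeterminacy of $\quat n$ at $\phi=0$ and the boundedness of $n_{i}n_{j}$ is a nice clarification that the paper leaves implicit.
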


\begin{proof}
Since $\quat r=\cos\left(\phi/2\right)+\quat n\sin\left(\phi/2\right)$,
then $\lim_{\phi\rightarrow0}r_{1}=1$ and $\lim_{\phi\rightarrow0}r_{l}=0$
for $l=\left\{ 2,3,4\right\} $. Defining $\Gamma$ and $\Theta$
as in Theorem~\ref{th:rQy}, $\lim_{\phi\rightarrow0}\Theta=1$,
thus 
\[
\lim_{\phi\rightarrow0}\Gamma=\lim_{\phi\rightarrow0}r_{1}-\lim_{\phi\rightarrow0}\Theta=0.
\]
Thus, 
\begin{align*}
\lim_{\phi\rightarrow0}\frac{\partial\vector_{4}\quat r}{\partial\vector_{3}\quat y} & =\lim_{\phi\rightarrow0}\begin{bmatrix}-r_{2} & -r_{3} & -r_{4}\\
\Gamma n_{x}^{2}+\Theta & \Gamma n_{x}n_{y} & \Gamma n_{x}n_{z}\\
\Gamma n_{y}n_{x} & \Gamma n_{y}^{2}+\Theta & \Gamma n_{y}n_{z}\\
\Gamma n_{z}n_{x} & \Gamma n_{z}n_{y} & \Gamma n_{z}^{2}+\Theta
\end{bmatrix}\\
 & =\begin{bmatrix}\mymatrix 0_{1\times3}\\
\mymatrix I_{3}
\end{bmatrix}.
\end{align*}
\end{proof}
Next, we extend Theorem~\ref{th:rQy} to find the mapping between
the derivative of a unit dual quaternion and the derivative of its
logarithm.\lyxdeleted{Bruno Vilhena Adorno}{Sat Jun 15 00:05:08 2019}{ }
\begin{thm}
\label{th:xQy}Consider $\dq x\in\dq{\mathcal{S}}$ such that $\dq x=\quat r+\dual\left(1/2\right)\quat p\quat r$,
with $\quat r\in\mathbb{S}^{3}$ and $\quat p\in\mathbb{H}_{p}$.
Thus, 
\[
\vector_{8}\dot{\dq x}=\underset{\mymatrix Q_{8}\left(\dq x\right)}{\underbrace{\begin{bmatrix}\quat Q\left(\quat r\right) & \mymatrix 0_{4\times3}\\
\frac{1}{2}\hamiquat +{\quat p}\quat Q\left(\quat r\right) & \hamiquat -{\quat r}\quat Q_{p}
\end{bmatrix}}}\vector_{6}\dot{\dq y},
\]
where 
\begin{align*}
\mymatrix Q\left(\quat r\right) & =\frac{\partial\vector_{4}\quat r}{\partial\vector_{3}\quat y}, & \mymatrix Q_{p} & =\begin{bmatrix}\mymatrix 0_{1\times3}\\
\mymatrix I_{3}
\end{bmatrix}, & \dq y & =\log\dq x.
\end{align*}
Furthermore, $\mymatrix Q_{8}\left(\dq x\right)\in\mathbb{R}^{8\times6}$
has full column rank; therefore, 
\begin{align*}
\mymatrix Q_{8}\left(\dq x\right)^{+}\mymatrix Q_{8}\left(\dq x\right) & =\mymatrix I
\end{align*}
and $\vector_{8}\dot{\dq x}=\myvec 0$ if and only if $\vector_{6}\dot{\dq y}=\myvec 0$.\lyxdeleted{Bruno Vilhena Adorno}{Sat Jun 15 00:05:08 2019}{ }
\end{thm}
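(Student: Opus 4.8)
The plan is to regard $\dq x$ as a function of $\dq y$ through $\dq x=\exp\dq y$ (equation~\eqref{eq:expdg}), to compute the Jacobian $\partial\vector_8\dq x/\partial\vector_6\dq y$ block by block, and then to apply the chain rule. First I would split into primary and dual parts: writing $\dq x=\quat r+\dual\quat x'$ with $\quat x'\triangleq(1/2)\quat p\quat r$, and $\dq y=\quat y+\dual\quat y'$ with $\quat y\triangleq\log\quat r$ and $\quat y'\triangleq(1/2)\quat p$ (so that $\dq y=\log\dq x$ agrees with~\eqref{eq:logdq}), we have $\vector_8\dq x=\begin{bmatrix}(\vector_4\quat r)^T & (\vector_4\quat x')^T\end{bmatrix}^T$ and $\vector_6\dq y=\begin{bmatrix}(\vector_3\quat y)^T & (\vector_3\quat y')^T\end{bmatrix}^T$. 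By~\eqref{eq:expdg}, $\quat r=\exp\quat y$ depends only on $\quat y$, whereas $\quat x'=\quat y'\quat r$ depends on both $\quat y$ (through $\quat r$) and $\quat y'$; hence $\partial\vector_8\dq x/\partial\vector_6\dq y$ is a $2\times2$ block matrix whose entries I would identify individually.

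The $(1,1)$ block $\partial\vector_4\quat r/\partial\vector_3\quat y$ is exactly $\mymatrix Q(\quat r)$ by Theorem~\ref{th:rQy}, and the $(1,2)$ block is $\mymatrix 0_{4\times3}$ since $\quat r$ is independent of $\quat y'$. For the $(2,2)$ block I would use~\eqref{eq:hami} to write $\vector_4\quat x'=\vector_4(\quat y'\quat r)=\hamiquat -{\quat r}\vector_4\quat y'$, note $\vector_4\quat y'=\mymatrix Q_p\vector_3\quat y'$ because $\quat y'$ is pure, and differentiate with $\quat r$ held fixed to obtain $\hamiquat -{\quat r}\mymatrix Q_p$. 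For the $(2,1)$ block I would instead write $\vector_4\quat x'=\hamiquat +{\quat y'}\vector_4\quat r$, use that $\quat y'$ is independent of $\quat y$ and that $\hamiquat +{\cdot}$ is linear with $\quat y'=(1/2)\quat p$, and differentiate to get $(1/2)\hamiquat +{\quat p}\,\partial\vector_4\quat r/\partial\vector_3\quat y=(1/2)\hamiquat +{\quat p}\mymatrix Q(\quat r)$. Assembling the four blocks yields $\mymatrix Q_8(\dq x)$, and the chain rule $\vector_8\dot{\dq x}=(\partial\vector_8\dq x/\partial\vector_6\dq y)\,\vector_6\dot{\dq y}$ gives the identity.

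For the rank statement, I would let $\mymatrix Q_8(\dq x)\myvec v=\myvec 0$ with $\myvec v=\begin{bmatrix}\myvec v_1^T & \myvec v_2^T\end{bmatrix}^T$, $\myvec v_1,\myvec v_2\in\mathbb{R}^3$; the top four rows give $\mymatrix Q(\quat r)\myvec v_1=\myvec 0$, and substituting this into the bottom four rows leaves $\hamiquat -{\quat r}\mymatrix Q_p\myvec v_2=\myvec 0$, hence $\myvec v_2=\myvec 0$ because $\hamiquat -{\quat r}$ is invertible (its inverse is $\hamiquat -{\quat r^*}$, as $\quat r$ is unit) and $\mymatrix Q_p$ has rank $3$. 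It then remains to show $\mymatrix Q(\quat r)$ has full column rank: for $\phi=0$ it equals $\begin{bmatrix}\mymatrix 0_{1\times3}\\\mymatrix I_3\end{bmatrix}$ by Theorem~\ref{th:rQy}, which has rank $3$; for $\phi\in(0,2\pi)$, the form~\eqref{eq:L} shows the lower $3\times3$ block is $\Gamma\myvec n\myvec n^T+\Theta\mymatrix I_3$ with $\myvec n=\vector_3\quat n$ a unit vector and the top row is $-\sin(\phi/2)\myvec n^T$, so $\mymatrix Q(\quat r)\myvec v_1=\myvec 0$ forces first $\myvec n^T\myvec v_1=0$ (since $\sin(\phi/2)>0$ there) and then $\Theta\myvec v_1=\myvec 0$, giving $\myvec v_1=\myvec 0$ because $\Theta=\sin(\phi/2)/(\phi/2)>0$. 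Thus $\mymatrix Q_8(\dq x)$ has full column rank, so $\mymatrix Q_8(\dq x)^+\mymatrix Q_8(\dq x)=\mymatrix I$, and injectivity of $\mymatrix Q_8(\dq x)$ immediately yields $\vector_8\dot{\dq x}=\myvec 0\iff\vector_6\dot{\dq y}=\myvec 0$. The step I expect to demand the most care is the chain-rule bookkeeping---tracking which of the four sub-maps depends on which half of the coordinates---together with the full-rank check at $\phi=\pi$, where the lower $3\times3$ block of $\mymatrix Q(\quat r)$ degenerates ($r_1=\cos(\phi/2)=0$) yet the extra top row keeps the $4\times3$ matrix injective.
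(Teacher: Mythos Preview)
Your derivation of $\mymatrix Q_{8}(\dq x)$ is essentially the paper's: the paper time-differentiates $\dq x=\quat r+\dual(1/2)\quat p\quat r$ by the product rule, writes the result as
\[
\vector_{8}\dot{\dq x}=\underbrace{\begin{bmatrix}\mymatrix I_{4}&\mymatrix 0_{4\times4}\\ \tfrac{1}{2}\hamiquat +{\quat p}&\hamiquat -{\quat r}\end{bmatrix}}_{\mymatrix A}\underbrace{\begin{bmatrix}\mymatrix Q(\quat r)&\mymatrix 0_{4\times3}\\\mymatrix 0_{4\times3}&\mymatrix Q_{p}\end{bmatrix}}_{\mymatrix B}\vector_{6}\dot{\dq y},
\]
and multiplies the blocks out. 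Your block-by-block Jacobian computation is the same calculation organized slightly differently; the only cosmetic difference is that the paper isolates the factorization $\mymatrix Q_{8}=\mymatrix A\mymatrix B$, which it then reuses for the rank step.

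The rank argument is where you genuinely diverge. The paper uses the $\mymatrix A\mymatrix B$ factorization together with the Sylvester-type inequality $\mathrm{rank}\,\mymatrix A+\mathrm{rank}\,\mymatrix B-8\le\mathrm{rank}\,\mymatrix A\mymatrix B\le\min\{\mathrm{rank}\,\mymatrix A,\mathrm{rank}\,\mymatrix B\}$, and then invokes two appendix propositions: one showing $\mymatrix A$ is invertible (by exhibiting $\mymatrix A^{-1}$ via $\hamiquat -{\quat r^{*}}$), and one showing $\mymatrix Q(\quat r)$ has full column rank by computing $\det\big(\mymatrix Q(\quat r)^{T}\mymatrix Q(\quat r)\big)=\Theta^{4}$. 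Your null-space argument is more direct and avoids both the rank inequality and the determinant computation: from the block-triangular shape you reduce to $\mymatrix Q(\quat r)\myvec v_{1}=\myvec 0$ and $\hamiquat -{\quat r}\mymatrix Q_{p}\myvec v_{2}=\myvec 0$, and for $\mymatrix Q(\quat r)$ you read off from~\eqref{eq:L} that the top row forces $\myvec n^{T}\myvec v_{1}=0$ and then the lower block forces $\Theta\myvec v_{1}=\myvec 0$. This is shorter and self-contained; the paper's route has the advantage that the determinant identity $\det(\mymatrix Q^{T}\mymatrix Q)=\Theta^{4}$ quantifies exactly how the conditioning degrades as $\phi\to2\pi$, which your kernel argument does not reveal. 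Your observation about $\phi=\pi$ (where the lower $3\times3$ block has eigenvalue $r_{1}=0$ along $\myvec n$ but the top row supplies the missing constraint) is correct and is precisely the case the determinant computation handles implicitly.
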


\begin{proof}
Since $\dq x=\quat r+\dual\left(1/2\right)\quat p\quat r$ then 
\begin{align*}
\dot{\dq x} & =\dot{\quat r}+\dual\left(1/2\right)\left(\dot{\quat p}\quat r+\quat p\dot{\quat r}\right),
\end{align*}
hence 
\[
\vector_{8}\dot{\dq x}=\begin{bmatrix}\mymatrix I_{4} & \mymatrix 0_{4\times4}\\
\frac{1}{2}\hamiquat +{\quat p} & \hamiquat -{\quat r}
\end{bmatrix}\begin{bmatrix}\vector_{4}\dot{\quat r}\\
\frac{1}{2}\vector_{4}\dot{\quat p}
\end{bmatrix}.
\]
Using the fact that $\vector_{4}\dot{\quat r}=\myvec Q\left(\quat r\right)\vector_{3}\dot{\quat y}$
(see Theorem~\ref{th:rQy}) and $\log\dq x=\quat y+\dual\left(1/2\right)\quat p$,
with $\quat y=\log\quat r$, we obtain 
\begin{align*}
\vector_{8}\dot{\dq x} & =\begin{bmatrix}\mymatrix I_{4} & \mymatrix 0_{4\times4}\\
\frac{1}{2}\hamiquat +{\quat p} & \hamiquat -{\quat r}
\end{bmatrix}\begin{bmatrix}\myvec Q\left(\quat r\right)\vector_{3}\dot{\quat y}\\
\frac{1}{2}\myvec Q_{p}\vector_{3}\dot{\quat p}
\end{bmatrix}\\
 & =\underset{\mymatrix A}{\underbrace{\begin{bmatrix}\mymatrix I_{4} & \mymatrix 0_{4\times4}\\
\frac{1}{2}\hamiquat +{\quat p} & \hamiquat -{\quat r}
\end{bmatrix}}}\underset{\mymatrix B}{\underbrace{\begin{bmatrix}\mymatrix Q\left(\quat r\right) & \mymatrix 0_{4\times3}\\
\mymatrix 0_{4\times3} & \myvec Q_{P}
\end{bmatrix}}}\begin{bmatrix}\vector_{3}\dot{\quat y}\\
\frac{1}{2}\vector_{3}\dot{\quat p}
\end{bmatrix}\\
 & =\begin{bmatrix}\mymatrix Q\left(\quat r\right) & \mymatrix 0_{4\times3}\\
\frac{1}{2}\hamiquat +{\quat p}\mymatrix Q\left(\quat r\right) & \hamiquat -{\quat r}\myvec Q_{p}
\end{bmatrix}\vector_{6}\dot{\dq y}.
\end{align*}

In order to show that $\vector_{8}\dot{\dq x}=\myvec 0$ if and only
if $\vector_{6}\dot{\dq y}=\myvec 0$, it suffices to show that $\mymatrix Q_{8}\triangleq\mymatrix Q_{8}\left(\dq x\right)$
is full column rank (which implies that $\det\left(\mymatrix Q_{8}{}^{T}\mymatrix Q_{8}\right)\neq0$),
because in this case the left pseudoinverse exists and is defined
by $\mymatrix Q_{8}{}^{+}\triangleq\left(\mymatrix Q_{8}{}^{T}\mymatrix Q_{8}\right)^{-1}\mymatrix Q_{8}{}^{T}$.
Hence, the solution $\vector_{6}\dot{\dq y}=\mymatrix Q_{8}{}^{+}\vector_{8}\dot{\dq x}$
is unique (see Proposition~\ref{prop:existence_of_left_pseudo_inverse_and_unicity}
in \ref{sec:AppA}) and thus $\vector_{8}\dot{\dq x}=\myvec 0$ if
and only if $\vector_{6}\dot{\dq y}=\myvec 0$.

Since $\mymatrix A\in\mathbb{R}^{8\times8}$ and $\mymatrix B\in\mathbb{R}^{8\times6}$
we have from Corollary~2.5.10 of \citep{Bernstein2009} that 
\begin{equation}
\mathrm{rank}\mymatrix A+\mathrm{rank}\mymatrix B-8\leq\mathrm{rank}\mymatrix A\mymatrix B\leq\min\left\{ \mathrm{rank}\mymatrix A,\mathrm{rank}\mymatrix B\right\} .\label{eq:inequality rank}
\end{equation}
From Proposition~\ref{thm:Q is full column rank}, $\mymatrix Q\left(\quat r\right)$
is full column rank. Furthermore, as $\mymatrix Q_{P}$ is also full
column rank, $\mathrm{rank}\,\mymatrix B=6.$ Matrix $\mymatrix A$
is invertible (see Proposition~\ref{thm:A is invertible}), thus
$\mathrm{rank}\,\mymatrix A=8$, hence 
\[
8+6-8\leq\mathrm{rank\,}\mymatrix Q_{8}\left(\dq x\right)\leq\min\left\{ 8,6\right\} \implies\mathrm{rank\,}\mymatrix Q_{8}\left(\dq x\right)=6.
\]
As $\mymatrix Q_{8}\left(\dq x\right)$ is full column rank, the left
pseudoinverse $\mymatrix Q_{8}\left(\dq x\right)^{+}$ exists and,
from Proposition~\ref{prop:existence_of_left_pseudo_inverse_and_unicity},
we conclude that $\vector_{8}\dot{\dq x}=\myvec 0\iff\vector_{6}\dot{\dq y}=\myvec 0$.
\end{proof}

\section{Consensus Protocols\label{sec:consensus}}

In this section we design consensus protocols based on dual quaternions.
Since the group $\spinr$ of unit dual quaternions belongs to a non-Euclidean,
non-additive manifold, we cannot directly use the traditional consensus
protocols, which are mostly based on averaging the variables of interest.
This is due to the fact that directly averaging unit dual quaternions
does not produce meaningful values, as it generally does not yield
a unit dual quaternion.

A workaround to this problem is to choose an output for the system
that is not required to be a unit dual quaternion and thus can be
averaged without losing its group properties. To do that, we first
define the problem of output consensus on pure dual quaternions (i.e.,
elements of $\mathcal{H}_{p}$) and design a corresponding consensus
protocol. The advantage of such approach is that $\mathcal{H}_{p}$
is a six-dimensional Euclidean manifold, and thus the output consensus
protocol on $\mathcal{H}_{p}$ can be based only on linear operations.
Next, we extend the definition to take into account the problem of
pose consensus, where consensus must be achieved on elements of $\dq{\mathcal{S}}$,
and then we design a corresponding consensus protocol using the differential
logarithmic mapping presented in Section~\ref{differential_logarithmic_mapping}.

\subsection{Dual Quaternion Consensus}

Consider a multi-agent system with $n$ agents, in which each agent
has an output state given by the dual quaternion $\dq y_{i}\in\mathcal{H}_{p}$,
for $i=1,\ldots,n$. The topology of the information exchange in the
network is described by a directed graph, where the nodes represent
the agents and the edges the information flow, which can be unidirectional
or bidirectional, as described in Section~\ref{subsec:Algebraic-Graph-Theory}.
The output consensus problem is to make the multi-agent system reach
an agreement on the output variable of interest considering only the
information provided by neighbor agents. For that, we have the following
definition.\lyxdeleted{Bruno Vilhena Adorno}{Sat Jun 15 00:05:22 2019}{ }
\begin{defn}
\label{def:consensus} The multi-agent system with output variables
$\dq y_{i}(t)\in\mathcal{H}_{p},\,\forall i$, is said to asymptotically
achieve output consensus on the dual quaternion variable of interest
if and only if 
\begin{equation}
\lim_{t\rightarrow\infty}\left(\dq y_{i}(t)-\dq y_{j}(t)\right)=0,\,\forall i,j=1,\ldots,n.
\end{equation}
\end{defn}

Given the definition of output consensus, the following theorem shows
a consensus protocol that enables the multi-agent system to achieve
output consensus.\lyxdeleted{Bruno Vilhena Adorno}{Sat Jun 15 00:05:22 2019}{ }
\begin{thm}
\label{thm:consensus} The multi-agent system composed of $n$ agents
with system dynamics given by 
\begin{equation}
\dq u_{i}\triangleq\dot{\dq y}_{i},\label{eq:dynamic}
\end{equation}
for all $i=1,\ldots,n$, using the consensus protocol given by 
\begin{equation}
\dq u_{i}=-\sum_{j=1}^{n}a_{ij}\left(\dq y_{i}-\dq y_{j}\right),\label{eq:protocol}
\end{equation}
where $a_{ij}$ are the elements of the adjacency matrix \eqref{ruleaij}
of a directed graph $\mathcal{G}$ describing the network topology,
achieves output consensus according to Definition~\ref{def:consensus}
if and only if the network topology described by $\mathcal{G}$ has
a directed spanning tree.\lyxdeleted{Bruno Vilhena Adorno}{Sat Jun 15 00:05:22 2019}{ }
\end{thm}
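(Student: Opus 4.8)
The plan is to reduce the dual-quaternion consensus problem to the classical scalar consensus result on directed graphs, component by component. Writing $\dq y_i = \quat q_i + \dual \quat q_i'$ with $\quat q_i, \quat q_i' \in \mathbb{H}_p$, and then expanding each pure quaternion in its three imaginary coordinates, the protocol \eqref{eq:protocol} decouples into six identical copies of the standard linear consensus dynamics $\dot{z}_i = -\sum_{j=1}^n a_{ij}(z_i - z_j)$, $i=1,\dots,n$, one for each real coordinate of $\dq y_i \in \mathcal{H}_p \cong \mathbb{R}^6$. This is because the adjacency weights $a_{ij}$ are real scalars and addition in $\mathcal{H}_p$ is componentwise. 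Stacking the $i$-th coordinate of all agents into a vector $\myvec z \in \mathbb{R}^n$, the dynamics become $\dot{\myvec z} = -\mymatrix L \myvec z$, where $\mymatrix L = \mymatrix{\Delta} - \mymatrix A$ is the graph Laplacian defined in Section~\ref{subsec:Algebraic-Graph-Theory}.

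First I would state precisely the reduction above and note that Definition~\ref{def:consensus} holds for the multi-agent system if and only if each of the six scalar subsystems $\dot{\myvec z} = -\mymatrix L \myvec z$ reaches consensus, i.e. $\lim_{t\to\infty}(z_i(t) - z_j(t)) = 0$ for all $i,j$. Then I would invoke the well-known result from algebraic consensus theory (e.g. \citep{jadbabaie2003}, or the references surveyed in \citep{ren2008distributed,Cao2013}) that the linear system $\dot{\myvec z} = -\mymatrix L \myvec z$ on a directed graph $\mathcal{G}$ asymptotically reaches consensus if and only if $\mathcal{G}$ contains a directed spanning tree. The key spectral facts underpinning this are: $\mymatrix L \myvec 1_n = \myvec 0_n$ by \eqref{eq:laplacian}, so $0$ is always an eigenvalue; all nonzero eigenvalues of $\mymatrix L$ have strictly positive real part (Geršgorin); and $0$ is a \emph{simple} eigenvalue precisely when $\mathcal{G}$ has a directed spanning tree, in which case $e^{-\mymatrix L t} \to \myvec 1_n \myvec w^T$ for the suitably normalized left eigenvector $\myvec w$ associated with the zero eigenvalue, driving every coordinate difference to zero.

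For the converse direction, if $\mathcal{G}$ has no directed spanning tree, then $0$ is an eigenvalue of $\mymatrix L$ of (algebraic) multiplicity at least two with at least two linearly independent right eigenvectors that are not multiples of $\myvec 1_n$; choosing an initial condition with a nonzero component along such an eigenvector yields a trajectory whose coordinate differences do not vanish, so consensus fails in $\mathcal{H}_p$ as well. I would assemble the six coordinates back into the statement about $\dq y_i - \dq y_j$: since $\lim_{t\to\infty}(\dq y_i(t) - \dq y_j(t)) = 0$ in $\mathcal{H}_p$ is equivalent to the vanishing of all six real coordinate differences, the equivalence lifts verbatim.

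The main obstacle is not conceptual but one of rigor and attribution: the ``if and only if'' for the scalar Laplacian flow on a \emph{directed} graph is standard but should be cited carefully (the nontrivial part being that a directed spanning tree is necessary and sufficient for simplicity of the zero eigenvalue), and one must be careful that the reduction genuinely decouples — which it does only because the protocol uses real scalar gains $a_{ij}$ and the linear operations that define $\mathcal{H}_p$-addition commute with extracting coordinates. No dual-quaternion subtlety (non-commutativity of multiplication, the manifold structure of $\dq{\mathcal{S}}$) intervenes here, precisely because the output lives in the Euclidean space $\mathcal{H}_p$ rather than in $\dq{\mathcal{S}}$; that is the whole point of the workaround described at the start of Section~\ref{sec:consensus}.
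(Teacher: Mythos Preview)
Your proposal is correct and complete: decoupling $\dq y_i \in \mathcal{H}_p \cong \mathbb{R}^6$ into six scalar coordinates, each governed by $\dot{\myvec z} = -\mymatrix L \myvec z$, and invoking the standard Laplacian consensus result (consensus $\iff$ directed spanning tree) is a valid and clean argument.

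The paper takes a different route. Rather than decomposing coordinatewise, it keeps the agents' outputs as elements of $\mathcal{H}_p$ and applies a \emph{tree-type transformation}: defining $n-1$ disagreement variables $\dq z_i = \dq y_1 - \dq y_{i+1}$, stacking them as $\myvec z = \mymatrix U \myvec y$ with inverse relation $\myvec y = \myvec 1_n \dq y_1 + \mymatrix W \myvec z$, and then using $\mymatrix L \myvec 1_n = \myvec 0_n$ to reduce the closed-loop $\dot{\myvec y} = -\mymatrix L \myvec y$ to the error dynamics $\dot{\myvec z} = -\mymatrix U \mymatrix L \mymatrix W \myvec z$. Consensus is then equivalent to asymptotic stability of this $(n-1)$-dimensional system, and the paper cites \citep{Savino2015} for the fact that $\mymatrix U \mymatrix L \mymatrix W$ has all eigenvalues in the open right half-plane if and only if $\mathcal{G}$ has a directed spanning tree. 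Your approach is more elementary and relies on the most widely quoted form of the consensus theorem; the paper's transformation has the advantage of recasting consensus as a genuine stability problem on a reduced state space, which aligns with the authors' stated goal (Section~1.1) of connecting to prior results on time-delays and switching topologies that are naturally phrased in terms of stability of $\mymatrix U \mymatrix L \mymatrix W$.
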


\begin{proof}
The consensus problem in the dual quaternion variables $\dq y_{i}=\dq y_{j}$,
$\forall i,j$ can be transformed into a stability problem with an
extension of the tree-type transformation shown in \citep{Sun2009}.
Thus, for a multi-agent system with $n$ agents, we define $n-1$
error variables given by 
\begin{flalign}
\dq z_{i} & =\dq y_{1}-\dq y_{(i+1)},\qquad i=1,\ldots,n-1.\label{eq:arvore}
\end{flalign}
The remainder of the proof is given by the proof of stability of these
error variables by stacking $\dq z_{i}$ into a vector $\myvec z\in\mathcal{H}_{p}^{n-1}$,
where $\myvec z=[\dq z_{1}~\dq z_{2}~\ldots~\dq z_{(n-1)}]^{T}$,
since output consensus is asymptotically achieved if and only if $\myvec z$
goes to zero \citep{Sun2009}. Therefore, 
\begin{gather}
\myvec z=\underset{\mymatrix U}{\underbrace{\begin{bmatrix}1 & -1 & 0 & \cdots & 0\\
1 & 0 & -1 &  & 0\\
\vdots &  &  & \ddots & \vdots\\
1 & 0 & 0 & \cdots & -1
\end{bmatrix}}}\underset{\myvec y}{\underbrace{\left[\begin{array}{c}
\dq y_{1}\\
\dq y_{2}\\
\vdots\\
\dq y_{n}
\end{array}\right]}},\label{eq:zUx}
\end{gather}
where $\mymatrix U\in\mathbb{Z}^{\left(n-1\right)\times n}$ and $\myvec y\in\mathcal{H}_{p}^{n}$.
Considering \eqref{eq:zUx}, the inverse transformation is given by
\begin{gather}
\myvec y=\underset{\mymatrix 1_{n}}{\underbrace{\left[\begin{array}{c}
1\\
1\\
\vdots\\
1
\end{array}\right]}}\dq y_{1}+\underset{\mymatrix W}{\underbrace{\begin{bmatrix}0 & 0 & \cdots & 0\\
-1 & 0 & \cdots & 0\\
0 & -1 & \cdots & 0\\
\vdots & \vdots & \ddots & \vdots\\
0 & 0 & \cdots & -1
\end{bmatrix}}}\myvec z,\label{eq:xWz}
\end{gather}
thus $\myvec y=\myvec 1_{n}\dq y_{1}+\mymatrix W\myvec z$, where
$\mymatrix W\in\mathbb{Z}^{n\times\left(n-1\right)}$.

The closed-loop dynamics considering \eqref{eq:protocol} and \eqref{eq:dynamic}
gives 
\begin{align}
\dot{\dq y}_{i} & =-\sum_{j=1}^{n}a_{ij}\left(\dq y_{i}-\dq y_{j}\right)\label{eq:closedloop}\\
 & =-\Delta_{ii}\dq y_{i}+\sum_{j=1}^{n}a_{ij}\dq y_{j}\nonumber \\
 & =-\Delta_{ii}\dq y_{i}+\myvec a_{i}\myvec y,\nonumber 
\end{align}
where $\Delta_{ii}=\sum_{j=1}^{n}a_{ij}$ and $\myvec a_{i}\in\mathbb{R}^{1\times n}$
corresponds to the $i$-th row of the adjacency matrix (i.e., $\mymatrix A=\begin{bmatrix}\myvec a_{1}^{T} & \cdots & \mymatrix a_{n}^{T}\end{bmatrix}^{T}$).
Considering the whole multi-agent system, we obtain 
\begin{align}
\dot{\myvec y}=\begin{bmatrix}\dot{\dq y}_{1}\\
\vdots\\
\dot{\dq y}_{n}
\end{bmatrix} & =\begin{bmatrix}-\Delta_{11}\dq y_{1}+\myvec a_{1}\myvec y\\
\vdots\\
-\Delta_{nn}\dq y_{n}+\myvec a_{n}\myvec y
\end{bmatrix}\nonumber \\
 & =-\mymatrix{\Delta}\myvec y+\mymatrix A\myvec y\nonumber \\
 & =-\mymatrix L\myvec y,\label{eq:whole-dynamic}
\end{align}
where $\mymatrix{\Delta}$ and $\mymatrix L$ are the degree matrix
and Laplacian matrix, respectively (see Section~\ref{subsec:Algebraic-Graph-Theory}).

Taking the time-derivative of \eqref{eq:zUx}, and then considering
\eqref{eq:xWz} and \eqref{eq:whole-dynamic}, we have 
\begin{align*}
\dot{\myvec z} & =-\mymatrix U\mymatrix L\myvec y=-\mymatrix U\mymatrix L(\myvec 1_{n}\dq y_{1}+\mymatrix W\myvec z).
\end{align*}
Since $\mymatrix L\myvec 1_{n}=\myvec 0_{n}$ from \eqref{eq:laplacian},
it follows that 
\begin{equation}
\dot{\myvec z}=-\mymatrix U\mymatrix L\mymatrix W\myvec z.\label{eq:proofconsensus}
\end{equation}
The equilibrium point $\myvec z=\myvec 0_{n-1}$ in \eqref{eq:proofconsensus}
is asymptotically stable if and only if all the eigenvalues of $\mymatrix U\mymatrix L\mymatrix W$
have positive real parts. As shown in \citep{Savino2015}, this happens
if and only if $\mathcal{G}$ has a directed spanning tree. This concludes
the proof.\lyxdeleted{Bruno Vilhena Adorno}{Sat Jun 15 00:05:22 2019}{ }
\end{proof}
Therefore, Theorem~\ref{thm:consensus} tells us that a dynamical
system that can be written in the form of \eqref{eq:closedloop} achieves
consensus depending only on the network topology.

\subsection{Pose Consensus}

Since the dynamical system written in the form of \eqref{eq:closedloop}
relies on linear operations, which can be regarded as the most traditional
consensus algorithm, the result in Theorem~\ref{thm:consensus} can
only correctly perform averaging in Euclidean spaces \citep{jorstad2010distributed}.
For the case of rigid bodies, consensus protocols based on averaging
cannot be directly applied to elements of $\dq{\mathcal{S}}$ (that
is, to unit dual quaternions) because the group of rigid motions $\text{Spin(3)}\ltimes\mathbb{R}^{3}$
is a non-Euclidean manifold. Therefore, directly averaging unit dual
quaternions does not produce meaningful values, as it generally does
not yield a unit dual quaternion.

A workaround to this problem is to choose an output for the system
that is not required to be a unit dual quaternion and thus can be
averaged without losing its group properties, i.e. the logarithm $\dq y_{i}=\log\dq x_{i}$.
We now extend Definition~\ref{def:consensus} to the problem of pose
consensus in the set $\dq{\mathcal{S}}$ of unit dual quaternions.\lyxdeleted{Bruno Vilhena Adorno}{Sat Jun 15 00:05:22 2019}{ }
\begin{lem}
\label{lem:log} The multi-agent system with output variables $\dq y_{i}=\log\dq x_{i},\,\forall i$,
asymptotically achieves pose consensus in $\dq x_{i}\in\dq{\mathcal{S}}$
if consensus on $\dq y_{i}\in\mathcal{H}_{p}$ is asymptotically achieved.\lyxdeleted{Bruno Vilhena Adorno}{Sat Jun 15 00:05:22 2019}{ }
\end{lem}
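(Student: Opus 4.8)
The plan is to show that if the logarithms $\dq y_i = \log\dq x_i$ reach consensus in $\mathcal{H}_p$, then the original unit dual quaternions $\dq x_i$ reach consensus in $\dq{\mathcal{S}}$, using the fact that the exponential map is the inverse of the logarithm and is continuous. First I would recall from Definition~\ref{def:consensus} that consensus on $\dq y_i$ means $\lim_{t\to\infty}(\dq y_i(t) - \dq y_j(t)) = 0$ for all $i,j$. Since each $\dq y_i \in \mathcal{H}_p$ and $\mathcal{H}_p$ is isomorphic to $\mathbb{R}^6$, this is ordinary convergence in a Euclidean space, so the difference $\dq y_i - \dq y_j$ going to zero is the usual notion of two trajectories becoming asymptotically equal.

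Next I would invoke the relation $\dq x_i = \exp(\log\dq x_i) = \exp(\dq y_i)$, established just before the statement of Lemma~\ref{thm:quaternion_logarithm} together with \eqref{eq:expdg}--\eqref{eq:expg}, which holds provided the rotation angles stay in $[0,2\pi)$. The key step is then to observe that $\exp : \mathcal{H}_p \to \spinr$ is continuous (it is built from $\cos$, $\sin$, division by norms away from the origin, and quaternion multiplication, all continuous operations, and it extends continuously to $0$ by the case analysis in \eqref{eq:expg}). Hence, using the continuity of $\exp$ and a suitable notion of "pose consensus" in $\dq{\mathcal{S}}$ — that the poses $\dq x_i$ and $\dq x_j$ become asymptotically equal, i.e. their relative pose $\dq x_i \dq x_j^{*} \to 1$, or equivalently $\dq x_i - \dq x_j \to 0$ in the ambient $\mathbb{R}^8$ — one concludes $\lim_{t\to\infty}(\dq x_i(t) - \dq x_j(t)) = \lim_{t\to\infty}(\exp(\dq y_i(t)) - \exp(\dq y_j(t))) = 0$ because $\dq y_i(t) - \dq y_j(t) \to 0$ and $\exp$ is uniformly continuous on the relevant bounded region.

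The main obstacle is that continuity of $\exp$ by itself only gives $\exp(\dq y_i) - \exp(\dq y_j) \to 0$ if we already know the arguments $\dq y_i, \dq y_j$ stay in a compact set (or that $\exp$ is uniformly continuous, which it is on all of $\mathcal{H}_p$ since it is bounded with bounded derivative). So I would either argue that the $\dq y_i$ remain bounded (which follows from the consensus dynamics of Theorem~\ref{thm:consensus}, whose closed-loop system $\dot{\myvec y} = -\mymatrix L\myvec y$ keeps $\myvec y$ bounded), or simply note that $\exp$ restricted to $\mathcal{H}_p$ is globally uniformly continuous, so $\dq y_i - \dq y_j \to 0$ directly implies $\exp(\dq y_i) - \exp(\dq y_j) \to 0$. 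A secondary subtlety is the angle restriction $\phi_i \in [0,2\pi)$ needed for $\exp$ to be injective; but injectivity is not actually required here — only that $\dq x_i = \exp(\dq y_i)$ holds and $\exp$ is continuous — so I would make the statement under the standing assumption (already in force in the paper) that the logarithms are well-defined, and remark that equality of the limiting logarithms forces equality of the limiting poses by applying $\exp$. This completes the argument.
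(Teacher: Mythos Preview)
Your proposal is correct and follows essentially the same route as the paper: use $\dq x_i=\exp(\dq y_i)$ and conclude that agreement of the $\dq y_i$ forces agreement of the $\dq x_i$ via the exponential map. The paper's own proof is actually terser and less careful than yours---it simply writes $\lim_{t\to\infty}\dq y_i(t)=\lim_{t\to\infty}\dq y_j(t)$ (tacitly assuming these limits exist, which is true for the closed-loop system $\dot{\myvec y}=-\mymatrix L\myvec y$ but is not literally what Definition~\ref{def:consensus} states) and then applies $\exp$ to both sides; your attention to boundedness and uniform continuity fills in exactly the gap that the paper glosses over.
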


\begin{proof}
Since $\dq x_{i}=\exp\left(\log\dq x_{i}\right)$, where $\exp\,:\,\mathcal{H}_{p}\rightarrow\dq{\mathcal{S}}$
\citep{Adorno2011}, then Definition~\ref{def:consensus} says that
\begin{equation}
\lim_{t\rightarrow\infty}\dq y_{i}(t)=\lim_{t\rightarrow\infty}\dq y_{j}(t),\,\forall i,j=1,\ldots,n,
\end{equation}
which implies 
\begin{align*}
\lim_{t\rightarrow\infty}\exp\dq y_{i}(t) & =\lim_{t\rightarrow\infty}\exp\dq y_{j}(t),\\
\implies\lim_{t\rightarrow\infty}\dq x_{i} & =\lim_{t\rightarrow\infty}\dq x_{j},\,\forall i,j=1,\ldots,n.
\end{align*}
\end{proof}
Lemma~\ref{lem:log} tells us that driving the agents to consensus
on the output variable $\dq y_{i}(t)$ implies consensus on the pose.
However, in general the kinematics is not given in the form of $\dot{\dq y}_{i}(t)=\myvec u_{i}(t)$.
Therefore, to show consensus on the pose we first write the problem
in a closed-loop that is known to achieve consensus, as in \eqref{eq:closedloop},
and then use the relation between $\dot{\dq y}_{i}$ and $\dot{\dq x}_{i}$
given in Theorem~\ref{th:xQy} to find the corresponding consensus
protocol according to the agent's kinematics to enable consensus on
the pose according to Lemma~\ref{lem:log}.

The next theorem summarizes the application of dual quaternion pose
consensus to multi-agent rigid-bodies.
\begin{thm}
\label{th:poseconsensus} Consider a group of $n$ agents described
as rigid-bodies with pose given by $\dq x_{i}$ as in \eqref{eq:rigidmotion}.
Let the system dynamics for each agent be given as 
\begin{equation}
\vector_{8}\dq u_{\dq x,i}\triangleq\vector_{8}\dot{\dq x}_{i},~i=1,\ldots,n,\label{eq:dqkinematics}
\end{equation}
with output 
\begin{equation}
\dq y_{i}=\log\dq x_{i},~i=1,\ldots,n.\label{eq:output}
\end{equation}
Under consensus protocol 
\begin{equation}
\vector_{8}\dq u_{\dq x,i}=-\mymatrix Q_{8}(\dq x_{i})\sum_{j=1}^{n}a_{ij}\vector_{6}\left(\dq y_{i}-\dq y_{j}\right),\label{eq:protocolxu}
\end{equation}
where $\mymatrix Q_{8}(\dq x_{i})\in\mathbb{R}^{8\times6}$ is given
in Theorem~\ref{th:xQy}, the multi-agent system asymptotically achieves
consensus in the dual quaternion output $\dq y_{i}\in\mathcal{H}_{p}$,
which implies consensus in the pose according to Lemma \ref{lem:log},
if and only if the network topology described by $\mathcal{G}$ has
a directed spanning tree.\lyxdeleted{Bruno Vilhena Adorno}{Sat Jun 15 00:05:22 2019}{ }
\end{thm}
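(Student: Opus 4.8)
The plan is to reduce this to Theorem~\ref{thm:consensus} by showing that the closed-loop dynamics in the output variable $\dq y_i$ is exactly the linear Laplacian dynamics \eqref{eq:closedloop}. First I would write out the chain rule relating the two derivatives: by Theorem~\ref{th:xQy} we have $\vector_8\dot{\dq x}_i=\mymatrix Q_8(\dq x_i)\vector_6\dot{\dq y}_i$, and substituting the system dynamics \eqref{eq:dqkinematics} together with the proposed protocol \eqref{eq:protocolxu} gives
\begin{equation*}
\mymatrix Q_8(\dq x_i)\vector_6\dot{\dq y}_i=-\mymatrix Q_8(\dq x_i)\sum_{j=1}^{n}a_{ij}\vector_6\left(\dq y_i-\dq y_j\right).
\end{equation*}

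The key step is to left-multiply both sides by the pseudoinverse $\mymatrix Q_8(\dq x_i)^{+}$ and invoke the full-column-rank property from Theorem~\ref{th:xQy}, namely $\mymatrix Q_8(\dq x_i)^{+}\mymatrix Q_8(\dq x_i)=\mymatrix I$. This cancels the $\mymatrix Q_8$ factor on both sides and yields
\begin{equation*}
\vector_6\dot{\dq y}_i=-\sum_{j=1}^{n}a_{ij}\vector_6\left(\dq y_i-\dq y_j\right),
\end{equation*}
which, translating back out of the $\vector_6$ coordinates, is precisely the closed-loop form \eqref{eq:closedloop} with $\dq u_i=\dot{\dq y}_i$ and protocol \eqref{eq:protocol}. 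Then I would apply Theorem~\ref{thm:consensus} verbatim: the multi-agent system in the $\dq y_i\in\mathcal{H}_p$ variables achieves output consensus according to Definition~\ref{def:consensus} if and only if $\mathcal{G}$ has a directed spanning tree. Finally, Lemma~\ref{lem:log} upgrades output consensus on $\dq y_i=\log\dq x_i$ to pose consensus on $\dq x_i\in\dq{\mathcal{S}}$, completing the ``if'' direction; the ``only if'' direction follows because the $\mymatrix Q_8$ cancellation is an equivalence (the map $\vector_6\dot{\dq y}_i\mapsto\vector_8\dot{\dq x}_i$ is injective), so a $\mathcal{G}$ without a directed spanning tree would, by Theorem~\ref{thm:consensus}, fail to drive $\myvec z\to\myvec 0$ and hence fail output consensus on $\dq y_i$.

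I expect the main subtlety — rather than a true obstacle — to be the bookkeeping around the pseudoinverse cancellation: one must be careful that $\mymatrix Q_8(\dq x_i)$ depends on the state $\dq x_i(t)$ and varies along the trajectory, so the identity $\mymatrix Q_8^{+}\mymatrix Q_8=\mymatrix I$ must be invoked pointwise in time, and one should note that the cancellation is legitimate precisely because it holds for every admissible $\dq x_i$ (full column rank is established for all $\dq x\in\dq{\mathcal{S}}$ in Theorem~\ref{th:xQy}, via Propositions on the invertibility of $\mymatrix A$ and full-column-rank of $\mymatrix Q(\quat r)$). A second minor point worth making explicit is that the output dynamics $\dot{\dq y}_i$ obtained this way is well-defined — it does not depend on which left inverse of $\mymatrix Q_8$ one picks — because $\vector_8\dot{\dq x}_i$ always lies in the column space of $\mymatrix Q_8(\dq x_i)$ by construction of the protocol. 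Once these points are noted, the result is an immediate corollary of Theorem~\ref{th:xQy}, Theorem~\ref{thm:consensus}, and Lemma~\ref{lem:log}.
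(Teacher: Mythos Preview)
Your proposal is correct and follows essentially the same route as the paper: write $\vector_8\dot{\dq x}_i=\mymatrix Q_8(\dq x_i)\vector_6\dot{\dq y}_i$ from Theorem~\ref{th:xQy}, substitute the protocol, cancel $\mymatrix Q_8$ via the left-pseudoinverse identity $\mymatrix Q_8^{+}\mymatrix Q_8=\mymatrix I$, and then invoke Theorem~\ref{thm:consensus} and Lemma~\ref{lem:log}. Your added remarks on the pointwise validity of the cancellation and on the ``only if'' direction are more explicit than the paper's presentation but do not constitute a different argument.
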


\begin{proof}
From Theorem~\ref{thm:consensus}, a multi-agent system described
in the form of \eqref{eq:closedloop} is able to achieve output consensus
on $\dq y_{i}$ if and only if the graph $\mathcal{G}$ has a directed
spanning tree. Applying the $\vector_{6}$ operator in \eqref{eq:closedloop},
we obtain the equivalent equation 
\begin{equation}
\vector_{6}\dot{\dq y}_{i}=-\sum_{j=1}^{n}a_{ij}\vector_{6}\left(\dq y_{i}-\dq y_{j}\right).\label{dyy}
\end{equation}
From Theorem~\ref{th:xQy}, the relationship between $\dot{\dq x}_{i}$
and $\dot{\dq y}_{i}$ is given by 
\begin{equation}
\vector_{8}\dot{\dq x}_{i}=\vector_{8}\dq u_{\dq x,i}=\mymatrix Q_{8}(\dq x_{i})\vector_{6}\dot{\dq y}_{i}.\label{xuqy}
\end{equation}
Choosing $\vector_{8}\dq u_{\dq x,i}$ as \eqref{eq:protocolxu} yields
\begin{align}
-\mymatrix Q_{8}(\dq x_{i})\sum_{j=1}^{n}a_{ij}\vector_{6}\left(\dq y_{i}-\dq y_{j}\right) & =\mymatrix Q_{8}(\dq x_{i})\vector_{6}\dot{\dq y}_{i}.\label{eq:protocolxu_closedloop}
\end{align}
By Theorem~\ref{th:xQy}, $\mymatrix Q_{8}\left(\dq x_{i}\right)^{+}\mymatrix Q_{8}\left(\dq x_{i}\right)=\mymatrix I$,
therefore \eqref{eq:protocolxu_closedloop} implies \eqref{dyy},
which in turn implies output consensus according to Theorem~\ref{thm:consensus},
thus allowing the system to achieve consensus on the pose according
to Lemma~\ref{lem:log}.\lyxdeleted{Bruno Vilhena Adorno}{Sat Jun 15 00:05:22 2019}{ }
\end{proof}
\begin{cor}
Consider the dynamics of each agent expressed by 
\begin{equation}
\dot{\dq x}_{i}=\frac{1}{2}\dq{\xi}_{i}\dq x_{i},~i=1,\ldots,n,\label{eq:dqkinxi}
\end{equation}
where $\dq x_{i}$ is given in \eqref{eq:rigidmotion} and $\dq{\xi}_{i}$
is the corresponding twist given by \eqref{eq:twist2}. If the input
control actions are given as 
\begin{equation}
\vector_{8}\dq u_{\dq{\xi},i}\triangleq\vector_{8}\dq{\xi}_{i},~i=1,\ldots,n,
\end{equation}
consensus on the pose can be achieved by using protocol 
\begin{equation}
\vector_{8}\dq u_{\dq{\xi},i}=-2\hami -_{8}(\dq x_{i}^{*})\mymatrix Q_{8}(\dq x_{i})\sum_{j=1}^{n}a_{ij}\vector_{6}\left(\dq y_{i}-\dq y_{j}\right)\label{eq:protocolxiu}
\end{equation}
\end{cor}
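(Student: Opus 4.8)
The plan is to reduce this corollary to Theorem~\ref{th:poseconsensus}: I will show that the twist protocol \eqref{eq:protocolxiu} drives the $\vector_{8}$-dynamics of each $\dq x_{i}$ into exactly the closed loop $\vector_{8}\dot{\dq x}_{i}=-\mymatrix Q_{8}(\dq x_{i})\sum_{j=1}^{n}a_{ij}\vector_{6}(\dq y_{i}-\dq y_{j})$ which Theorem~\ref{th:poseconsensus} has already shown to achieve pose consensus (under the standing assumption that $\mathcal{G}$ contains a directed spanning tree).

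First I would apply the $\vector_{8}$ operator to the kinematic model \eqref{eq:dqkinxi} and use the right Hamilton-operator identity from \eqref{eq:hami}, which gives $\vector_{8}\dot{\dq x}_{i}=\frac{1}{2}\vector_{8}(\dq{\xi}_{i}\dq x_{i})=\frac{1}{2}\hami -_{8}(\dq x_{i})\vector_{8}\dq{\xi}_{i}=\frac{1}{2}\hami -_{8}(\dq x_{i})\vector_{8}\dq u_{\dq{\xi},i}$. Substituting the proposed control law \eqref{eq:protocolxiu} into this expression yields $\vector_{8}\dot{\dq x}_{i}=-\hami -_{8}(\dq x_{i})\hami -_{8}(\dq x_{i}^{*})\mymatrix Q_{8}(\dq x_{i})\sum_{j=1}^{n}a_{ij}\vector_{6}(\dq y_{i}-\dq y_{j})$.

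The key step is the identity $\hami -_{8}(\dq x_{i})\hami -_{8}(\dq x_{i}^{*})=\mymatrix I_{8}$. This follows because the right Hamilton operator reverses products: iterating \eqref{eq:hami} gives $\hami -_{8}(\dq a)\hami -_{8}(\dq b)=\hami -_{8}(\dq b\dq a)$ for all $\dq a,\dq b\in\mathcal{H}$, so $\hami -_{8}(\dq x_{i})\hami -_{8}(\dq x_{i}^{*})=\hami -_{8}(\dq x_{i}^{*}\dq x_{i})$, and since $\dq x_{i}\in\dq{\mathcal{S}}$ we have $\dq x_{i}^{*}\dq x_{i}=\norm{\dq x_{i}}^{2}=1$ and $\hami -_{8}(1)=\mymatrix I_{8}$. (Equivalently, one can verify $\hami -_{8}(\dq x_{i}^{*})=\hami -_{8}(\dq x_{i})^{-1}$ directly from the block form \eqref{eq:hamilton_eight} using $\hamiquat -{\quat r}\hamiquat -{\quat r^{*}}=\mymatrix I_{4}$ for $\quat r\in\mathbb{S}^{3}$, but the homomorphism argument is cleaner.) With this, the closed loop collapses to $\vector_{8}\dot{\dq x}_{i}=-\mymatrix Q_{8}(\dq x_{i})\sum_{j=1}^{n}a_{ij}\vector_{6}(\dq y_{i}-\dq y_{j})$, i.e.\ precisely the control $\vector_{8}\dq u_{\dq x,i}$ of \eqref{eq:protocolxu}; hence output consensus on $\dq y_{i}\in\mathcal{H}_{p}$ holds by Theorem~\ref{th:poseconsensus}, and consensus on the poses $\dq x_{i}$ follows from Lemma~\ref{lem:log}.

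The only genuinely nontrivial ingredient is the invertibility/anti-homomorphism identity for $\hami -_{8}$ together with the unit-norm property of $\dq x_{i}$; everything else is bookkeeping with the $\vector_{8}$ map and the Hamilton operators, so I expect no real obstacle beyond stating that identity precisely.
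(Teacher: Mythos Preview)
Your proposal is correct and follows essentially the same route as the paper: apply $\vector_{8}$ to \eqref{eq:dqkinxi}, substitute the protocol, and use $\hami -_{8}(\dq x_{i})\hami -_{8}(\dq x_{i}^{*})=\mymatrix I_{8}$ to collapse the closed loop to the form already handled in Theorem~\ref{th:poseconsensus}. The only cosmetic difference is that the paper then invokes $\mymatrix Q_{8}(\dq x_{i})^{+}\mymatrix Q_{8}(\dq x_{i})=\mymatrix I$ to reduce explicitly to \eqref{dyy} and cites Theorem~\ref{thm:consensus} and Lemma~\ref{lem:log}, whereas you invoke Theorem~\ref{th:poseconsensus} directly; your packaging is slightly cleaner, and your explicit anti-homomorphism justification of the Hamilton-operator identity is a nice touch that the paper simply asserts.
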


\begin{proof}
Applying the $\vector_{8}$ operator in \eqref{eq:dqkinxi} and using
\eqref{eq:protocolxiu} yields 
\begin{align}
\vector_{8}\dot{\dq x}_{i} & =\frac{1}{2}\hamidq -{\dq x_{i}}\vector_{8}\dq u_{\dq{\xi},i}\\
 & =-\frac{1}{2}\hamidq -{\dq x_{i}}2\hami -_{8}(\dq x_{i}^{*})\mymatrix Q_{8}(\dq x_{i})\sum_{j=1}^{n}a_{ij}\vector_{6}\left(\dq y_{i}-\dq y_{j}\right).\label{eq:protocolxiu2}
\end{align}
Since $\vector_{8}\dot{\dq x}_{i}=\mymatrix Q_{8}(\dq x_{i})\vector_{6}\dot{\dq y}_{i}$,
and $\hamidq -{\dq x_{i}}\hamidq -{\dq x_{i}^{*}}=\mymatrix I$, $\forall\dq x_{i}\in\dq{\mathcal{S}}$,
and by Theorem~\ref{th:xQy} $\mymatrix Q_{8}\left(\dq x_{i}\right)^{+}\mymatrix Q_{8}\left(\dq x_{i}\right)=\mymatrix I$,
then \eqref{eq:protocolxiu2} implies \eqref{dyy}, which in turn
implies output consensus according to Theorem~\ref{thm:consensus},
thus allowing the system to achieve consensus on the pose according
to Lemma~\ref{lem:log}.\lyxdeleted{Bruno Vilhena Adorno}{Sat Jun 15 00:05:22 2019}{ }
\end{proof}
In the next section we write the formation control problem as a consensus
problem and present distributed control laws based on \eqref{eq:protocolxu}
and \eqref{eq:protocolxiu}. Furthermore, we consider the application
of the formation control to mobile manipulators. To that end, the
robot kinematics is explicitly taken into account.

\section{Consensus-Based Formation Control}

\label{sec:formation}

In a formation control problem, the goal is to make a group of agents
achieve desired relative poses in relation to neighbor agents and
keep this formation anywhere in space. Figure~\ref{fig:formation}
illustrates the case of a system composed of four agents in a two-dimensional
space, for better visualization, and formulates the problem in terms
of unit dual quaternions representing the poses.

The agents in the desired formation are shown in Figure~\ref{fig:dformation},
with the coordinate frame $(x$-axis,$y$-axis$)$ representing the
inertial reference frame, $(x_{c},y_{c})$ represents the center of
formation relative to the inertial frame, and $(x_{i},y_{i})$ represents
the local coordinate frame of the $i$-th agent. Each agent's desired
relative pose to the center of formation is represented by the rigid
motion given by the dual quaternion $\dq{\delta}_{i}\in\dq{\mathcal{S}}$.
The dual quaternion representing the relation from the inertial frame
to the center of formation, i.e. the pose of group formation, is represented
by $\dq x_{c}\in\dq{\mathcal{S}}$. This framework for defining the
relation has parallels with Cluster Space Control in \citep{Mas2017}
where the relative poses of the agents are defined by means of relative
transformations given by dual quaternions.

The pose of each agent is expressed by $\dq x_{i}\in\dq{\mathcal{S}}$,
and the desired relation $\dq{\delta}_{i}$ to the center of formation
is locally known (i.e., known by the $i$-th agent) and constant.
Thus, each agent has its local opinion regarding the center of formation,
which is considered as the agent's state and given by $\dq x_{c,i}=\dq x_{i}\dq{\delta}_{i}^{*}$,
as shown in Figure~\ref{fig:cformation}. A consensus-based approach
is used in order to enable all the agents to reach an agreement on
a common center of formation.

The information shared with neighboring agents is given by an output
given as the logarithmic mapping of the agent's state, i.e. 
\begin{equation}
\dq y_{c,i}=\log\dq x_{c,i}=\log(\dq x_{i}\dq{\delta}_{i}^{{*}}).\label{eq:x0ci}
\end{equation}

\begin{figure}
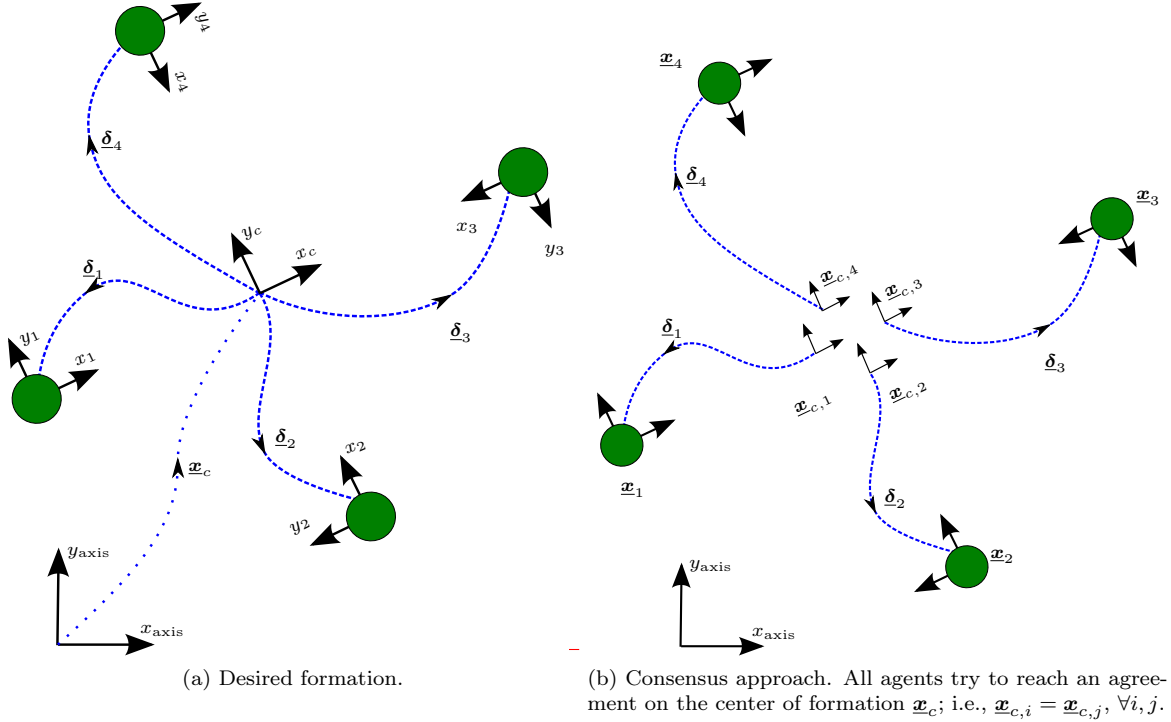

\centering \subfloat[Desired formation.\label{fig:dformation}]{
\global\long\def\svgwidth{0.45\columnwidth}%
 \import{figs/}{formation2.pdf_tex}\lyxdeleted{Bruno Vilhena Adorno}{Sat Jun 15 00:05:22 2019}{ }

}\subfloat[Consensus approach. All agents try to reach an agreement on the center
of formation $\protect\dq x_{c}$; i.e., $\protect\dq x_{c,i}=\protect\dq x_{c,j}$,
$\forall i,j$.\label{fig:cformation}]{
\global\long\def\svgwidth{0.45\columnwidth}%
 \import{figs/}{formation3.pdf_tex} 

}

\caption{\label{fig:formation}Each agent has a desired relation $\protect\dq{\delta}_{i}$
with the center of formation $\protect\dq x_{c}$. The information
exchanged is each agent's opinion on this center $\protect\dq x_{c,i}$.}
\end{figure}

Finally, since the desired $\dq{\delta}_{i}\in\mathcal{\dq{\mathcal{S}}}$
is locally defined (i.e., only the $i$-th agent has the information
about its constant $\dq{\delta}_{i}$) and the only variable that
$\dq x_{c,i}$ depends on is the pose $\dq x_{i}$, the formation
control problem can be defined as the problem of reaching output consensus
on the $\dq y_{c,i}$ variables. Therefore, the consensus protocol
that enables the system to achieve formation is presented in the following
theorem.\lyxdeleted{Bruno Vilhena Adorno}{Sat Jun 15 00:05:22 2019}{ }
\begin{thm}
\label{thm:formation} Consider a multi-agent system composed of $n$
agents described as rigid-bodies with pose expressed by $\dq x_{i}$
as given in \eqref{eq:rigidmotion}. Let the dynamics for each agent
be given by 
\begin{equation}
\vector_{8}\dq u_{\dq x,i}\triangleq\vector_{8}\dot{\dq x}_{i},~i=1,\ldots,n,\label{eq:kinematics}
\end{equation}
and each agent's output 
\begin{equation}
\dq y_{c,i}\triangleq\log\left(\dq x_{c,i}\right)=\log(\dq x_{i}\dq{\delta}_{i}^{*}),~i=1,\ldots,n,\label{eq:formation_control_output}
\end{equation}
with $\dq{\delta}_{i}$ being the desired pose in relation to the
center of formation. By means of the consensus protocol given by 
\begin{equation}
\vector_{8}\dq u_{\dq x,i}=-\hami -_{8}(\dq{\delta}_{i})\mymatrix Q_{8}(\dq x_{c,i})\sum_{j=1}^{n}a_{ij}\vector_{6}\left(\dq y_{c,i}-\dq y_{c,j}\right),\label{eq:protocol-form}
\end{equation}
where $a_{ij}$ are the elements of the adjacency matrix of the directed
graph $\mathcal{G}$ describing the network topology, the multi-agent
system asymptotically achieves formation if and only if the graph
$\mathcal{G}$ has a directed spanning tree.\lyxdeleted{Bruno Vilhena Adorno}{Sat Jun 15 00:05:22 2019}{ }
\end{thm}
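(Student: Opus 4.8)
The plan is to mirror the proof of Theorem~\ref{th:poseconsensus}: after the change of variables $\dq x_i\mapsto\dq x_{c,i}=\dq x_i\dq{\delta}_i^{*}$, I would reduce the closed loop to the canonical linear consensus dynamics \eqref{eq:closedloop}--\eqref{dyy} written in the $\dq y_{c,i}$ variables, and then invoke Theorem~\ref{thm:consensus} together with the exp/log correspondence used in Lemma~\ref{lem:log}. The only genuinely new ingredient is to recognize that the factor $\hami -_8(\dq{\delta}_i)$ appearing in the protocol \eqref{eq:protocol-form} is exactly what undoes the $\dq{\delta}_i$-dependence introduced by that change of variables.

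First I would observe that, since $\dq{\delta}_i\in\dq{\mathcal{S}}$ is constant, $\dot{\dq x}_{c,i}=\dot{\dq x}_i\dq{\delta}_i^{*}$, so by \eqref{eq:hami} we have $\vector_8\dot{\dq x}_{c,i}=\hami -_8(\dq{\delta}_i^{*})\vector_8\dot{\dq x}_i$. Because $\dq{\mathcal{S}}$ is a group, $\dq x_{c,i}\in\dq{\mathcal{S}}$, hence Theorem~\ref{th:xQy} applies to $\dq x_{c,i}$ and gives $\vector_8\dot{\dq x}_{c,i}=\mymatrix Q_8(\dq x_{c,i})\vector_6\dot{\dq y}_{c,i}$, with $\mymatrix Q_8(\dq x_{c,i})$ of full column rank and $\mymatrix Q_8(\dq x_{c,i})^{+}\mymatrix Q_8(\dq x_{c,i})=\mymatrix I$, where $\dq y_{c,i}=\log\dq x_{c,i}$ as in \eqref{eq:formation_control_output}. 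As in Theorem~\ref{th:xQy}, this presumes the standing assumption that the rotation angle of $\dq x_{c,i}$ lies in $[0,2\pi)$, so that $\log$ is well defined and $\exp$ is injective.

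Next I would substitute the protocol \eqref{eq:protocol-form} into the dynamics $\vector_8\dot{\dq x}_i=\vector_8\dq u_{\dq x,i}$ and left-multiply by $\hami -_8(\dq{\delta}_i^{*})$. Using the composition rule for the Hamilton operators---so that $\hami -_8(\dq{\delta}_i^{*})\hami -_8(\dq{\delta}_i)=\hami -_8(\dq{\delta}_i\dq{\delta}_i^{*})$---together with $\dq{\delta}_i\dq{\delta}_i^{*}=1$ (as $\dq{\delta}_i$ is a unit dual quaternion), this factor collapses to $\hami -_8(1)=\mymatrix I$. Since the left-hand side equals $\hami -_8(\dq{\delta}_i^{*})\vector_8\dot{\dq x}_i=\vector_8\dot{\dq x}_{c,i}=\mymatrix Q_8(\dq x_{c,i})\vector_6\dot{\dq y}_{c,i}$, the closed loop becomes $\mymatrix Q_8(\dq x_{c,i})\vector_6\dot{\dq y}_{c,i}=-\mymatrix Q_8(\dq x_{c,i})\sum_{j=1}^{n}a_{ij}\vector_6(\dq y_{c,i}-\dq y_{c,j})$. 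Left-multiplying by $\mymatrix Q_8(\dq x_{c,i})^{+}$ and using $\mymatrix Q_8(\dq x_{c,i})^{+}\mymatrix Q_8(\dq x_{c,i})=\mymatrix I$ yields $\vector_6\dot{\dq y}_{c,i}=-\sum_{j=1}^{n}a_{ij}\vector_6(\dq y_{c,i}-\dq y_{c,j})$, which is exactly \eqref{dyy} (equivalently the vectorized form of \eqref{eq:closedloop}) with $\dq y_i$ replaced by $\dq y_{c,i}$.

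Finally I would invoke Theorem~\ref{thm:consensus}---an ``if and only if'' statement---to conclude that the agents reach output consensus on $\dq y_{c,i}$ precisely when $\mathcal{G}$ has a directed spanning tree; then, arguing as in Lemma~\ref{lem:log} with $\dq x_{c,i}=\exp(\dq y_{c,i})$, this is equivalent to $\dq x_{c,i}-\dq x_{c,j}\to0$ for all $i,j$, i.e., all agents agreeing on a common center of formation, which is the definition of achieving formation. I expect the only delicate points to be bookkeeping ones: getting the order right in the Hamilton-operator products (recall $\hami -_8(\dq a)\hami -_8(\dq b)$ corresponds to right multiplication by $\dq b\dq a$, not $\dq a\dq b$) and keeping track of the $[0,2\pi)$ angle assumption needed for the logarithm; once those are handled, the statement is a direct specialization of the machinery already established in Theorems~\ref{th:xQy}, \ref{thm:consensus} and~\ref{th:poseconsensus}.
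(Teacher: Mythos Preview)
Your proposal is correct and follows essentially the same route as the paper's proof: both use the constancy of $\dq{\delta}_i$ to relate $\dot{\dq x}_i$ and $\dot{\dq x}_{c,i}$, apply Theorem~\ref{th:xQy} to $\dq x_{c,i}$, cancel the $\hami -_8(\dq{\delta}_i)$ factor (the paper invokes its invertibility, you use the explicit inverse $\hami -_8(\dq{\delta}_i^{*})$), use $\mymatrix Q_8^{+}\mymatrix Q_8=\mymatrix I$ to reduce to \eqref{dyy}, and then appeal to Theorem~\ref{thm:consensus} and Lemma~\ref{lem:log}. The only cosmetic difference is that the paper writes $\dot{\dq x}_i=\dot{\dq x}_{c,i}\dq{\delta}_i$ and left-multiplies by $\hami -_8(\dq{\delta}_i)^{-1}$, whereas you left-multiply by $\hami -_8(\dq{\delta}_i^{*})$ directly; these are the same step.
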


\begin{proof}
From Theorem~\ref{th:xQy}, 
\begin{equation}
\vector_{8}\dot{\dq x}_{c,i}=\mymatrix Q_{8}(\dq x_{c,i})\vector_{6}\dot{\dq y}_{c,i}.\label{eq40}
\end{equation}
Since $\dq{\delta}_{i}$ is constant, the time-derivative of the agent's
state ${\dq x}_{c,i}=\dq x_{i}\dq{\delta}_{i}^{*}$ yields 
\begin{align}
\dot{\dq x}_{c,i}=\dot{\dq x}_{i}\dq{\delta}_{i}^{*}\implies\dot{\dq x}_{i} & =\dot{\dq x}_{c,i}\dq{\delta}_{i},\label{eq401}
\end{align}
because $\dq{\delta}_{i}^{*}\dq{\delta}_{i}=1$ as $\dq{\delta}_{i}\in\dq{\mathcal{S}}$.
Applying the Hamilton and $\vector_{8}$ operators in \eqref{eq401}
and taking $\vector_{8}\dot{\dq x}_{c,i}$ from \eqref{eq40} results
in 
\begin{equation}
\vector_{8}\dot{\dq x}_{i}=\hami -_{8}(\dq{\delta}_{i})\mymatrix Q_{8}(\dq x_{c,i})\vector_{6}\dot{\dq y}_{c,i}.\label{eq43}
\end{equation}

From Theorem~\ref{thm:consensus}, a system is able to achieve output
consensus on $\dq y_{c,i}\in\mathcal{H}_{p}$ if the closed-loop dynamics
of each agent is given by 
\begin{equation}
\vector_{8}\dot{\dq y}_{c,i}=-\sum_{j=1}^{n}a_{ij}\vector_{6}\left(\dq y_{c,i}-\dq y_{c,j}\right),\label{dyyc}
\end{equation}
and if and only if the graph $\mathcal{G}$ has a directed spanning
tree. Choosing $\vector_{8}\dq u_{\dq x,i}$ as in \eqref{eq:protocol-form},
considering \eqref{eq:kinematics} and \eqref{eq43}, and using the
fact that $\hamidq -{\dq{\delta}_{i}}$ is invertible and that, by
Theorem~\ref{th:xQy} $\mymatrix Q_{8}\left(\dq x_{c,i}\right)^{+}\mymatrix Q_{8}\left(\dq x_{c,i}\right)=\mymatrix I$,
then \eqref{dyyc} is satisfied, and the system achieves output consensus
according to Theorem~\ref{thm:consensus}. As a consequence, by Lemma~\ref{lem:log}
the system achieves pose consensus on the center of formation $\dq x_{c}=\lim_{t\rightarrow\infty}\dq x_{c,i}$,
$\forall i$, and because each $\dq{\delta}_{i}$ is locally known,
the final pose of each agent is given by $\dq x_{i}=\dq x_{c}\dq{\delta}_{i}$,
$\forall i$, which ensures the desired formation. This completes
the proof.\lyxdeleted{Bruno Vilhena Adorno}{Sat Jun 15 00:05:22 2019}{ }
\end{proof}
\begin{cor}
\label{cor:formation_with_twist_input} If the dynamics of each agent
is expressed by \eqref{eq:dqkinxi} and the input control actions
are given by 
\begin{equation}
\vector_{8}\dq u_{\dq{\xi},i}\triangleq\vector_{8}\dq{\xi}_{i},~i=1,\ldots,n,\label{eq:formation_control_twist_input}
\end{equation}
consensus-based formation can be achieved by using the consensus protocol
\begin{equation}
\vector_{8}\dq u_{\dq{\xi},i}=-2\hami -_{8}(\dq x_{i}^{*})\hami -_{8}(\dq{\delta}_{i})\mymatrix Q_{8}(\dq x_{c,i})\sum_{j=1}^{n}a_{ij}\vector_{6}\left(\dq y_{c,i}-\dq y_{c,j}\right),\label{eq:protocolxiuform}
\end{equation}
if and only if the graph $\mathcal{G}$ describing the network topology
has a directed spanning tree.\lyxdeleted{Bruno Vilhena Adorno}{Sat Jun 15 00:05:22 2019}{ }
\end{cor}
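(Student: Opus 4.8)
The plan is to obtain this corollary as a direct reduction of Theorem~\ref{thm:formation}, by pushing the twist kinematics \eqref{eq:dqkinxi} through the Hamilton operators exactly as was done in the corollary following Theorem~\ref{th:poseconsensus}. First I would apply the $\vector_{8}$ operator to \eqref{eq:dqkinxi} and use the identity \eqref{eq:hami} to get $\vector_{8}\dot{\dq x}_{i}=\tfrac{1}{2}\hamidq -{\dq x_{i}}\vector_{8}\dq{\xi}_{i}=\tfrac{1}{2}\hamidq -{\dq x_{i}}\vector_{8}\dq u_{\dq{\xi},i}$. Substituting the protocol \eqref{eq:protocolxiuform} and using $\hamidq -{\dq x_{i}}\hamidq -{\dq x_{i}^{*}}=\mymatrix I$ for every $\dq x_{i}\in\dq{\mathcal{S}}$, the leading factor $\tfrac{1}{2}\hamidq -{\dq x_{i}}\cdot 2\,\hami -_8(\dq x_{i}^{*})$ collapses to the identity, so that
\[
\vector_{8}\dot{\dq x}_{i}=-\hami -_8(\dq{\delta}_{i})\,\mymatrix Q_8(\dq x_{c,i})\sum_{j=1}^{n}a_{ij}\vector_{6}\bigl(\dq y_{c,i}-\dq y_{c,j}\bigr),
\]
which is precisely the closed loop obtained under protocol \eqref{eq:protocol-form} in Theorem~\ref{thm:formation} with $\vector_{8}\dq u_{\dq x,i}=\vector_{8}\dot{\dq x}_{i}$.

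From here I would simply reuse the argument of Theorem~\ref{thm:formation}. Since $\dq{\delta}_{i}$ is constant and unit, \eqref{eq401} gives $\dot{\dq x}_{i}=\dot{\dq x}_{c,i}\dq{\delta}_{i}$, and Theorem~\ref{th:xQy} applied to $\dq x_{c,i}$ gives $\vector_{8}\dot{\dq x}_{c,i}=\mymatrix Q_8(\dq x_{c,i})\vector_{6}\dot{\dq y}_{c,i}$; combining these yields \eqref{eq43}. Equating \eqref{eq43} with the displayed expression above, cancelling the invertible matrix $\hami -_8(\dq{\delta}_{i})$ on the left, and then applying the left pseudoinverse $\mymatrix Q_8(\dq x_{c,i})^{+}$ (which exists and satisfies $\mymatrix Q_8(\dq x_{c,i})^{+}\mymatrix Q_8(\dq x_{c,i})=\mymatrix I$ by Theorem~\ref{th:xQy}), I recover exactly the linear closed loop \eqref{dyyc}. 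By Theorem~\ref{thm:consensus}, this system achieves output consensus on $\dq y_{c,i}\in\mathcal{H}_{p}$ if and only if $\mathcal{G}$ has a directed spanning tree; by Lemma~\ref{lem:log} this implies pose consensus on a common center of formation $\dq x_{c}=\lim_{t\to\infty}\dq x_{c,i}$, and since each $\dq{\delta}_{i}$ is locally known, the final poses $\dq x_{i}=\dq x_{c}\dq{\delta}_{i}$ realize the desired formation, which completes the equivalence.

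I do not expect a genuine obstacle here: the content is entirely the reduction of the twist-input closed loop to \eqref{dyyc}, and every ingredient is already in place. The only point that requires care is the Hamilton-operator bookkeeping — in particular that it is $\hamidq -{\dq x_{i}}\hamidq -{\dq x_{i}^{*}}$ (in that order, since $\hami -$ reverses products) that telescopes to $\mymatrix I$, and that the matrices being cancelled, namely $\hami -_8(\dq{\delta}_{i})$, $\hamidq -{\dq x_{i}}$ and $\mymatrix Q_8(\dq x_{c,i})$, are invertible or left-invertible in the required sense; all of these facts are already available from the Hamilton-operator identities used in the corollary after Theorem~\ref{th:poseconsensus} and from Theorem~\ref{th:xQy}. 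Hence the proof is a careful composition of Theorem~\ref{th:xQy}, the kinematic relation \eqref{eq43}, and Theorems~\ref{thm:consensus} and~\ref{thm:formation}, with no new estimates needed.
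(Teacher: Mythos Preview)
Your proposal is correct and follows essentially the same approach as the paper: apply $\vector_{8}$ to \eqref{eq:dqkinxi}, substitute the protocol \eqref{eq:protocolxiuform}, use $\hamidq -{\dq x_{i}}\hamidq -{\dq x_{i}^{*}}=\mymatrix I$, the invertibility of $\hamidq -{\dq{\delta}_{i}}$, and $\mymatrix Q_{8}(\dq x_{c,i})^{+}\mymatrix Q_{8}(\dq x_{c,i})=\mymatrix I$ from Theorem~\ref{th:xQy} to reduce to \eqref{dyyc}, and then invoke Theorem~\ref{thm:consensus} and the argument of Theorem~\ref{thm:formation}. The paper's proof is more terse but identical in substance.
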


\begin{proof}
From \eqref{eq:dqkinxi} and \eqref{eq:formation_control_twist_input}
we obtain 
\begin{align}
\vector_{8}\dot{\dq x}_{i} & =\frac{1}{2}\hamidq -{\dq x_{i}}\vector_{8}\dq u_{\dq{\xi},i}.\label{eq:input_twist}
\end{align}
Replacing \eqref{eq43} and the consensus protocol \eqref{eq:protocolxiuform}
in \eqref{eq:input_twist}, and using the facts that $\hamidq -{\dq x_{i}}\hamidq -{\dq x_{i}^{*}}=\mymatrix I$,
the matrix $\hamidq -{\dq{\delta}_{i}}$ is invertible, and $\mymatrix Q_{8}\left(\dq x_{c,i}\right)^{+}\mymatrix Q_{8}\left(\dq x_{c,i}\right)=\mymatrix I$
by Theorem~\ref{th:xQy}, then \eqref{dyyc} is satisfied, which
ensures the desired formation according to the same argument used
in Theorem~\ref{thm:formation}. This completes the proof.\lyxdeleted{Bruno Vilhena Adorno}{Sat Jun 15 00:05:22 2019}{ }
\end{proof}
\begin{rem}
It can be shown that $\hami -_{8}(\dq x_{i}^{*})\hami -_{8}(\dq{\delta}_{i})=\hami -_{8}(\dq x_{c,i}^{*})$,
which gives an equivalence between \eqref{eq:protocolxiuform} and
\eqref{eq:protocolxiu} when comparing $\dq x_{c,i}$ to $\dq x_{i}$.
\end{rem}

The extension of Theorem~\ref{thm:formation} to time-varying formations
is straightforward as long as we assume that the $i$-th agent knows
it own time-varying desired relation $\dq{\delta}_{i}(t)$ to the
center of formation, as shown in the next corollary.
\begin{cor}
Consider a multi-agent system composed of $n$ agents, described as
rigid-bodies, with dynamics given by \eqref{eq:kinematics} and each
agent's output given by \eqref{eq:formation_control_output}, where
$\dq{\delta}_{i}\triangleq\dq{\delta}_{i}(t)$ is the desired time-varying
pose in relation to the center of formation. By means of the consensus
protocol given by 
\begin{equation}
\vector_{8}\dq u_{\dq x,i}=-\hami -_{8}(\dq{\delta}_{i})\mymatrix Q_{8}(\dq x_{c,i})\sum_{j=1}^{n}a_{ij}\vector_{6}\left(\dq y_{c,i}-\dq y_{c,j}\right)-\vector_{8}\left(\dq x_{i}\dot{\dq{\delta}}_{i}^{*}\dq{\delta}_{i}\right),\label{eq:protocol-time-varying-form}
\end{equation}
where $a_{ij}$ are the elements of the adjacency matrix of the directed
graph $\mathcal{G}$ describing the network topology, the multi-agent
system asymptotically achieves formation if and only if the graph
$\mathcal{G}$ has a directed spanning tree.
\end{cor}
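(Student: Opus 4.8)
The plan is to mirror the proof of Theorem~\ref{thm:formation} almost verbatim; the only new ingredient is accounting for the time dependence of $\dq{\delta}_{i}$, which produces an extra term when differentiating the agent's state $\dq x_{c,i}=\dq x_{i}\dq{\delta}_{i}^{*}$, and this is precisely the term that the feedforward correction $-\vector_{8}(\dq x_{i}\dot{\dq{\delta}}_{i}^{*}\dq{\delta}_{i})$ in the protocol \eqref{eq:protocol-time-varying-form} is designed to kill. First I would differentiate $\dq x_{c,i}=\dq x_{i}\dq{\delta}_{i}^{*}$ by the product rule, obtaining $\dot{\dq x}_{c,i}=\dot{\dq x}_{i}\dq{\delta}_{i}^{*}+\dq x_{i}\dot{\dq{\delta}}_{i}^{*}$; right‑multiplying by $\dq{\delta}_{i}$ and using $\dq{\delta}_{i}^{*}\dq{\delta}_{i}=1$ (since $\dq{\delta}_{i}\in\dq{\mathcal{S}}$) yields $\dot{\dq x}_{i}=\dot{\dq x}_{c,i}\dq{\delta}_{i}-\dq x_{i}\dot{\dq{\delta}}_{i}^{*}\dq{\delta}_{i}$. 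Applying the $\vector_{8}$ and Hamilton operators and invoking Theorem~\ref{th:xQy} to write $\vector_{8}\dot{\dq x}_{c,i}=\mymatrix Q_{8}(\dq x_{c,i})\vector_{6}\dot{\dq y}_{c,i}$, this becomes
\[
\vector_{8}\dot{\dq x}_{i}=\hami -_{8}(\dq{\delta}_{i})\mymatrix Q_{8}(\dq x_{c,i})\vector_{6}\dot{\dq y}_{c,i}-\vector_{8}\left(\dq x_{i}\dot{\dq{\delta}}_{i}^{*}\dq{\delta}_{i}\right).
\]

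Next I would set $\vector_{8}\dq u_{\dq x,i}=\vector_{8}\dot{\dq x}_{i}$ and substitute the protocol \eqref{eq:protocol-time-varying-form}. The feedforward term cancels on both sides, leaving $\hami -_{8}(\dq{\delta}_{i})\mymatrix Q_{8}(\dq x_{c,i})\vector_{6}\dot{\dq y}_{c,i}=-\hami -_{8}(\dq{\delta}_{i})\mymatrix Q_{8}(\dq x_{c,i})\sum_{j=1}^{n}a_{ij}\vector_{6}(\dq y_{c,i}-\dq y_{c,j})$. Since $\hami -_{8}(\dq{\delta}_{i})$ is invertible and, by Theorem~\ref{th:xQy}, $\mymatrix Q_{8}(\dq x_{c,i})^{+}\mymatrix Q_{8}(\dq x_{c,i})=\mymatrix I$, left‑multiplying by $\mymatrix Q_{8}(\dq x_{c,i})^{+}\hami -_{8}(\dq{\delta}_{i})^{-1}$ recovers exactly the closed‑loop dynamics \eqref{dyyc}. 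From that point the argument of Theorem~\ref{thm:consensus} applies unchanged: the error vector $\myvec z$ obeys the same linear time‑invariant system $\dot{\myvec z}=-\mymatrix U\mymatrix L\mymatrix W\myvec z$, which is asymptotically stable if and only if $\mathcal{G}$ has a directed spanning tree, so output consensus on $\dq y_{c,i}\in\mathcal{H}_{p}$ holds precisely under that condition, and by Lemma~\ref{lem:log} the agents asymptotically agree on $\dq x_{c}=\lim_{t\to\infty}\dq x_{c,i}$. Because each agent knows its own $\dq{\delta}_{i}(t)$, the asymptotic pose of agent $i$ is $\dq x_{i}=\dq x_{c}\dq{\delta}_{i}(t)$, i.e.\ the desired (now time‑varying) formation; the ``only if'' direction is inherited from the equivalence in Theorem~\ref{thm:consensus}.

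The bulk of this is a direct adaptation of Theorem~\ref{thm:formation}, so the one genuinely new step—and the place I would be most careful—is verifying the feedforward cancellation, namely that the product‑rule term, after the $\vector_{8}$/Hamilton‑operator bookkeeping, reproduces $\vector_{8}(\dq x_{i}\dot{\dq{\delta}}_{i}^{*}\dq{\delta}_{i})$ with exactly the coefficient appearing in \eqref{eq:protocol-time-varying-form}. A secondary caveat, as in Theorems~\ref{th:poseconsensus} and \ref{thm:formation}, is that one should assume the trajectories remain in the region where $\log(\dq x_{i}\dq{\delta}_{i}^{*})$ is well defined (rotation angle in $[0,2\pi)$), so that $\dq y_{c,i}$ and $\mymatrix Q_{8}(\dq x_{c,i})$ vary smoothly along the motion; I do not expect this to require any new argument beyond those already implicit in the earlier results.
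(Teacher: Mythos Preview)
Your proposal is correct and follows essentially the same approach as the paper's own proof: differentiate $\dq x_{c,i}=\dq x_{i}\dq{\delta}_{i}^{*}$ with the product rule, right-multiply by $\dq{\delta}_{i}$, apply Theorem~\ref{th:xQy} to get $\vector_{8}\dot{\dq x}_{i}=\hami -_{8}(\dq{\delta}_{i})\mymatrix Q_{8}(\dq x_{c,i})\vector_{6}\dot{\dq y}_{c,i}-\vector_{8}(\dq x_{i}\dot{\dq{\delta}}_{i}^{*}\dq{\delta}_{i})$, cancel the feedforward term against the protocol, and reduce to \eqref{eq:closedloop} via invertibility of $\hami -_{8}(\dq{\delta}_{i})$ and $\mymatrix Q_{8}^{+}\mymatrix Q_{8}=\mymatrix I$. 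Your write-up is in fact slightly more explicit than the paper's (which simply equates \eqref{eq:open-loop-formation-dynamics} to \eqref{eq:protocol-time-varying-form} and invokes the same cancellations), and the caveat about the domain of the logarithm is a reasonable addition that the paper leaves implicit.
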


\begin{proof}
Since $\dq x_{c,i}=\dq x_{i}\dq{\delta}_{i}^{*}$ then $\dot{\dq x}_{c,i}=\dot{\dq x}_{i}\dq{\delta}_{i}^{*}+\dq x_{i}\dot{\dq{\delta}}_{i}^{*}$,
therefore $\dot{\dq x}_{c,i}\dq{\delta}_{i}-\dq x_{i}\dot{\dq{\delta}}_{i}^{*}\dq{\delta}_{i}=\dot{\dq x}_{i}$.
Using Theorem~\ref{th:xQy}, we obtain 
\begin{align}
\vector_{8}\dot{\dq x_{i}} & =\hami -_{8}\left(\dq{\delta}_{i}\right)\mymatrix Q_{8}\left(\dq x_{c,i}\right)\vector_{6}\dot{\dq y}_{c,i}-\vector_{8}\left(\dq x_{i}\dot{\dq{\delta}}_{i}^{*}\dq{\delta}_{i}\right).\label{eq:open-loop-formation-dynamics}
\end{align}
Since each agent's dynamics is given \eqref{eq:kinematics}, then
\eqref{eq:open-loop-formation-dynamics} is equal to \eqref{eq:protocol-time-varying-form}.
Using the fact that $\hami -_{8}(\dq{\delta}_{i})$ is invertible
and $\mymatrix Q_{8}\left(\dq x_{c,i}\right)^{+}\mymatrix Q_{8}\left(\dq x_{c,i}\right)=\mymatrix I$
by Theorem~\ref{th:xQy}, the closed-loop dynamics is reduced to
\eqref{eq:closedloop}, which by Theorem~\ref{thm:consensus} ensures
output consensus in the center of formation if and only if the graph
$\mathcal{G}$ has a directed spanning tree. As a consequence, time-varying
formation control is achieved.
\end{proof}

\subsection{Formation Control of Holonomic Mobile Manipulators}

The result presented in Theorem~\ref{thm:formation} can be directly
extended to a multi-agent system composed of multiple mobile manipulators.
In this case, the objective is to achieve desired formations for the
set of end-effectors of mobile manipulators and let each robot generate
its own motion in order to move the end-effector according to the
reference provided by the consensus protocol. The advantage of using
such abstraction is that the consensus protocols are used to determine,
in a decentralized way, how each robot's end-effector should be, regardless
of the topology and dimension of the robots' configuration spaces.
In fact, since the robots use \emph{local} motion controllers, the
result presented in Theorem~\ref{thm:formation} can be applied to
a highly heterogeneous multi-agent system\footnote{For example, the idea presented in this section could be applied to
a system composed of mobile manipulators and aerial manipulators.
However, in this paper we restrict ourselves to holonomic mobile manipulators.}, as long as each agent is capable of following the reference provided
by the consensus protocols.

Each robot is characterized by two main equations (see Section~\ref{sec:wholebody}):
the forward kinematics (FK) and the differential forward kinematics
(DFK). Let $\myvec q_{i}\in\mathbb{R}^{m_{i}}$ be the $m_{i}$-dimensional
vector corresponding to the $i$-th robot's configuration. The corresponding
robot end-effector pose $\dq x_{e,i}\in\dq{\mathcal{S}}$ is given
by 
\begin{equation}
\dq x_{e,i}=\dq f_{i}\left(\myvec q_{i}\right)\label{eq:general_fkm}
\end{equation}
where $\dq f_{i}\,:\,\mathbb{R}^{m_{i}}\rightarrow\dq{\mathcal{S}}$
is the FK of the $i$-th robot. In case of mobile manipulators, this
function is explicitly given by \eqref{eq:fkm derivative}. The DFK
is obtained by taking the time-derivative of \eqref{eq:general_fkm},
which yields 
\begin{align}
\vector_{8}\dot{\dq x}_{e,i} & =\mymatrix J_{w,i}\dot{\myvec q_{i}},\label{eq:general_whole_body_jacobian}
\end{align}
where $\myvec J_{w,i}\in\mathbb{R}^{8\times m_{i}}$ is the robot
(dual quaternion) Jacobian. In case of holonomic mobile manipulators,
this Jacobian is known as whole-body Jacobian (i.e., the Jacobian
that takes into account both the mobile base and manipulator) and
is given explicitly by \eqref{eq:whole-body-jacobian-of-mobile-manipulators}.
Using \eqref{eq:general_whole_body_jacobian}, the following theorem
provides the necessary and sufficient conditions for the formation
control of the end-effectors of a multi-agent system composed of multiple
mobile manipulators.\lyxdeleted{Bruno Vilhena Adorno}{Sat Jun 15 00:05:22 2019}{ }
\begin{thm}
Consider a multi-agent system composed of $n$ holonomic mobile manipulators
whose forward kinematics is given by \eqref{eq:general_fkm} and the
differential forward kinematics is given by \eqref{eq:general_whole_body_jacobian}.
Let the control input for each robot be given by 
\begin{equation}
\dq u_{\myvec q,i}\triangleq\dot{\myvec q}_{i},~i=1,\ldots,n,\label{eq:low-level-kinematics}
\end{equation}
and each agent's output be given by 
\begin{equation}
\dq y_{ce,i}\triangleq\log\left(\dq x_{ce,i}\right)=\log(\dq x_{e,i}\dq{\delta}_{i}^{*}),~i=1,\ldots,n,
\end{equation}
where $\dq x_{ce,i}\triangleq\dq x_{e,i}\dq{\delta}_{i}^{*}$ is the
opinion of the $i$-th agent related to the center of formation, $\dq x_{e,i}$
is the end-effector pose given by \eqref{eq:general_fkm}, and $\dq{\delta}_{i}$
is the desired end-effector pose with respect to the center of formation.

By means of the control input given by 
\begin{equation}
\myvec u_{\myvec q,i}=\mymatrix J_{w,i}^{\dagger}\vector_{8}\dq u_{\dq x,i},\label{eq:low level input}
\end{equation}
where $\mymatrix J_{w,i}^{\dagger}$ is the generalized Moore-Penrose
pseudoinverse of $\mymatrix J_{w,i}$, and the consensus protocol
$\vector_{8}\dq u_{\dq x,i}$ is given by

\begin{equation}
\vector_{8}\dq u_{\dq x,i}=-\hami -_{8}(\dq{\delta}_{i})\mymatrix Q_{8}(\dq x_{ce,i})\sum_{j=1}^{n}a_{ij}\vector_{6}\left(\dq y_{ce,i}-\dq y_{ce,j}\right),\label{eq:protocol-form-1}
\end{equation}
the multi-agent system asymptotically achieves formation if and only
if the graph $\mathcal{G}$ describing the network topology has a
directed spanning tree and $\vector_{8}\dq u_{\dq x,i}$ is in the
range space of $\mymatrix J_{w,i}$.\footnote{The range space of $\mymatrix M\in\mathbb{R}^{m\times n}$ is defined
as $\mathrm{range}\,\mymatrix M\triangleq\left\{ \mymatrix M\myvec v\,:\,\myvec v\in\mathbb{R}^{n}\right\} .$}\lyxdeleted{Bruno Vilhena Adorno}{Sat Jun 15 00:05:22 2019}{ }
\end{thm}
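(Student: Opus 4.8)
The plan is to reduce this theorem to the already-established Theorem~\ref{thm:formation} by showing that the low-level control input in \eqref{eq:low level input} exactly reproduces the dual-quaternion-level consensus protocol \eqref{eq:protocol-form} (applied to the end-effector poses $\dq x_{e,i}$), provided the compatibility condition on the range space holds. First I would take the time-derivative of the forward kinematics, which gives the differential forward kinematics \eqref{eq:general_whole_body_jacobian}, namely $\vector_{8}\dot{\dq x}_{e,i}=\mymatrix J_{w,i}\dot{\myvec q}_{i}$. Substituting the control input $\dot{\myvec q}_{i}=\myvec u_{\myvec q,i}=\mymatrix J_{w,i}^{\dagger}\vector_{8}\dq u_{\dq x,i}$ into this yields $\vector_{8}\dot{\dq x}_{e,i}=\mymatrix J_{w,i}\mymatrix J_{w,i}^{\dagger}\vector_{8}\dq u_{\dq x,i}$. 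The key observation is that $\mymatrix J_{w,i}\mymatrix J_{w,i}^{\dagger}$ is the orthogonal projector onto $\mathrm{range}\,\mymatrix J_{w,i}$, so under the hypothesis that $\vector_{8}\dq u_{\dq x,i}\in\mathrm{range}\,\mymatrix J_{w,i}$ we get $\mymatrix J_{w,i}\mymatrix J_{w,i}^{\dagger}\vector_{8}\dq u_{\dq x,i}=\vector_{8}\dq u_{\dq x,i}$, hence $\vector_{8}\dot{\dq x}_{e,i}=\vector_{8}\dq u_{\dq x,i}$. This recovers exactly the kinematic abstraction \eqref{eq:kinematics} used in Theorem~\ref{thm:formation}, now with the end-effector pose $\dq x_{e,i}$ playing the role of $\dq x_{i}$.

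Once the kinematics has been reduced to the integrator form $\vector_{8}\dot{\dq x}_{e,i}=\vector_{8}\dq u_{\dq x,i}$, the remaining argument is essentially identical to that of Theorem~\ref{thm:formation}. I would invoke Theorem~\ref{th:xQy} to write $\vector_{8}\dot{\dq x}_{ce,i}=\mymatrix Q_{8}(\dq x_{ce,i})\vector_{6}\dot{\dq y}_{ce,i}$, and then, since $\dq{\delta}_{i}$ is constant and $\dq x_{ce,i}=\dq x_{e,i}\dq{\delta}_{i}^{*}$, use $\dot{\dq x}_{e,i}=\dot{\dq x}_{ce,i}\dq{\delta}_{i}$ (as $\dq{\delta}_{i}^{*}\dq{\delta}_{i}=1$) together with the Hamilton operator to obtain $\vector_{8}\dot{\dq x}_{e,i}=\hami -_{8}(\dq{\delta}_{i})\mymatrix Q_{8}(\dq x_{ce,i})\vector_{6}\dot{\dq y}_{ce,i}$. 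Substituting the consensus protocol \eqref{eq:protocol-form-1} for $\vector_{8}\dq u_{\dq x,i}=\vector_{8}\dot{\dq x}_{e,i}$ and using the invertibility of $\hami -_{8}(\dq{\delta}_{i})$ along with $\mymatrix Q_{8}(\dq x_{ce,i})^{+}\mymatrix Q_{8}(\dq x_{ce,i})=\mymatrix I$ (Theorem~\ref{th:xQy}) collapses the closed loop to $\vector_{6}\dot{\dq y}_{ce,i}=-\sum_{j=1}^{n}a_{ij}\vector_{6}(\dq y_{ce,i}-\dq y_{ce,j})$, which is exactly \eqref{dyyc}. By Theorem~\ref{thm:consensus} this achieves output consensus on $\dq y_{ce,i}$ if and only if $\mathcal{G}$ has a directed spanning tree, and by Lemma~\ref{lem:log} the agents reach pose consensus on a common center of formation $\dq x_{ce}$, whence $\dq x_{e,i}=\dq x_{ce}\dq{\delta}_{i}$ gives the desired formation of end-effectors.

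The one genuinely new ingredient compared to Theorem~\ref{thm:formation} — and the place where the extra hypothesis enters — is the passage through the robot Jacobian, so the main obstacle (or rather the point requiring care) is justifying the step $\mymatrix J_{w,i}\mymatrix J_{w,i}^{\dagger}\vector_{8}\dq u_{\dq x,i}=\vector_{8}\dq u_{\dq x,i}$. This is precisely why the statement explicitly requires $\vector_{8}\dq u_{\dq x,i}$ to lie in the range space of $\mymatrix J_{w,i}$: in general the Moore--Penrose pseudoinverse only gives $\mymatrix J_{w,i}\mymatrix J_{w,i}^{\dagger}$ equal to the orthogonal projection onto $\mathrm{range}\,\mymatrix J_{w,i}$, so the desired end-effector twist is reproduced exactly only if it is already feasible for the robot (i.e., the robot is not in a configuration where it is kinematically unable to realize the commanded motion, such as a singularity or a direction outside the reachable tangent space). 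Under that assumption the reduction is exact and the "if and only if" on the spanning tree is inherited verbatim from Theorem~\ref{thm:consensus}; I would state the range-space condition as part of the hypotheses (as the theorem already does) rather than attempt to remove it. It is worth remarking, as the paper does for the free-flying case, that $\hami -_{8}(\dq{\delta}_{i})\mymatrix Q_{8}(\dq x_{ce,i})$ can equivalently be written in terms of $\mymatrix Q_{8}(\dq x_{e,i})$ via $\hami -_{8}(\dq x_{e,i}^{*})\hami -_{8}(\dq{\delta}_{i})=\hami -_{8}(\dq x_{ce,i}^{*})$, but this is cosmetic and not needed for the proof.
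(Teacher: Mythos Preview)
Your proposal is correct and follows essentially the same route as the paper: substitute \eqref{eq:low level input} into \eqref{eq:general_whole_body_jacobian}, use the range-space hypothesis to cancel $\mymatrix J_{w,i}\mymatrix J_{w,i}^{\dagger}$, then repeat the argument of Theorem~\ref{thm:formation} (via Theorem~\ref{th:xQy}, invertibility of $\hamidq -{\dq{\delta}_{i}}$, and $\mymatrix Q_{8}^{+}\mymatrix Q_{8}=\mymatrix I$) to reduce to \eqref{dyyc} and conclude with Theorem~\ref{thm:consensus} and Lemma~\ref{lem:log}. The only cosmetic difference is that the paper spells out the equivalence $\vector_{8}\dq u_{\dq x,i}\in\mathrm{range}\,\mymatrix J_{w,i}\iff\mymatrix J_{w,i}\mymatrix J_{w,i}^{\dagger}\vector_{8}\dq u_{\dq x,i}=\vector_{8}\dq u_{\dq x,i}$ from the defining property $\mymatrix J\mymatrix J^{\dagger}\mymatrix J=\mymatrix J$, whereas you invoke the (equivalent) standard fact that $\mymatrix J\mymatrix J^{\dagger}$ is the orthogonal projector onto $\mathrm{range}\,\mymatrix J$.
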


\begin{proof}
First we prove that $\vector_{8}\dq u_{\dq x,i}$ is in the range
space of $\mymatrix J_{w,i}$ if and only if $\vector_{8}\dq u_{x,i}=\mymatrix J_{w,i}\mymatrix J_{w,i}^{\dagger}\vector_{8}\dq u_{x,i}$.
Let $\mymatrix J_{w,i}\in\mathbb{R}^{8\times n}$, if $\vector_{8}\dq u_{\dq x,i}\in\mathrm{range}\,\mymatrix J_{w,i}$
then $\exists\myvec v\in\mathbb{R}^{n}$ such that $\vector_{8}\dq u_{\dq x,i}=\mymatrix J_{w,i}\myvec v$.
Since $\mymatrix J_{w,i}\mymatrix J_{w,i}^{\dagger}\mymatrix J_{w,i}=\mymatrix J_{w,i}$
(see \citep{Bernstein2009}), then $\vector_{8}\dq u_{\dq x,i}=\mymatrix J_{w,i}\myvec v=\mymatrix J_{w,i}\mymatrix J_{w,i}^{\dagger}\mymatrix J_{w,i}\myvec v=\mymatrix J_{w,i}\mymatrix J_{w,i}^{\dagger}\vector_{8}\dq u_{\dq x,i}$.
Thus we conclude that 
\begin{gather}
\vector_{8}\dq u_{\dq x,i}\in\mathrm{range}\,\mymatrix J_{w,i}\implies\vector_{8}\dq u_{x,i}=\mymatrix J_{w,i}\mymatrix J_{w,i}^{\dagger}\vector_{8}\dq u_{x,i}.\label{eq:range_space_if}
\end{gather}
Conversely, if $\mymatrix J_{w,i}\mymatrix J_{w,i}^{\dagger}\vector_{8}\dq u_{x,i}=\vector_{8}\dq u_{x,i}$
then $\exists\myvec v'\triangleq\mymatrix J_{w,i}^{\dagger}\vector_{8}\dq u_{x,i}$
such that $\mymatrix J_{w,i}\myvec v'=\vector_{8}\dq u_{x,i}$, which
implies that $\vector_{8}\dq u_{\dq x,i}\in\mathrm{range}\,\mymatrix J_{w,i}$.
Hence, 
\begin{equation}
\vector_{8}\dq u_{\dq x,i}\in\mathrm{range}\,\mymatrix J_{w,i}\impliedby\vector_{8}\dq u_{x,i}=\mymatrix J_{w,i}\mymatrix J_{w,i}^{\dagger}\vector_{8}\dq u_{x,i}.\label{eq:range_space_only_if}
\end{equation}
From \eqref{eq:range_space_if} and \eqref{eq:range_space_only_if}
we conclude that 
\begin{equation}
\vector_{8}\dq u_{\dq x,i}\in\mathrm{range}\,\mymatrix J_{w,i}\iff\vector_{8}\dq u_{x,i}=\mymatrix J_{w,i}\mymatrix J_{w,i}^{\dagger}\vector_{8}\dq u_{x,i}.\label{eq:range_space_only_if-and-only-if}
\end{equation}

Using \eqref{eq:low-level-kinematics} in \eqref{eq:general_whole_body_jacobian}
yields $\vector_{8}\dot{\dq x}_{e,i}=\mymatrix J_{w,i}\dq u_{\myvec q,i}$.
Considering \eqref{eq:low level input} we obtain 
\begin{align}
\vector_{8}\dot{\dq x}_{e,i} & =\mymatrix J_{w,i}\mymatrix J_{w,i}^{\dagger}\vector_{8}\dq u_{\dq x,i}.\label{eq:single_integrator_end_effector}
\end{align}
Since $\dq x_{e,i}=\dq x_{ce,i}\dq{\delta}_{i}$, with $\dq{\delta}_{i}$
constant, we use Theorem~\ref{th:xQy} to obtain 
\begin{align}
\vector_{8}\dot{\dq x}_{e,i} & =\hamidq -{\dq{\delta}_{i}}\vector_{8}\dot{\dq x}_{ce,i}\nonumber \\
 & =\hamidq -{\dq{\delta}_{i}}\mymatrix Q_{8}\left(\dq x_{ce,i}\right)\vector_{6}\dot{\dq y}_{ce,i}.\label{eq:formation_control_effector_logarithm}
\end{align}
Assuming that \eqref{eq:range_space_only_if-and-only-if} holds, then
\eqref{eq:single_integrator_end_effector} results in $\vector_{8}\dot{\dq x}_{e,i}=\vector_{8}\dq u_{\dq x,i}$.
Therefore, we use the consensus protocol \eqref{eq:protocol-form-1}
together with \eqref{eq:formation_control_effector_logarithm}, and
use the fact that $\hamidq -{\dq{\delta}_{i}}$ is invertible and
$\mymatrix Q_{8}\left(\dq x_{ce,i}\right)^{+}\mymatrix Q_{8}\left(\dq x_{ce,i}\right)=\mymatrix I$,
to obtain 
\begin{equation}
\vector_{6}\dot{\dq y}_{ce,i}=-\sum_{j=1}^{n}a_{ij}\vector_{6}\left(\dq y_{ce,i}-\dq y_{ce,j}\right).\label{eq:effectors-closed-loop-dynamics}
\end{equation}
From Theorem~\ref{thm:consensus}, if the closed-loop dynamics of
each agent is given by \eqref{eq:effectors-closed-loop-dynamics},
the system is able to achieve output consensus on $\dq y_{ce,i}\in\mathcal{H}_{p}$
if and only if the graph $\mathcal{G}$ describing the network topology
has a directed spanning tree.

As a consequence, if the aforementioned conditions are fulfilled (i.e.,
$\vector_{8}\dq u_{\dq x,i}\in\mathrm{range}\,\mymatrix J_{w,i}$
and $\mathcal{G}$ has a directed spanning tree), by Lemma~\ref{lem:log}
the system achieves pose consensus on the center of formation $\dq x_{ce}=\lim_{t\rightarrow\infty}\dq x_{ce,i}$,
$\forall i$, and because each $\dq{\delta}_{i}$ is locally known,
the final pose of each end-effector is given by $\dq x_{e,i}=\dq x_{ce}\dq{\delta}_{i}$,
$\forall i$, which ensures the desired formation. This completes
the proof.\lyxdeleted{Bruno Vilhena Adorno}{Sat Jun 15 00:05:22 2019}{ }
\end{proof}
\begin{rem}
The reference $\vector_{8}\dq u_{\dq x,i}$ generated by the consensus
protocol \eqref{eq:protocol-form-1} is always in the range space
of the Jacobian matrix $\mymatrix J_{w,i}$ as long as the $i$-th
manipulator is not in a singular configuration or has not reached
its joint limits (both in position and velocity).\lyxdeleted{Bruno Vilhena Adorno}{Sat Jun 15 00:05:22 2019}{ }
\end{rem}

\section{Numerical Examples and Experiments}

\label{sec:Examples} This section presents numerical examples and
experiments with real robots to illustrate the applicability of the
consensus-based formation control. First, a simple numerical simulation
is performed by considering five free-flying agents that are supposed
to make a circular formation in an arbitrary location. Another simulation
is then performed by considering 100 free-flying agents in a time-varying
formation scenario to show the scalability of the proposed method.
Finally, we perform an experiment with two mobile manipulators in
a task of decentralized cooperative manipulation.

In both numerical examples and experiments, we used DQ Robotics,\footnote{\url{https://dqrobotics.github.io/}}
a standalone open-source robotics library that provides dual quaternion
algebra and kinematic calculation algorithms in MATLAB, Python, and
C++. The numerical simulations were performed in Matlab whereas C++
was used for the implementation on the real robots.

\subsection{Formation control of free-flying agents}

In this example, all agents must be equally distributed along a circumference
such that the final formation is a circle with radius equal to 0.5~m.
A coordinate system $\frame c\left(o_{c},x_{c},y_{c},z_{c}\right)$
is located at the center of the circle with the $z_{c}$-axis being
normal to the plane containing the circle. Each free-flying agent
is represented by a coordinate system $\frame i\left(o_{i},x_{i},y_{i},z_{i}\right)$
with corresponding unit dual quaternion $\dq x_{i}$. The desired
transformation $\dq{\delta}_{i}$ with respect to the center of formation
for the $i$-th agent is defined such that the agents are equally
distributed in a complete revolution around the $z_{c}$-axis with
the $x_{i}$-axis being tangent to the circumference and $y_{i}$
pointing towards the center. More specifically, given $n$ agents,
the desired transformation $\dq{\delta}_{i}$ of the $i$-th agent
is given by 
\begin{align}
\dq{\delta}_{i} & \triangleq\quat r_{\delta,i}\left(1+\dual\frac{1}{2}\quat p_{\delta,i}\right),\label{eq:deltas}
\end{align}
where 
\begin{align}
\quat r_{\delta,i} & =\cos\left(\frac{\phi_{\delta,i}}{2}\right)+\imk\sin\left(\frac{\phi_{\delta,i}}{2}\right)\label{eq:delta2}
\end{align}
and 
\begin{align}
\phi_{\delta,i} & =\frac{2\pi(i-1)}{n}, & \quat p_{\delta,i} & =-0.5\imj.\label{eq:delta1}
\end{align}

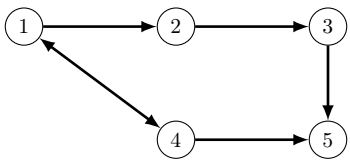
\begin{figure}[th]
\centering \begin{tikzpicture}[scale=1,auto=left,every node/.style=circle]
	\node[circle,draw,scale=.8] (n4) at (2,0)  {4};
	\node[circle,draw,scale=.8] (n5) at (4,0)  {5};
	\node[circle,draw,scale=.8] (n3) at (4,1.5)  {3};
	\node[circle,draw,scale=.8] (n2) at (2,1.5)  {2};
	\node[circle,draw,scale=.8] (n1) at (0,1.5)  {1};
	
	\pgfsetarrows{latex-latex} \foreach \from/\to in {n1/n4}
	\draw[line width=1pt] (\from) -- (\to);
	\pgfsetarrows{-latex} \foreach \from/\to in {n1/n2,n2/n3,n3/n5,n4/n5}
	\draw[line width=1pt] (\from) -- (\to);                 
	\end{tikzpicture} \caption{Network topology.\label{fig:network}}
\end{figure}

For any initial position, the system must achieve formation, as described
by $\dq{\delta}_{i}$ in \eqref{eq:deltas}, \emph{anywhere} in the
space. The network topology, which is depicted in Figure~\ref{fig:network},
is a directed graph with a directed spanning tree, and does not require
to be strongly connected. For simulation, the numerical integration
of $\dq x_{i}$ is carried out, as presented in \citep{Adorno2017},
by the formula 
\begin{equation}
\dq x_{i}(t+\Delta t)=\exp\left(\dfrac{\Delta t}{2}{\displaystyle \dq{\xi}_{i}}\right)\dq x_{i}(t),
\end{equation}
where $\Delta t$ is the time interval of integration, and the exponential
map $\exp(\cdot)$ is given by \eqref{eq:expdg}. Furthermore, the
control input for each agent is calculated by using \eqref{eq:protocolxiuform}.

In the first simulation, five free-flying agents are considered (i.e.,
$n=5$) and the result is shown in Figure~\ref{fig:simulation},
in which the initial poses of the agents are randomly chosen and marked
by the bolder frame $\dq x_{i}(0)$, for $i=1,\ldots,5$, the initial
local opinion regarding the center of formation $\dq x_{c,i}(0)=\dq x_{i}(0)\dq{\delta}_{i}^{*}$
is the thinner frame, the trajectories executed by each agent are
shown by the continuous bolder lines, and the trajectory of the local
center of formation is shown by the thinner dotted line while achieving
consensus on a common center of formation. The final circular formation
is shown at the center of the figure. The state-trajectories for each
coefficient of $\dq y_{c,i}(t)=y_{2c,i}\imi+y_{3c,i}\imj+y_{4c,i}\imk+\dual(y_{6c,i}\imi+y_{7c,i}\imj+y_{8c,i}\imk)$
are shown in Figure~\ref{fig:simulation1y} as the agents achieve
output consensus, which by Corollary~\ref{cor:formation_with_twist_input}
implies that the system achieves formation.

\begin{figure}
\centering{}\includegraphics[viewport=0bp 0bp 800bp 397bp,width=0.5\columnwidth]{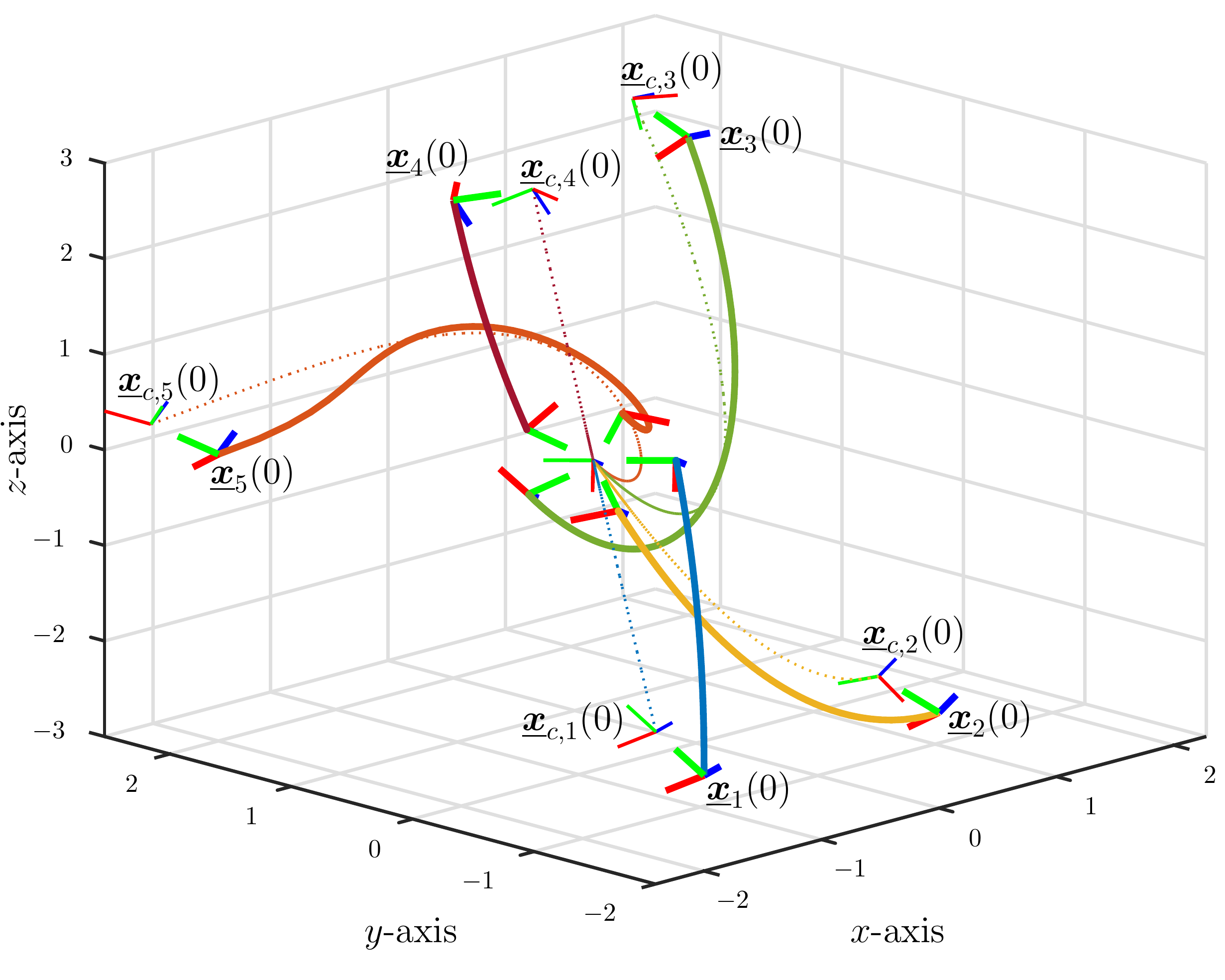}
\caption{\label{fig:simulation}Simulation for five agents in a circular formation.}
\end{figure}

\begin{figure}
\noindent \begin{centering}
\subfloat[$y_{2c,i}(t)$ for all agents.\label{fig:y2}]{\includegraphics[width=0.3\columnwidth]{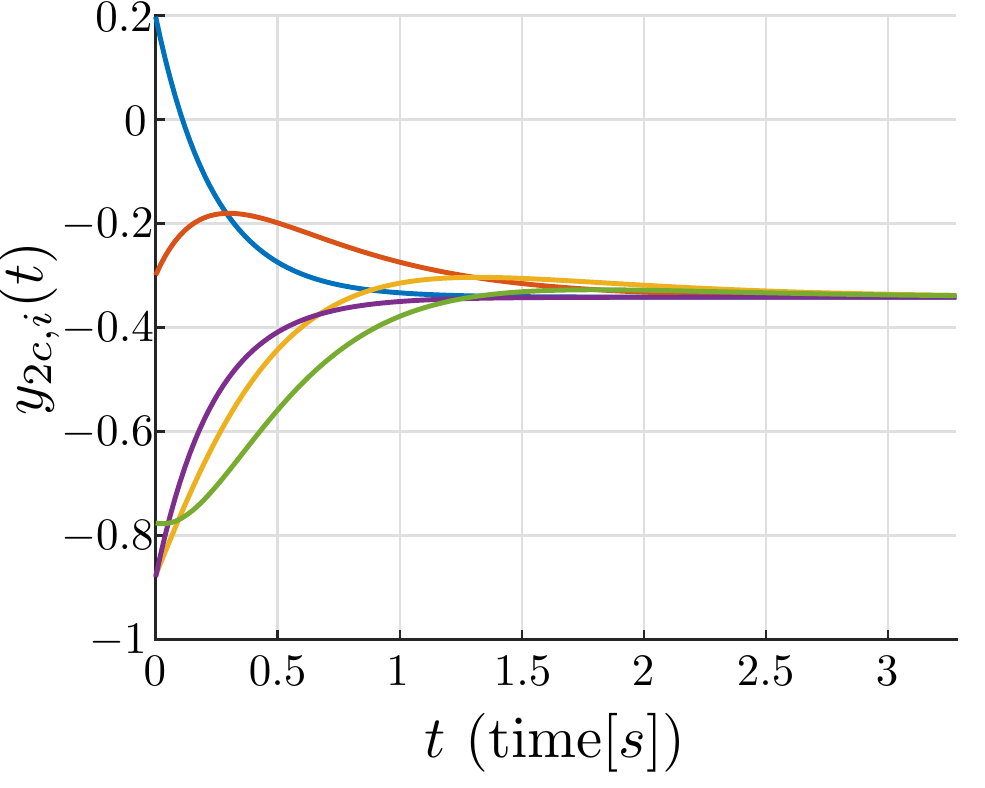}\lyxdeleted{Bruno Vilhena Adorno}{Sat Jun 15 00:05:32 2019}{ }

}\subfloat[$y_{3c,i}(t)$ for all agents.\label{fig:y3}]{\includegraphics[width=0.3\columnwidth]{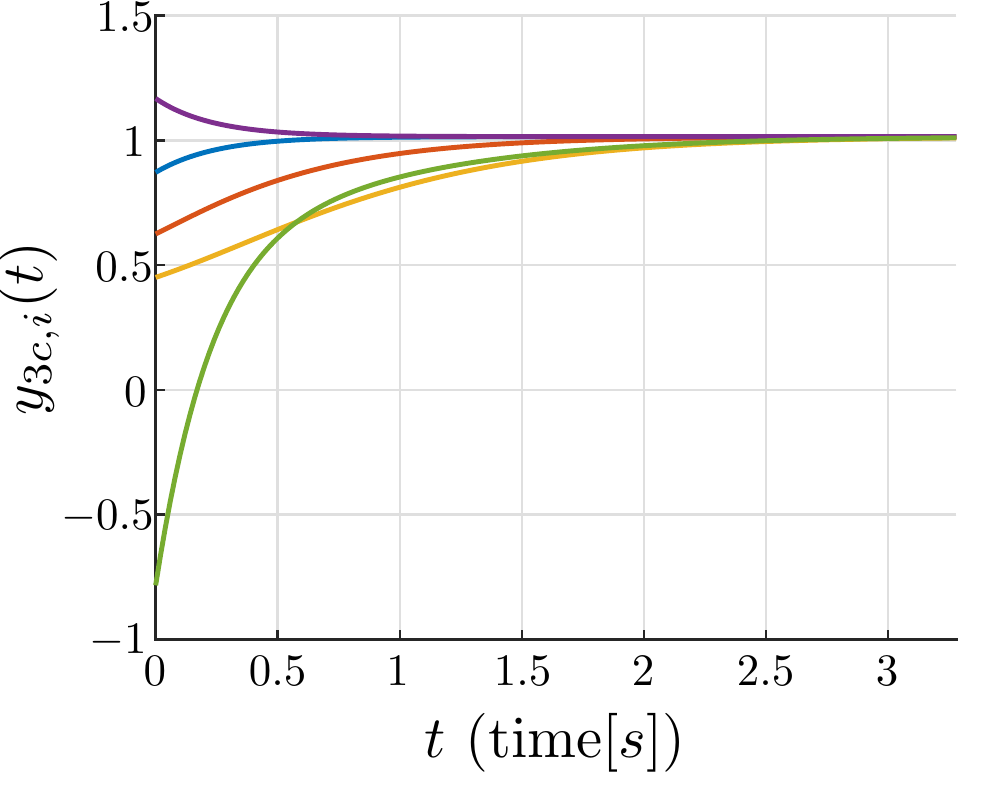}\lyxdeleted{Bruno Vilhena Adorno}{Sat Jun 15 00:05:32 2019}{ }

}\subfloat[$y_{4c,i}(t)$ for all agents.\label{fig:y4}]{\includegraphics[width=0.3\columnwidth]{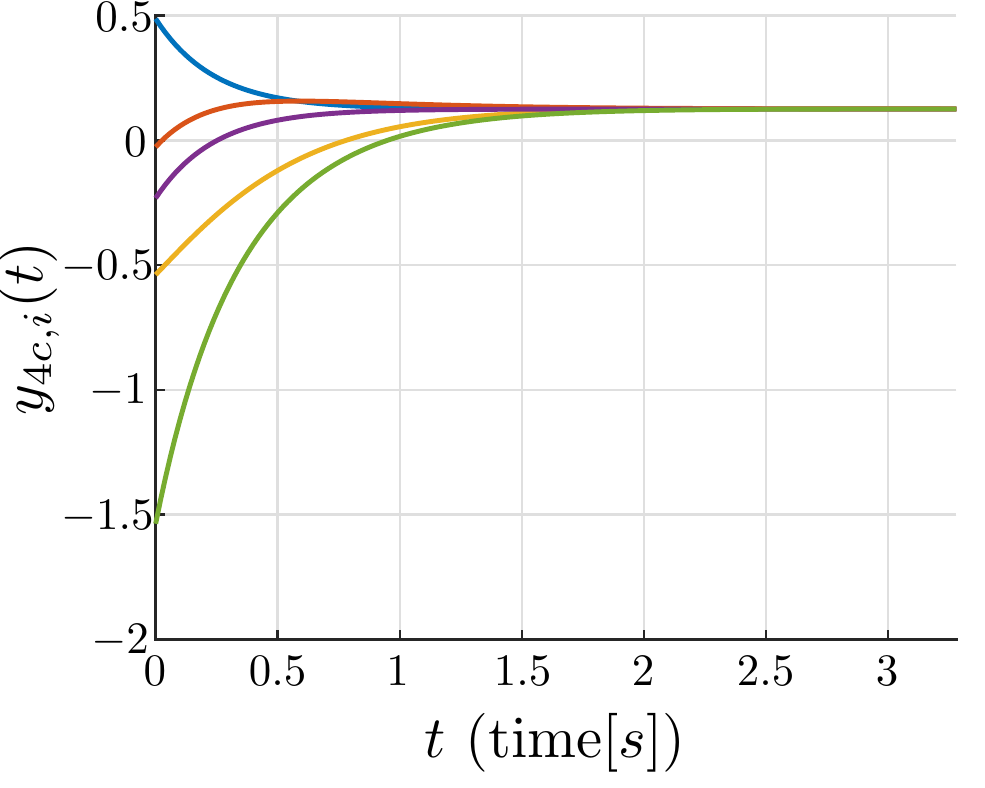}\lyxdeleted{Bruno Vilhena Adorno}{Sat Jun 15 00:05:32 2019}{ }

}
\par\end{centering}
\noindent \centering{}\subfloat[$y_{6c,i}(t)$ for all agents.\label{fig:y6}]{\includegraphics[width=0.3\columnwidth]{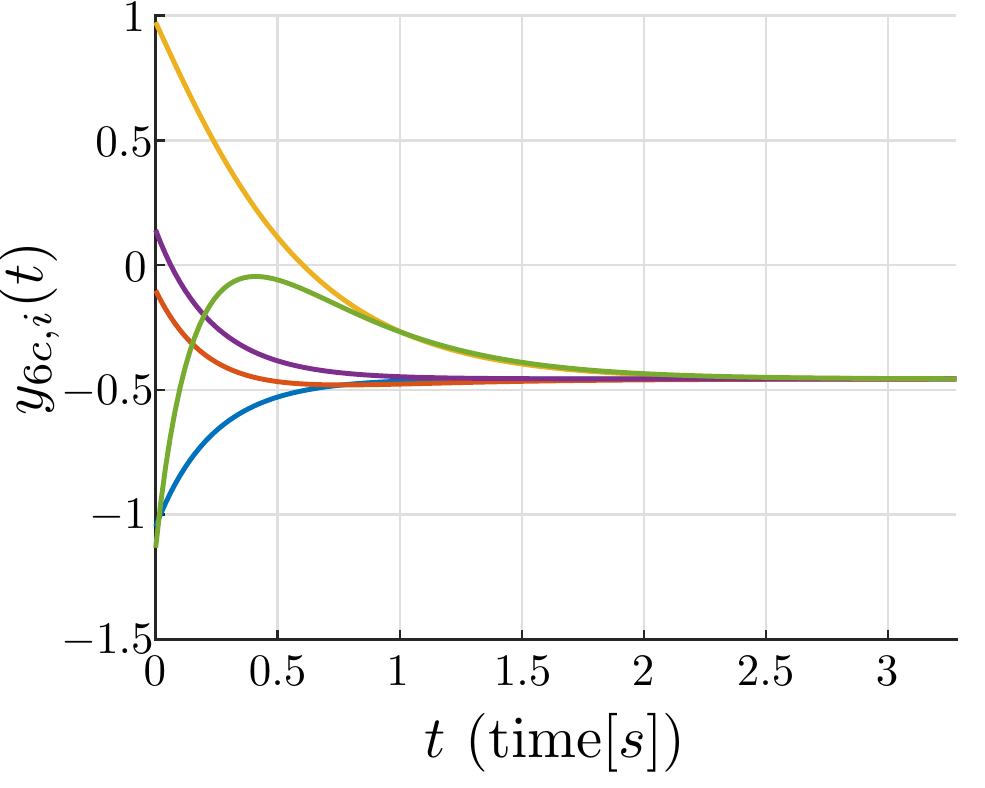}

}\subfloat[$y_{7c,i}(t)$ for all agents.\label{fig:y7}]{\includegraphics[width=0.3\columnwidth]{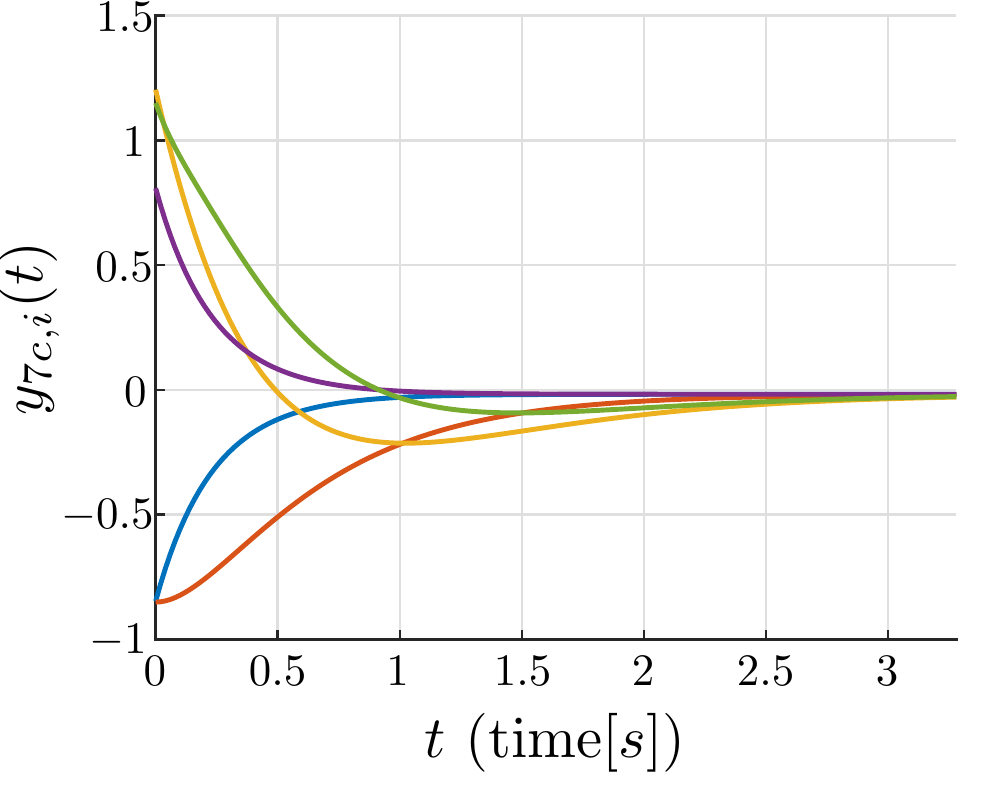}\lyxdeleted{Bruno Vilhena Adorno}{Sat Jun 15 00:05:32 2019}{ }

}\subfloat[$y_{8c,i}(t)$ for all agents.\label{fig:y8}]{\includegraphics[width=0.3\columnwidth]{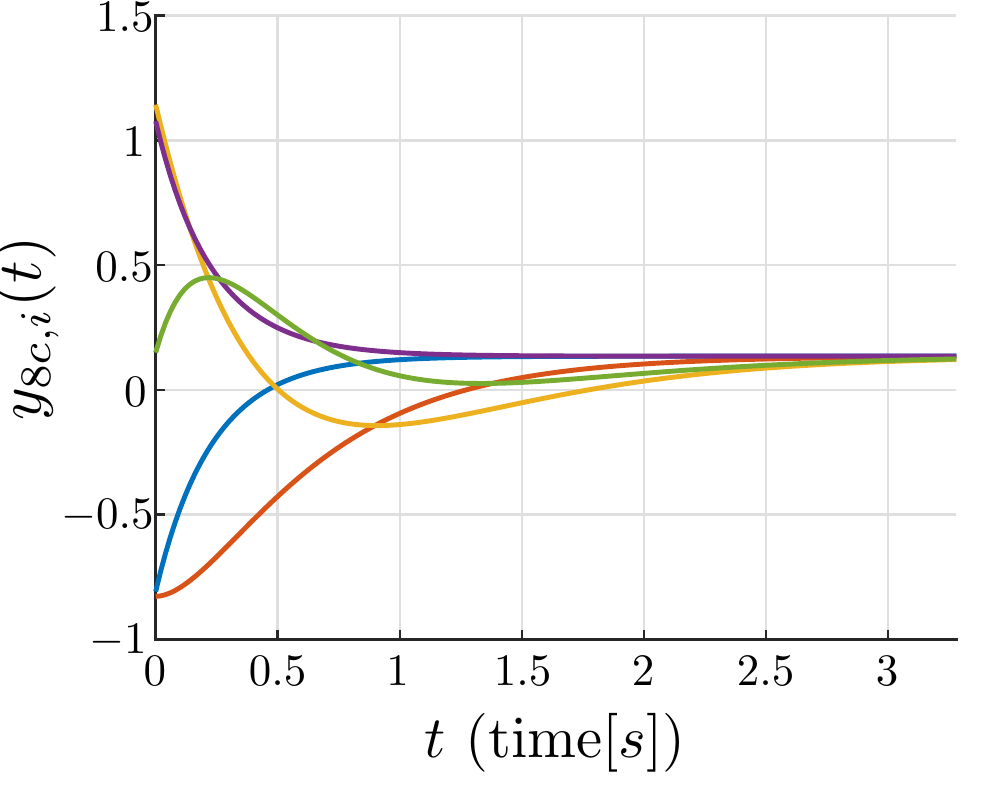}\lyxdeleted{Bruno Vilhena Adorno}{Sat Jun 15 00:05:32 2019}{ }

}\caption{\label{fig:simulation1y} Time-evolution for each coefficient of $\protect\dq y_{c,i}=y_{2c,i}\protect\imi+y_{3c,i}\protect\imj+y_{4c,i}\protect\imk+\protect\dual(y_{6c,i}\protect\imi+y_{7c,i}\protect\imj+y_{8c,i}\protect\imk)$
in the circular formation.}
\end{figure}

Finally, in order to show scalability and validate the time-varying
formation decentralized controller, a second simulation is carried
out with 100 agents. First we generate a random fixed directed network
containing a directed spanning tree, and then we randomly generate
the initial poses $\dq x_{i}\left(0\right)$, $\forall i\in\left\{ 1,\ldots,100\right\} $.
The random fixed directed network containing a directed spanning tree
is obtained according to the following procedure. First we randomly
generate a $100\times100$ matrix and set to zero all elements of
the main diagonal. The resulting matrix is defined as the adjacency
matrix $\mymatrix A$ if the corresponding Laplacian matrix has at
most one zero eigenvalue and all the others have positive real part,
because such matrix corresponds to a topology that contains a directed
spanning tree \citep[Cor. 2.5]{ren2008distributed}. If the corresponding
Laplacian matrix does not contain at most one zero eigenvalue or has
one or more eigenvalues with negative real part, the adjacency matrix
is discarded and the procedure is repeated until an appropriate matrix
is generated.

The goal is to reach a formation given by 
\begin{align}
\dq{\delta}_{i}\left(t\right) & =\quat r_{x,i}\quat r_{z,i}\dq p\left(t\right),\label{eq:time-varying-delta}
\end{align}
where $\quat r_{z,i}=\quat r_{\delta,i}$ as in \eqref{eq:delta2},
$\quat r_{x,i}=\cos\left(\phi_{\delta,i}/2\right)+\imi\sin\left(\phi_{\delta,i}/2\right)$,
with $\phi_{\delta,i}$ given by \eqref{eq:delta1}, $\dq p\left(t\right)=1+\dual0.5\left(-\imi-\imj\right)\left(2+\cos\left(8\pi t\right)\right)$
and $\dot{\dq p}=-\dual0.5\left(-\imi-\imj\right)8\pi\sin\left(8\pi t\right)$,
with $t\in[0,0.25]\unit{s}$.

\begin{figure}[th]
\noindent \begin{centering}
\subfloat[$t=0\unit{ms}$]{\noindent \begin{centering}
\includegraphics[width=0.32\textwidth]{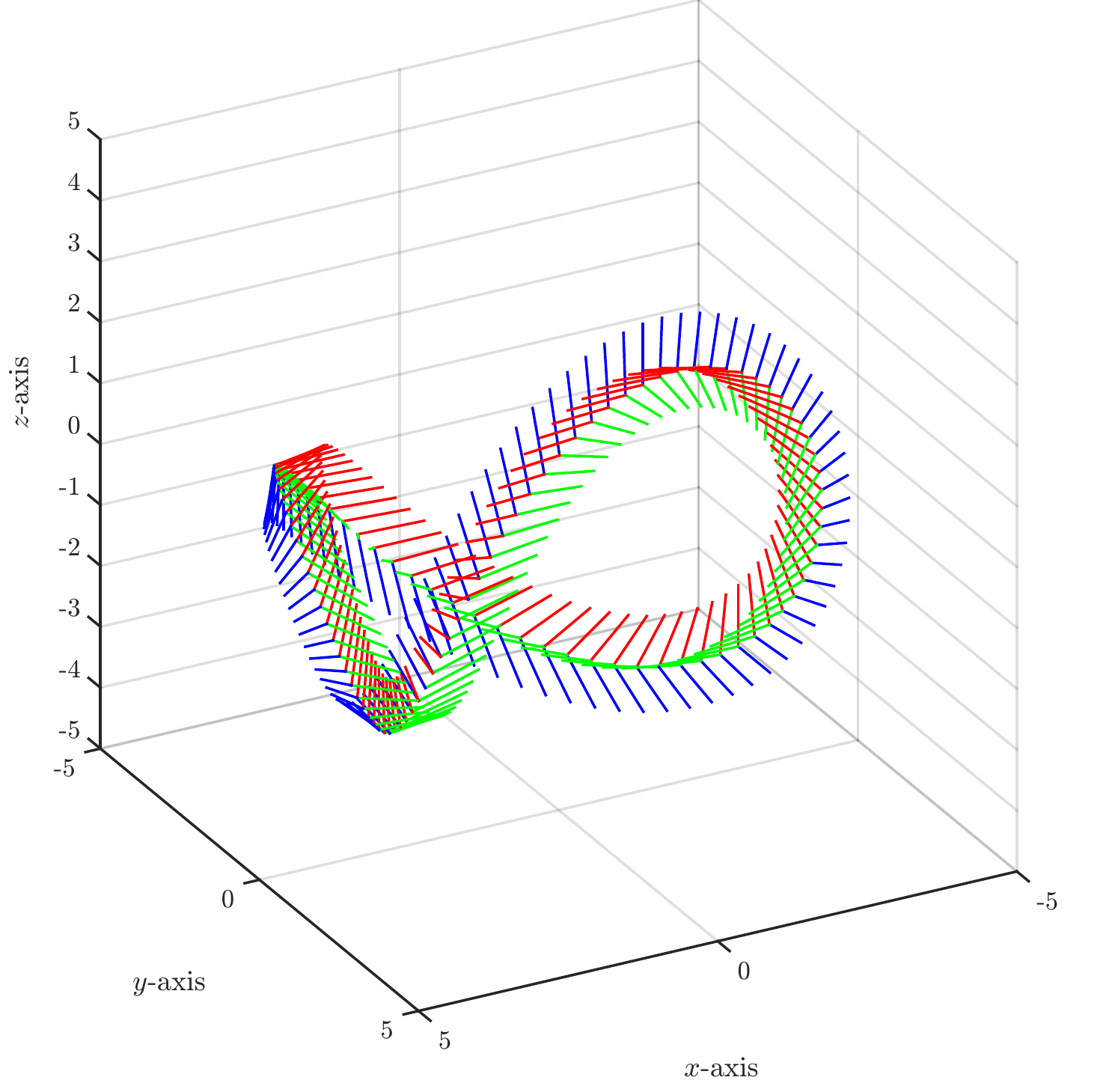}
\par\end{centering}
}\subfloat[$t=30\unit{ms}$]{\noindent \begin{centering}
\includegraphics[width=0.32\textwidth]{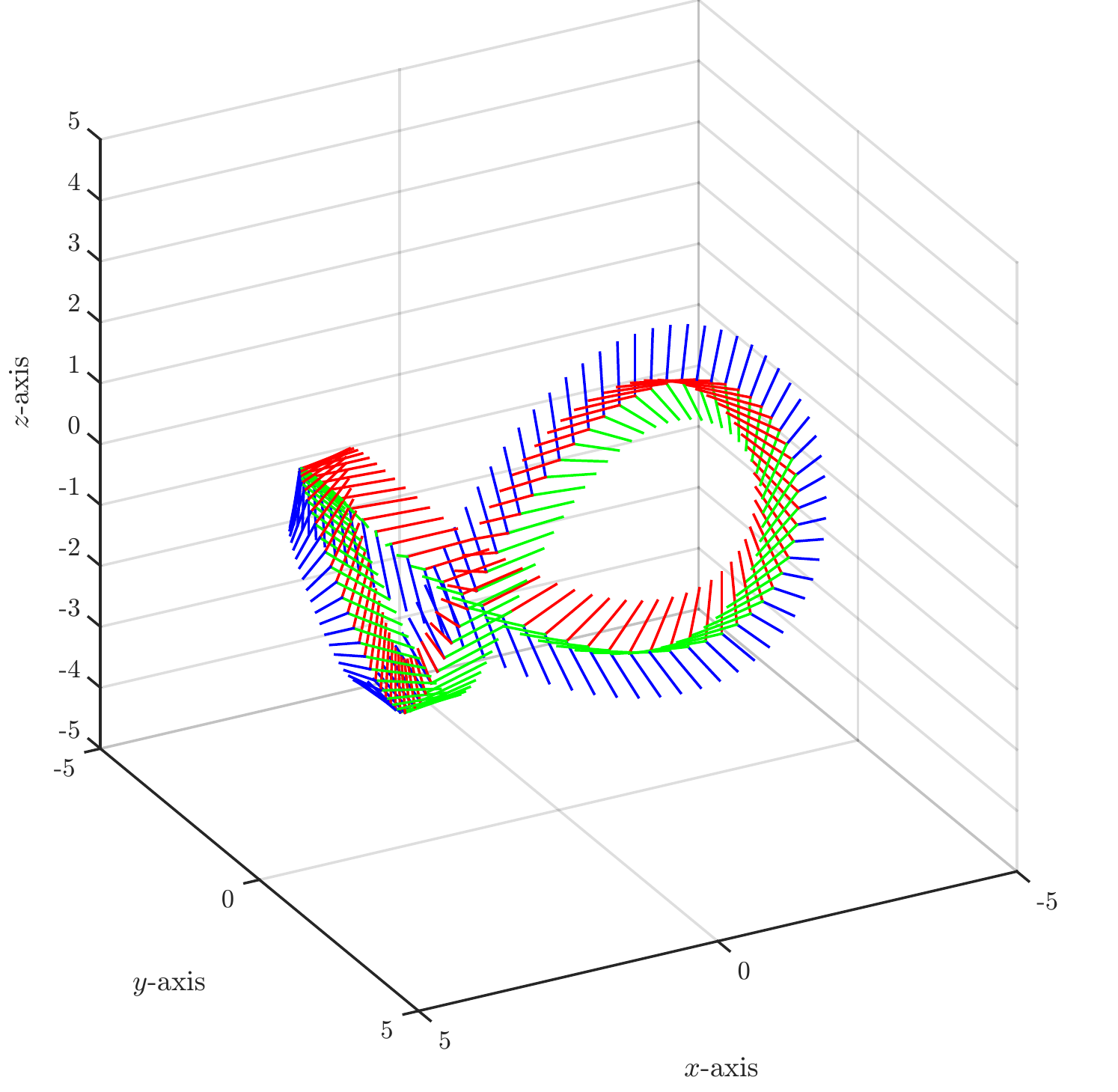}
\par\end{centering}
}\subfloat[$t=75\unit{ms}$]{\noindent \begin{centering}
\includegraphics[width=0.32\textwidth]{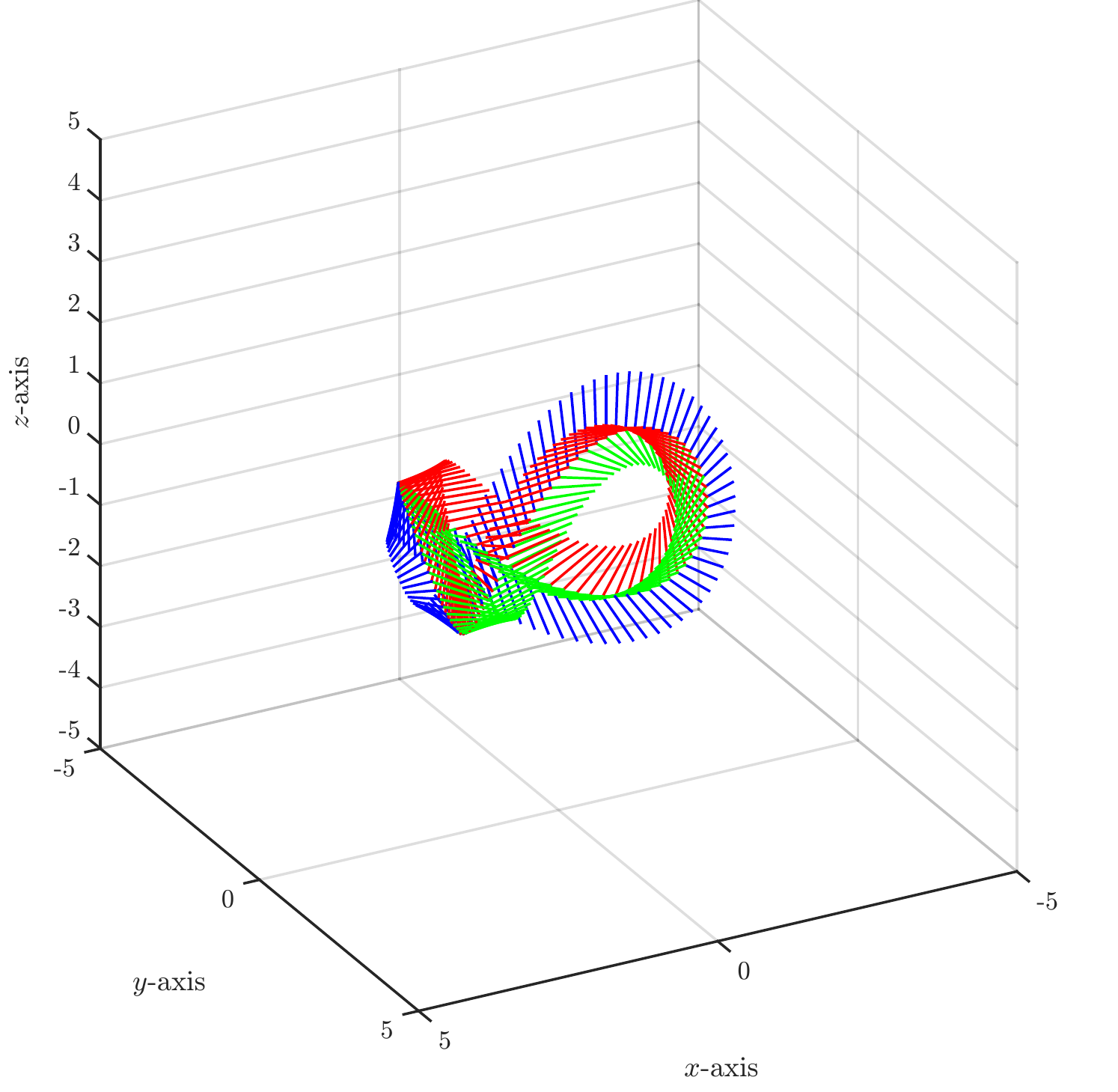}
\par\end{centering}
}
\par\end{centering}
\noindent \begin{centering}
\subfloat[$t=0\unit{ms}$]{\noindent \begin{centering}
\includegraphics[width=0.32\textwidth]{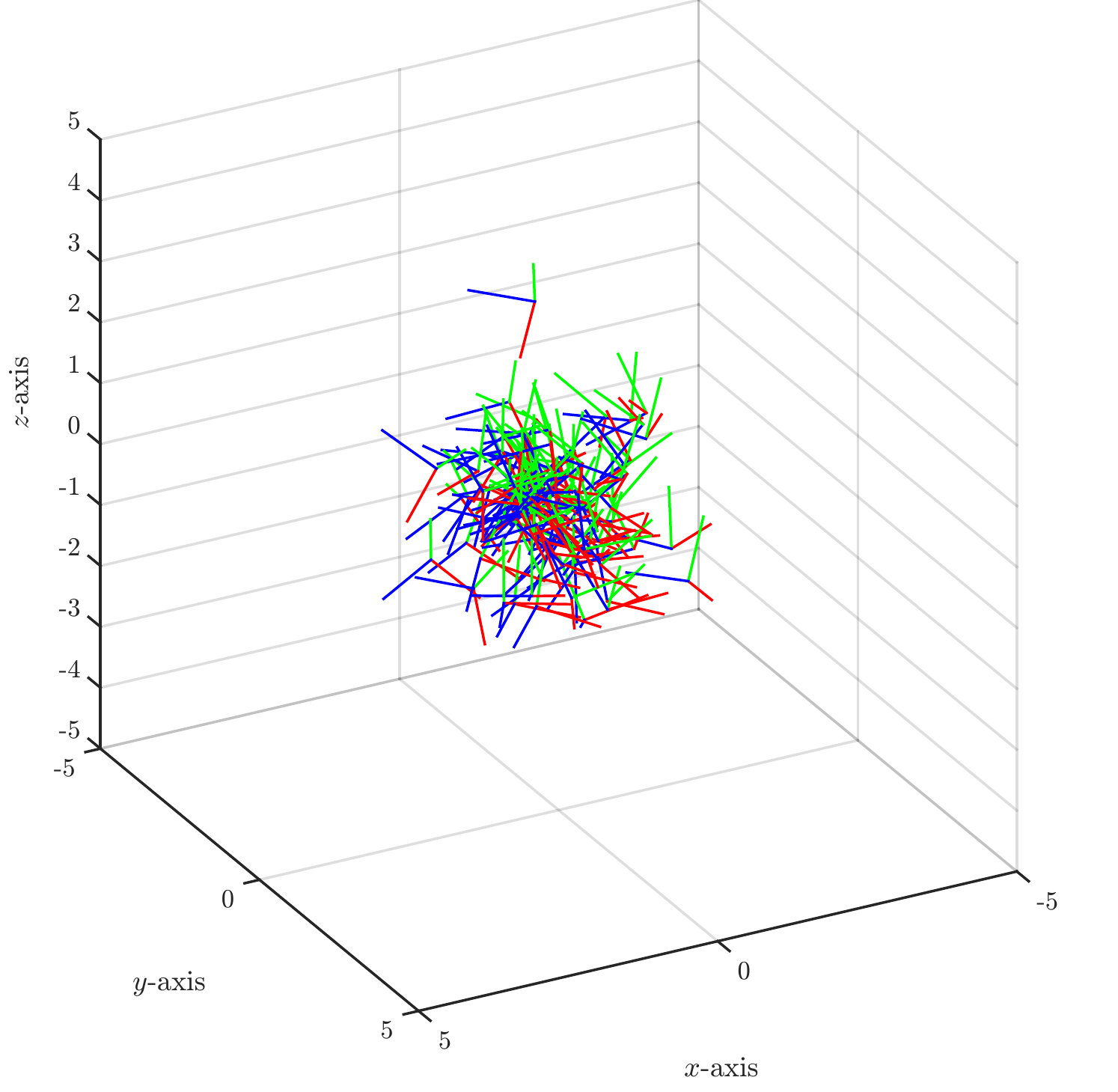}
\par\end{centering}
}\subfloat[$t=30\unit{ms}$]{\noindent \begin{centering}
\includegraphics[width=0.32\textwidth]{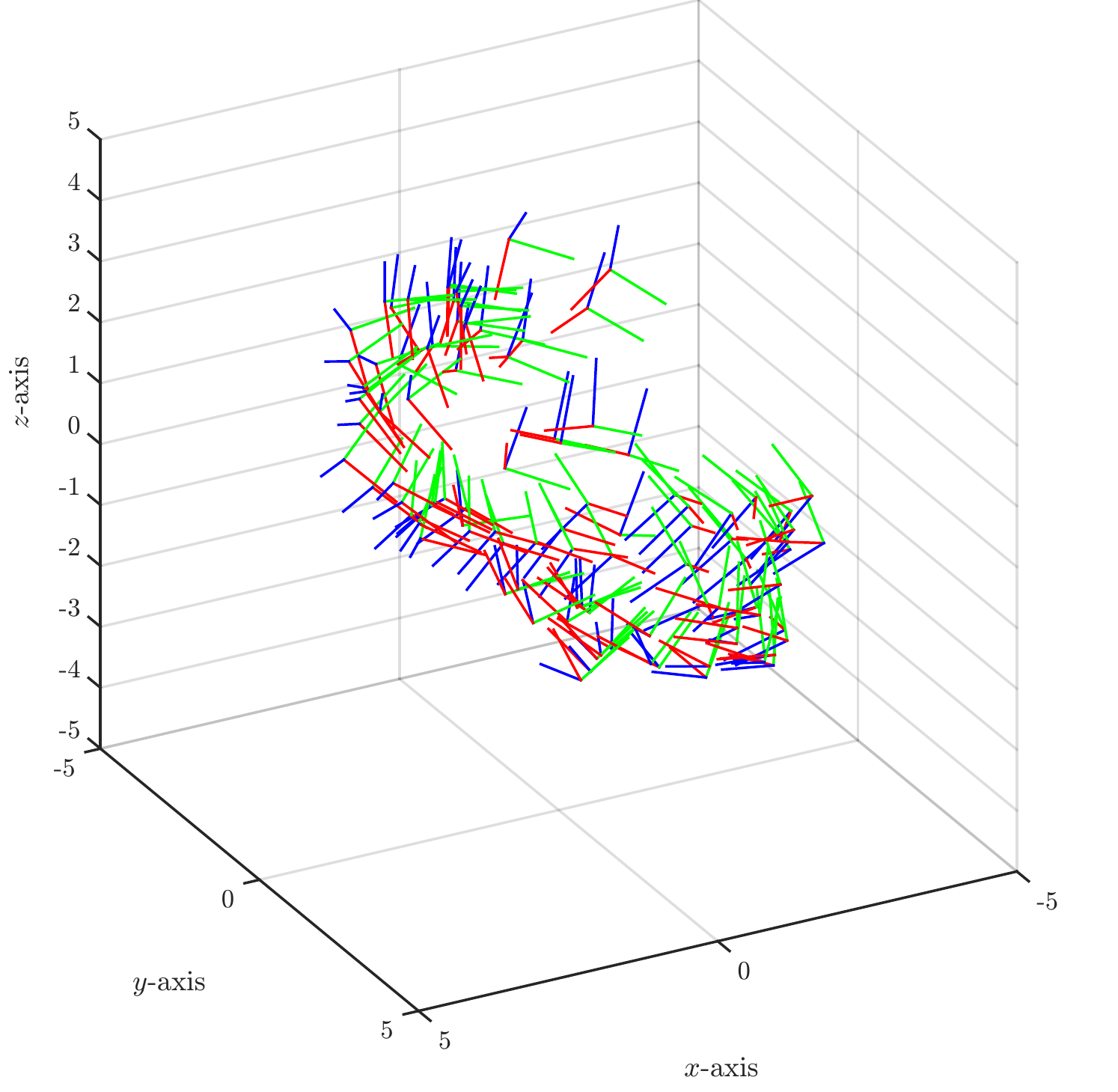}
\par\end{centering}
}\subfloat[$t=75\unit{ms}$]{\noindent \begin{centering}
\includegraphics[width=0.32\textwidth]{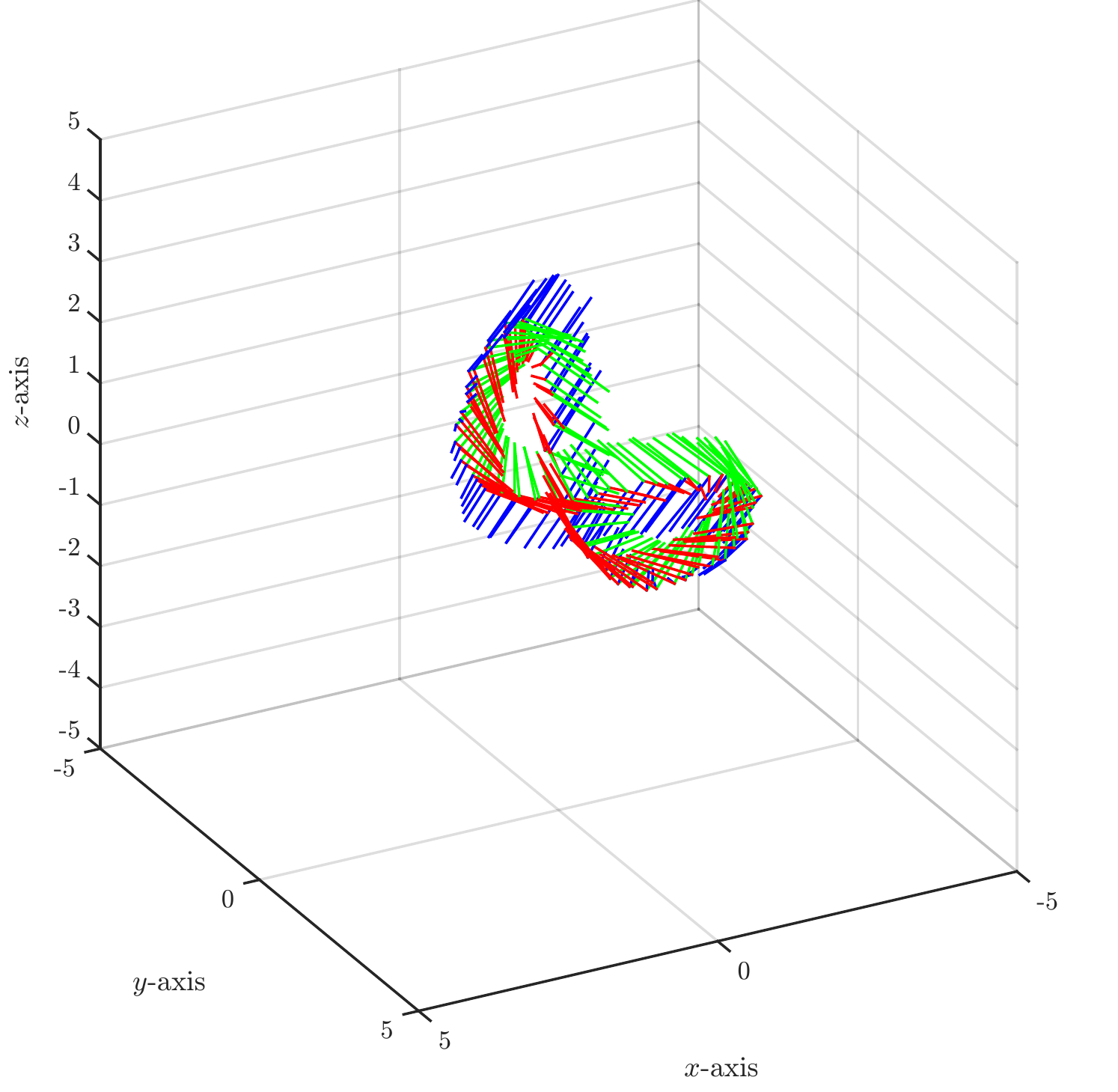}
\par\end{centering}
}
\par\end{centering}
\caption{Simulation for 100 agents in a time-varying formation with $\protect\dq{\delta}_{i}\left(t\right)$
given by \eqref{eq:time-varying-delta}. The \emph{upper} row shows
the desired formation and the \emph{lower} row shows the executed
one. From $0\unit{ms}$ to $75\unit{ms}$, the desired formation is
shrinking. When $t=75\unit{ms}$, the system has almost achieved the
desired formation.\label{fig:Simulation-for-100-shrink}}
\end{figure}

\begin{figure}[th]
\noindent \begin{centering}
\subfloat[$t=150\unit{ms}$]{\noindent \begin{centering}
\includegraphics[width=0.32\textwidth]{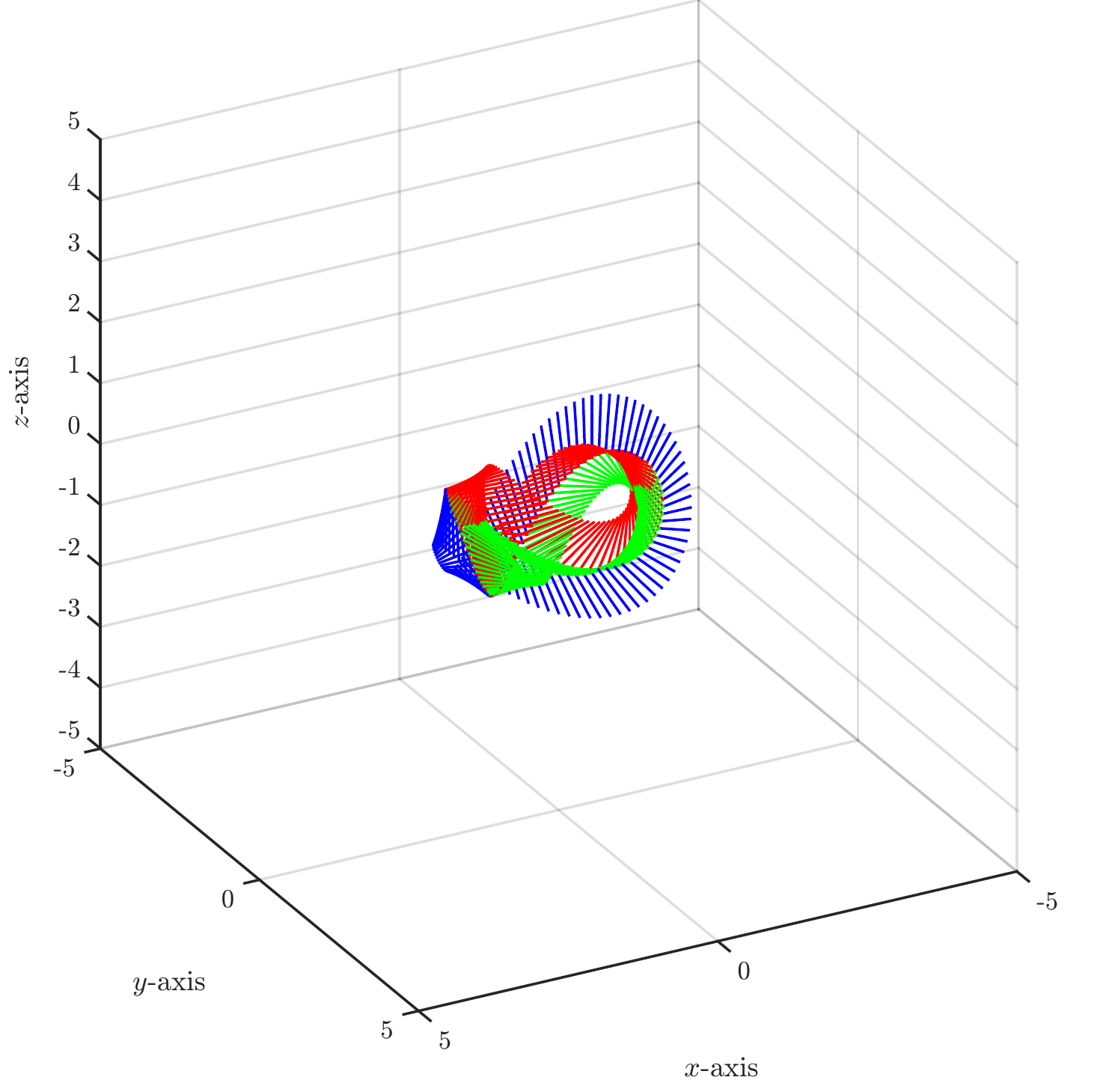}
\par\end{centering}
}\subfloat[$t=200\unit{ms}$]{\noindent \begin{centering}
\includegraphics[width=0.32\textwidth]{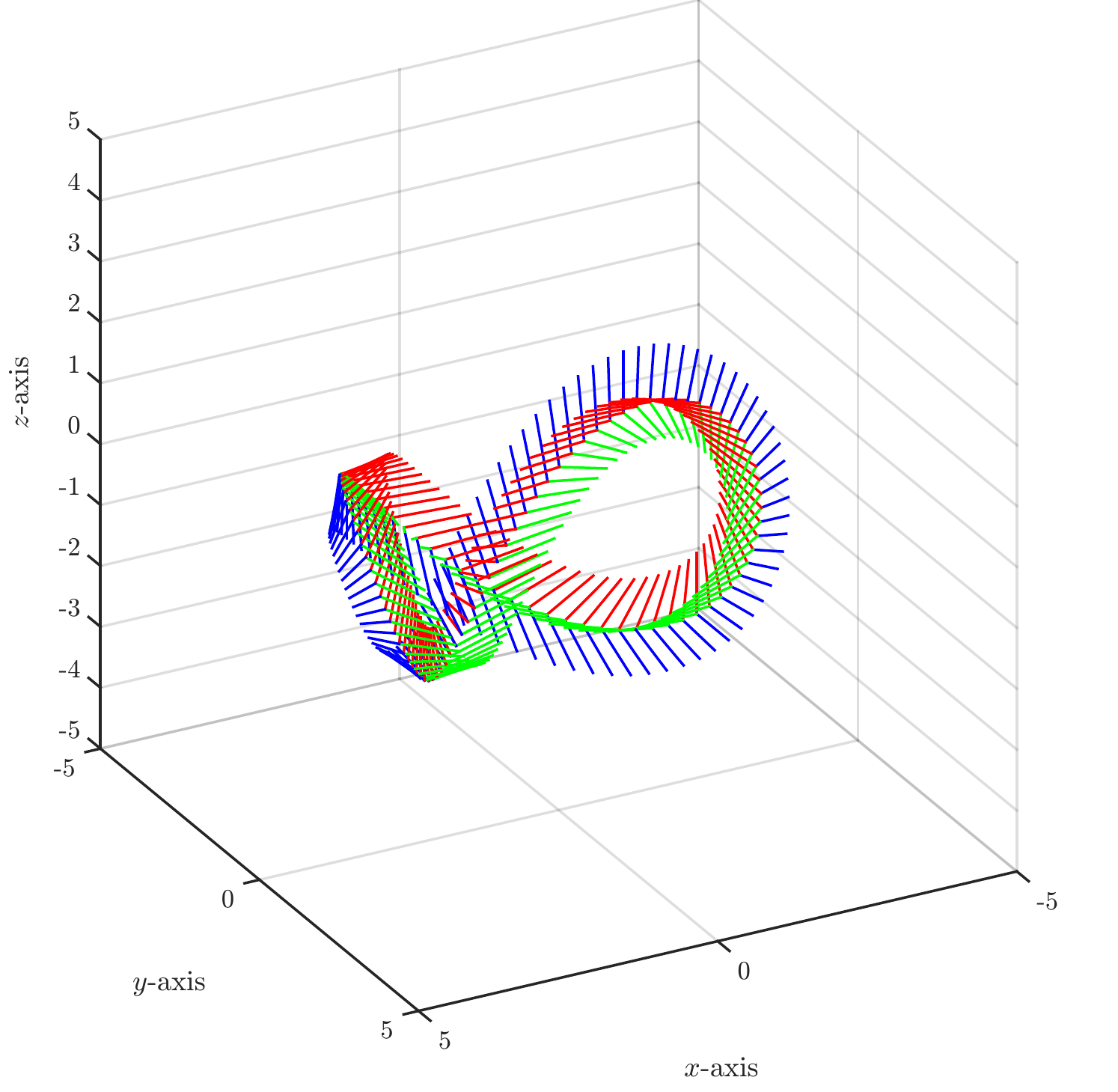}
\par\end{centering}
}\subfloat[$t=250\unit{ms}$]{\noindent \begin{centering}
\includegraphics[width=0.32\textwidth]{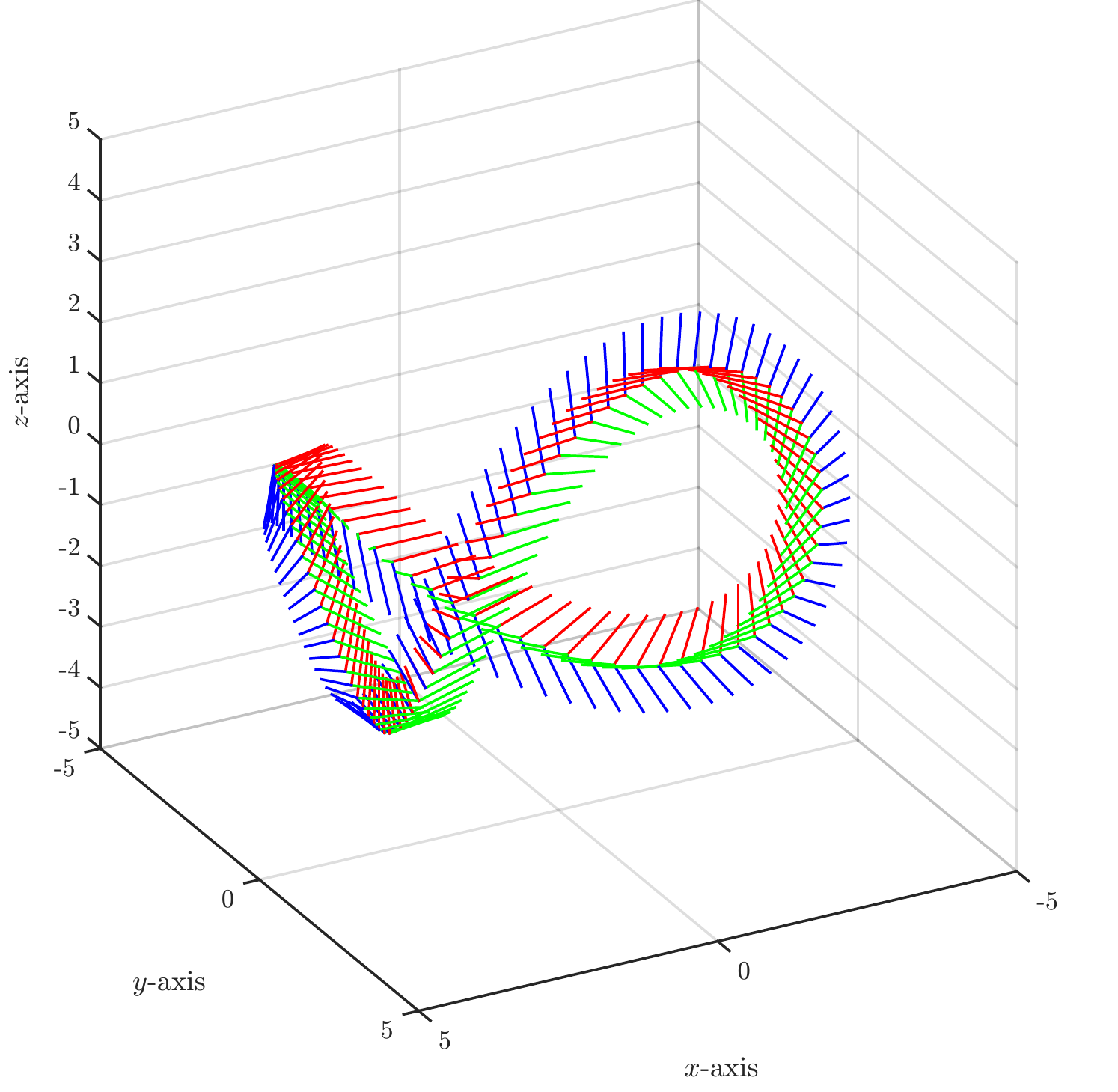}
\par\end{centering}
}
\par\end{centering}
\noindent \begin{centering}
\subfloat[$t=150\unit{ms}$]{\noindent \begin{centering}
\includegraphics[width=0.32\textwidth]{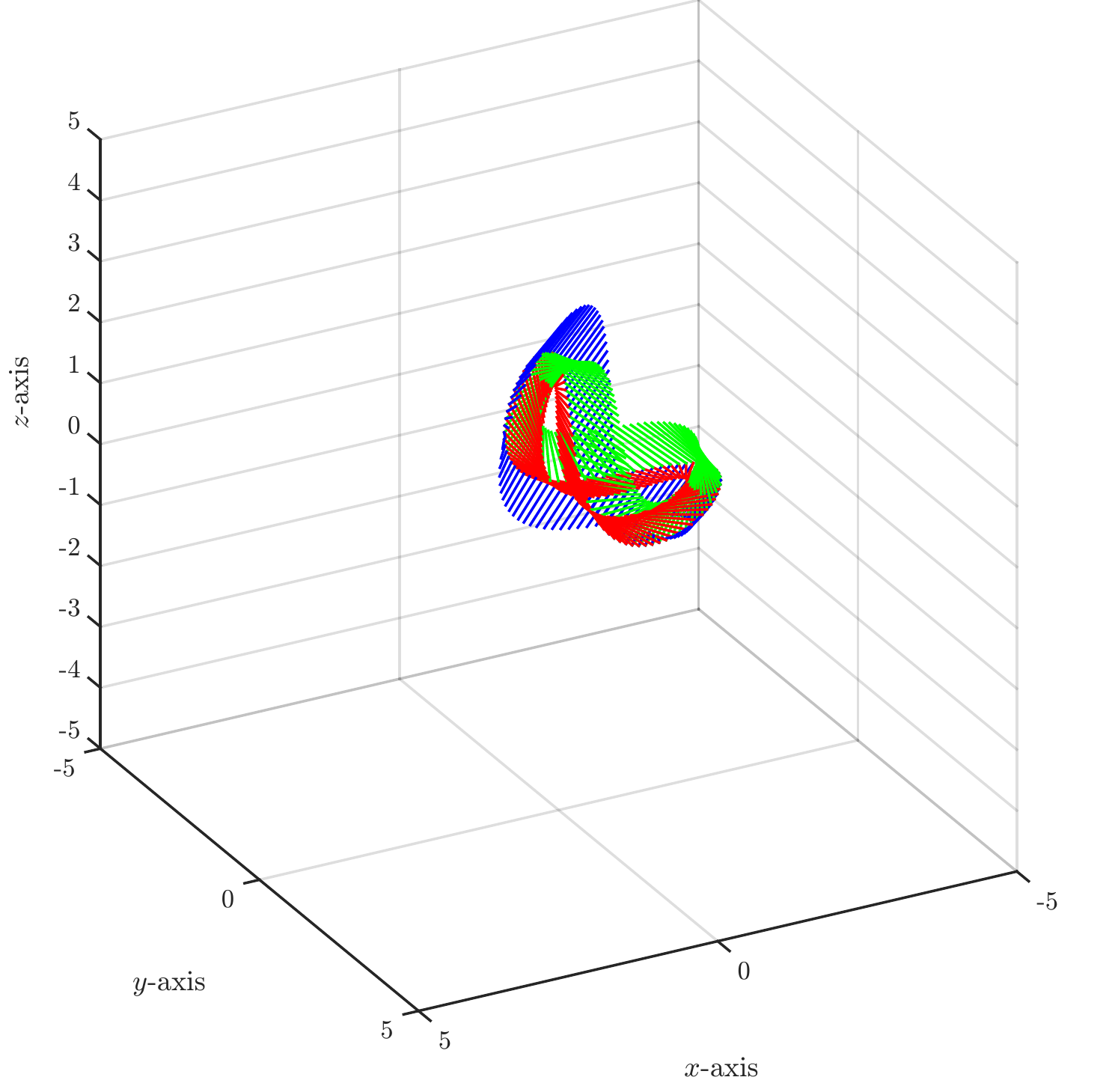}
\par\end{centering}
}\subfloat[$t=200\unit{ms}$]{\noindent \begin{centering}
\includegraphics[width=0.32\textwidth]{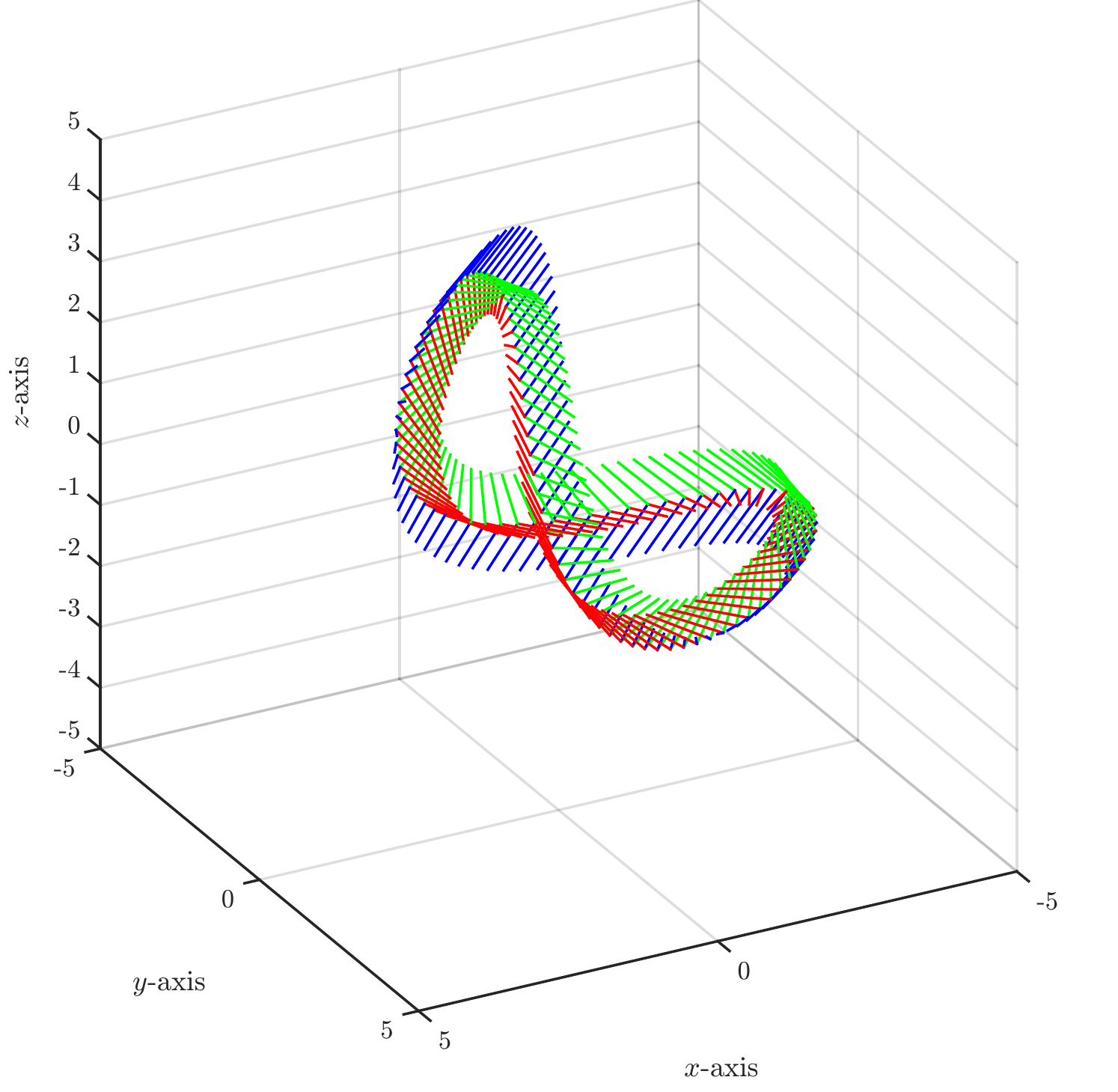}
\par\end{centering}
}\subfloat[$t=250\unit{ms}$]{\noindent \begin{centering}
\includegraphics[width=0.32\textwidth]{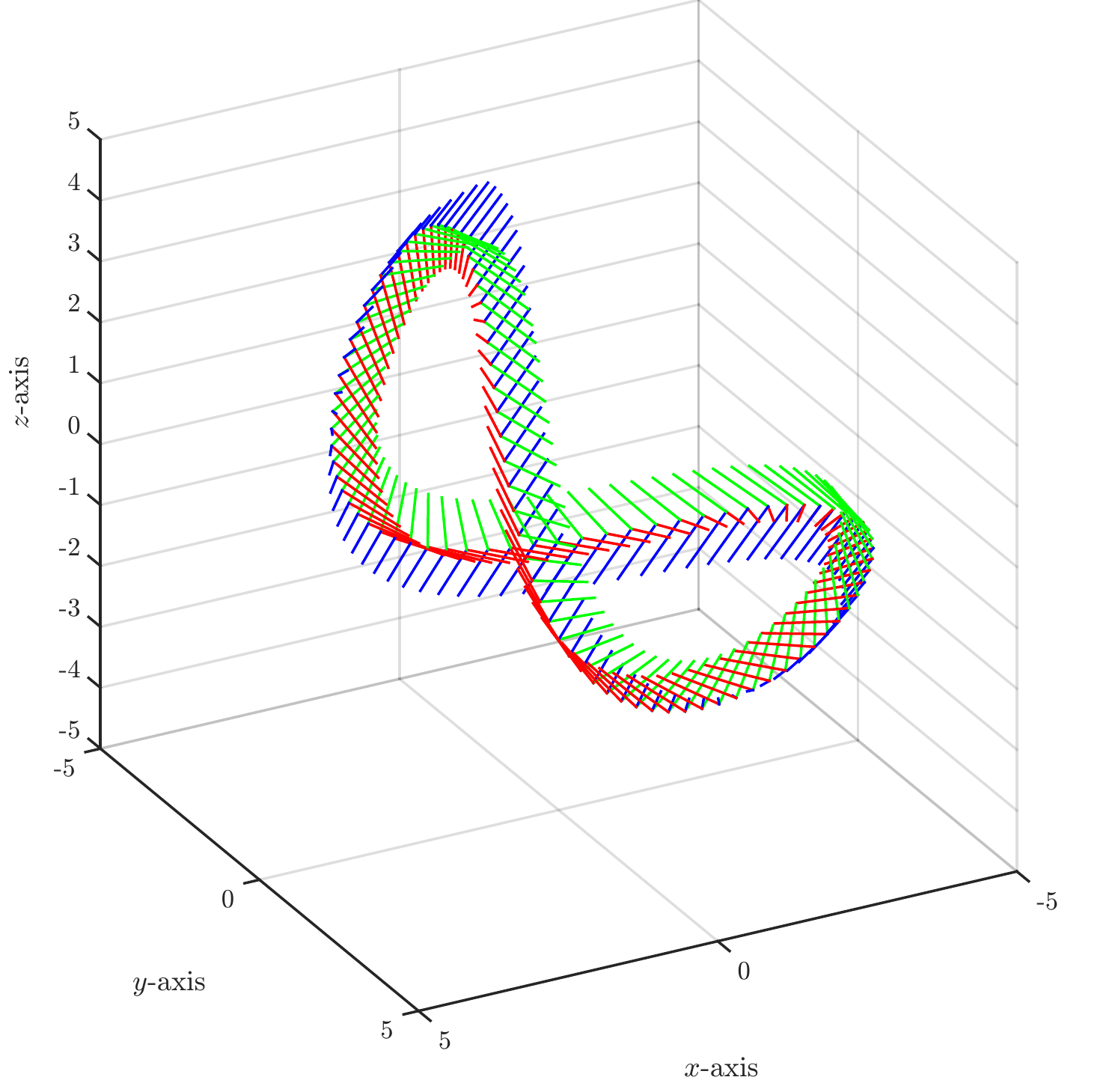}
\par\end{centering}
}
\par\end{centering}
\caption{\label{fig:Simulation-for-100-expand}Simulation for 100 agents in
a time-varying formation with $\protect\dq{\delta}_{i}\left(t\right)$
given by \eqref{eq:time-varying-delta}. The \emph{upper} row shows
the desired formation and the \emph{lower} row shows the executed
one. From $150\unit{ms}$ to $250\unit{ms}$, the desired formation
is expanding. When $t=150\unit{ms}$, the system has already achieved
the desired formation and from this point forward it tracks the time-varying
formation very closely.}
\end{figure}

The simulation is shown in Figures \ref{fig:Simulation-for-100-shrink}
and \ref{fig:Simulation-for-100-expand}. From $0\unit{ms}$ to $75\unit{ms}$,
the desired formation is shrinking, and when $t=75\unit{ms}$, the
system has almost achieved the desired formation. From $150\unit{ms}$
to $250\unit{ms}$, the desired formation is expanding, and when $t=150\unit{ms}$,
the system has already achieved the desired formation. From this point
forward it tracks the time-varying formation very closely. This behavior
can also be seen in Figure~\ref{fig:Time-evolution-time-varying-formation},
which shows the time evolution of each coefficient of the agents'
outputs. It indicates that after $100\unit{ms}$ all agents have agreed
on the desired center of formation, which implies that they track
the time-varying formation without error. Since the agents agree on
a center of formation by means of local information exchange, the
formation can happen anywhere in space, as both Figures \ref{fig:Simulation-for-100-shrink}
and \ref{fig:Simulation-for-100-expand} show.

\begin{figure}[th]
\noindent \begin{centering}
\includegraphics[width=1\textwidth]{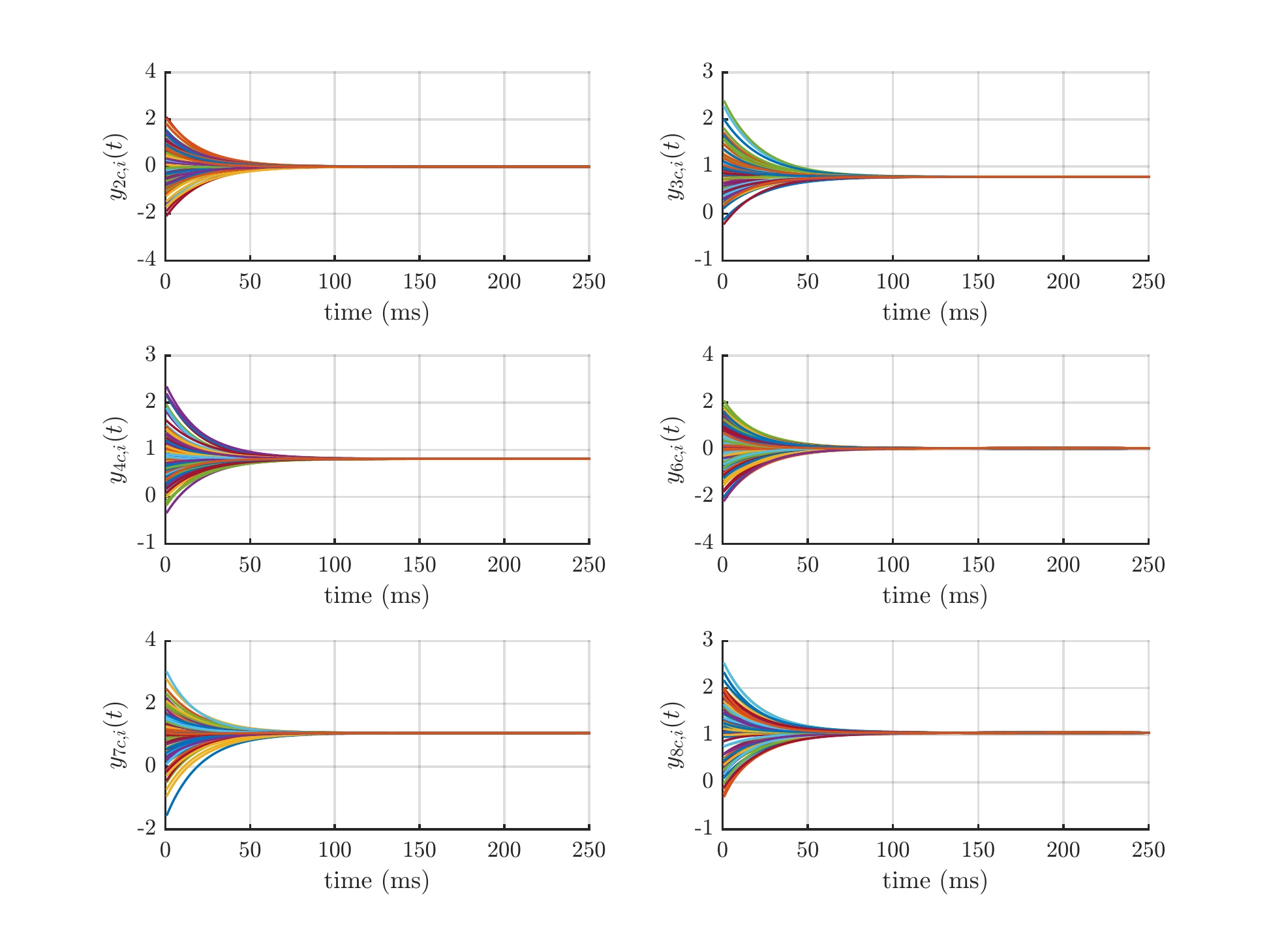}
\par\end{centering}
\caption{Time-evolution for each coefficient of $\protect\dq y_{c,i}=y_{2c,i}\protect\imi+y_{3c,i}\protect\imj+y_{4c,i}\protect\imk+\protect\dual(y_{6c,i}\protect\imi+y_{7c,i}\protect\imj+y_{8c,i}\protect\imk)$
when performing the time-varying formation described by \eqref{eq:time-varying-delta}
and shown in Figures \ref{fig:Simulation-for-100-shrink} and \ref{fig:Simulation-for-100-expand}.\label{fig:Time-evolution-time-varying-formation}}
\end{figure}

\subsection{Experiment with two holonomic mobile manipulators\label{sub:Experiment-with-two}}

An experimental evaluation is important when proposing new methods
that are aimed at being implemented in real multi-robot systems because
several real world phenomena are usually disregarded when developing
the theory or even in numerical simulations. Some important real issues
are actuator saturation, uncertain pose measurements provided by the
real sensors, unmodeled dynamics, sampling and quantization errors
associated with the discrete implementation, packet loss and time
delay related to the real communication infrastructure. Therefore,
in this section we present an experiment with actual robots.\footnote{See accompanying video.}

It is considered the multi-agent system composed of two mobile manipulators
with holonomic base, namely KUKA youBots \citep{kukas}. These robots
are modeled using the whole-body kinematics modeling presented in
\ref{sec:wholebody}. Each robot is equipped with an onboard Mini-ITX
computer, with a processor Intel AtomTM Dual Core D510 ($1$M Cache,
$2\times1.66$ GHz), 2GB single-channel DDR2 667MHz memory, 32GB SSD
drive, and wireless connection by means of a usb-connected Vonets
Wireless Wifi Vap11g card. The experiments were performed at CSAIL,
MIT, in a laboratory equipped with a Vicon motion capture system that
provides, via wireless communication, the local pose for each robot
at 50Hz. The control algorithm was implemented using the Robot Operating
System (ROS) and the C++ API of DQ Robotics. ROS is a meta-operating
system that provides a structured communications layer fundamentally
based on: nodes, which contain the processes performing the computation
of robotics algorithms; messages, which are a strictly typed data
structure used by nodes to communicate with other nodes; and topics,
which are the communication channels used by publisher nodes to send
messages and by subscriber nodes to receive messages \citep{quigley2009ros}.
This framework makes it easier the task of implementing algorithms
in real robotic platforms as it provides a high level hardware abstraction
and a set of libraries, drivers, and tools to help the developer.

We have elaborated a collaborative manipulation scenario in which
the multi-agent system is composed of the two mobile manipulators
and a box to be transported inside the workspace. The formation task
is divided in two subtasks. The first one consists of a pre-grasping
formation, where the robots gather around a box, which is represented
by a static virtual leader, which corresponds to \textbf{Agent 3}
in Figure~\ref{fig:network2}. In the second subtask, the robots
grasp the box and move it around the workspace. In this case, the
agents have to follow a dynamic virtual leader, as they have to move
the box. In both subtasks, the control input for each mobile manipulator
is given by \eqref{eq:low level input}.

\begin{figure}[tbh]
\centering \begin{tikzpicture}[scale=1,auto=left,every node/.style=circle]
	\node[circle,draw,scale=.8] (n3) at (2,2)  {3};
	\node[circle,draw,scale=.8] (n1) at (1,0)  {1};
	\node[circle,draw,scale=.8] (n2) at (3,0)  {2};
	
	\pgfsetarrows{latex-latex} \foreach \from/\to in {n1/n2}
	\draw[line width=1pt] (\from) -- (\to);
	\pgfsetarrows{-latex} \foreach \from/\to in {n3/n1}
	\draw[line width=1pt] (\from) -- (\to);                 
	\end{tikzpicture}

\caption{Network topology for the experiment with two mobile manipulators.
Nodes 1 and 2 represent each robot, respectively, and node 3 represents
the virtual agent (i.e., the box).\label{fig:network2}}
\end{figure}
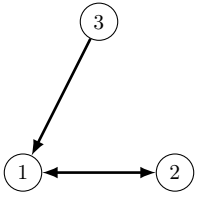

The two robots are able to send information to each other and the
box acts as a third virtual leader agent providing an output reference
related to the desired center of formation. This leader is an agent
that provides information without listening to other agents and without
executing the consensus protocol to update the output reference. The
whole system is modeled by the network topology shown in Figure~\ref{fig:network2},
where node 3 is the virtual agent used to generate the reference for
the desired formation, and nodes 1 and 2 are the mobile manipulators.
By using that topology, \textbf{Agent 3} provides the reference about
the desired center of formation only to \textbf{Agent 1}.

We use a Multi-Master ROS architecture \citep{juan2015multi} to implement
a distributed architecture. This is shown in Figure~\ref{fig:ros-architecture},
where the gray circles refer to the nodes running on each independent
agent and the square white boxes are the shared topics, which are
the communication channels in the ROS architecture. In one fixed computer,
which is responsible for the localization system, the poses of the
agents' bases, namely \texttt{pose\_base\_1} and \texttt{pose\_base\_2},
are provided by the Vicon motion capture system and made available
through ROS topics that any agent on the system can have access. Furthermore,
this same computer is responsible for the role of the virtual \textbf{Agent
3} (the box), providing information about the center of formation,
\texttt{output\_pose\_3}, as well as providing the information for
every agent about their relative pose $\dq{\delta}_{i}$ with respect
to the center of formation, namely \texttt{relative\_pose\_1} and
\texttt{relative\_pose\_2}. Separately, each agent runs its own ROS
master and shares topics with the agents and the fixed computer using
the Multi-Master ROS architecture. Each agent is able to access its
own local information regarding its end-effector pose and also its
formation parameter $\dq{\delta}_{i}$, which is provided by the fixed
computer. Furthemore, the agents exchange data with their neighbors---more
specifically \texttt{output\_pose\_1}, \texttt{output\_pose\_2} and
\texttt{output\_pose\_3}---according to the graph topology shown
in Figures \ref{fig:network2} and \ref{fig:ros-architecture}.

\begin{figure}
\noindent \centering{}\includegraphics[width=0.6\columnwidth]{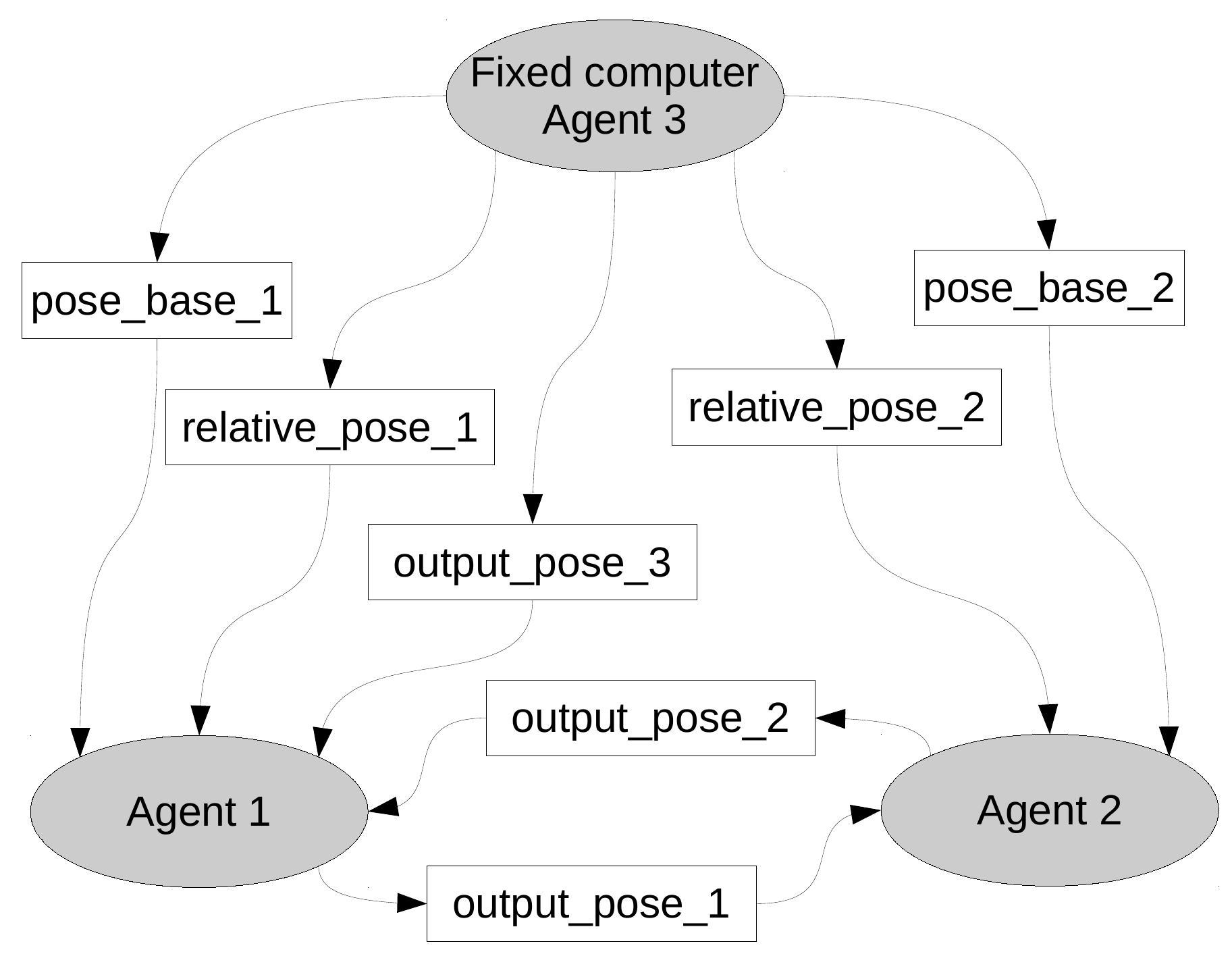}
\caption{\label{fig:ros-architecture} Multi-master ROS architecture with shared
topics.}
\end{figure}

\subsubsection{Pre-grasping formation}

The first goal is to achieve formation around a box, whose location
is informed by the state of agent $3$. For this first task, the relative
pose $\dq{\delta}_{i}$ of each agent (i.e., the pose of each end-effector
with respect to the center of formation) is defined such that the
end-effectors of agents $1$ and $2$ should point to the center of
formation at a distance of 0.30~m in the $x$ axis in opposite directions;
that is, 
\begin{align}
\dq{\delta}_{1}=1-\dual0.15\imi\label{eq:kukad1}
\end{align}
and 
\begin{align}
\dq{\delta}_{2}=\imk\left(1-\dual0.15\imi\right).\label{eq:kukad2}
\end{align}

The initial configuration of the experiment is shown in Figure~\ref{fig:expkuka1s1},
which shows the two KUKA YouBots. Agent $1$ corresponds to the robot
in the left, agent $2$ corresponds to the robot in the right, and
the virtual agent $3$ corresponds to the box. The Laplacian matrix
is thus given by 
\begin{align}
\mymatrix L & =\begin{bmatrix}1 & -0.5 & -0.5\\
-0.5 & 0.5 & 0\\
0 & 0 & 0
\end{bmatrix},
\end{align}
where the weights of all edges were chosen as 0.5 after a process
of trial and error, throughout several executions, in order to achieve
satisfactory convergence rate.

During the execution of the experiment, as shown in Figures \ref{fig:expkuka1s2},
\ref{fig:expkuka1s3}, and finally Figure~\ref{fig:expkuka1s4},
the agents are able to achieve formation around the box with the desired
poses given by $\dq{\delta}_{1}$ and $\dq{\delta}_{2}$, relative
to the center of formation, which is located at the center of the
box.

\begin{figure}[t]
\noindent \begin{centering}
\subfloat[$t=0$s.\label{fig:expkuka1s1}]{\noindent \begin{centering}
\includegraphics[viewport=80bp 85.5bp 770bp 630bp,clip,width=0.4\columnwidth]{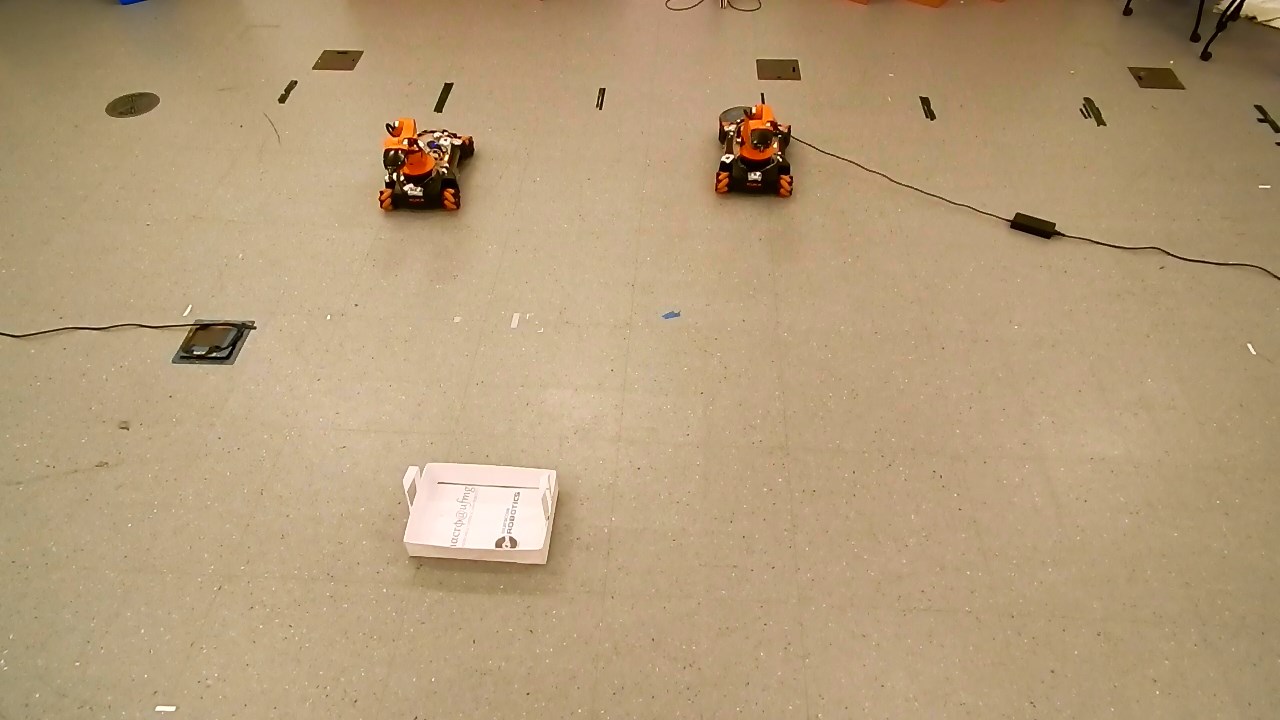}\lyxdeleted{Bruno Vilhena Adorno}{Sat Jun 15 00:05:32 2019}{ }
\par\end{centering}
}\subfloat[$t=1$s.\label{fig:expkuka1s2}]{\noindent \begin{centering}
\includegraphics[viewport=80bp 85.5bp 770bp 630bp,clip,width=0.4\columnwidth]{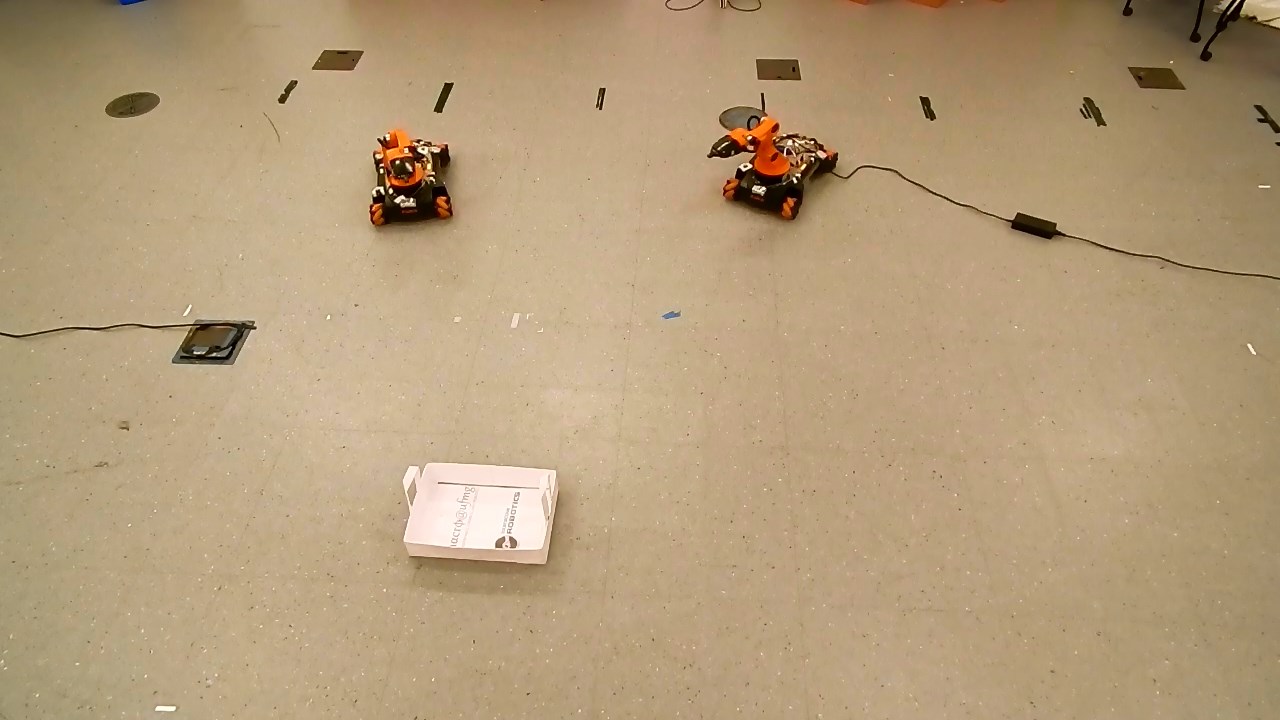}\lyxdeleted{Bruno Vilhena Adorno}{Sat Jun 15 00:05:32 2019}{ }
\par\end{centering}
}
\par\end{centering}
\vspace{-20bp}

\noindent \centering{}\subfloat[$t=3$s.\label{fig:expkuka1s3}]{\noindent \begin{centering}
\includegraphics[viewport=80bp 85.5bp 770bp 630bp,clip,width=0.4\columnwidth]{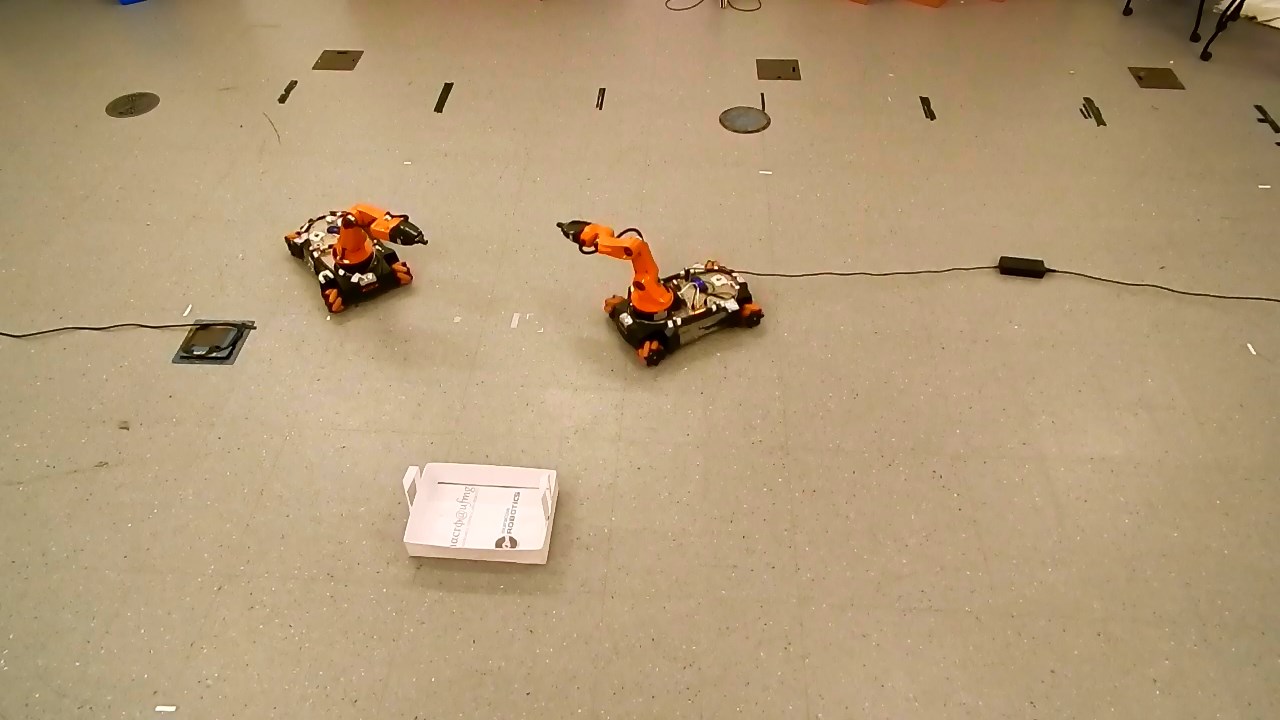}\lyxdeleted{Bruno Vilhena Adorno}{Sat Jun 15 00:05:32 2019}{ }
\par\end{centering}
}\subfloat[$t=14$s.\label{fig:expkuka1s4}]{\noindent \begin{centering}
\includegraphics[viewport=80bp 85.5bp 770bp 630bp,clip,width=0.4\columnwidth]{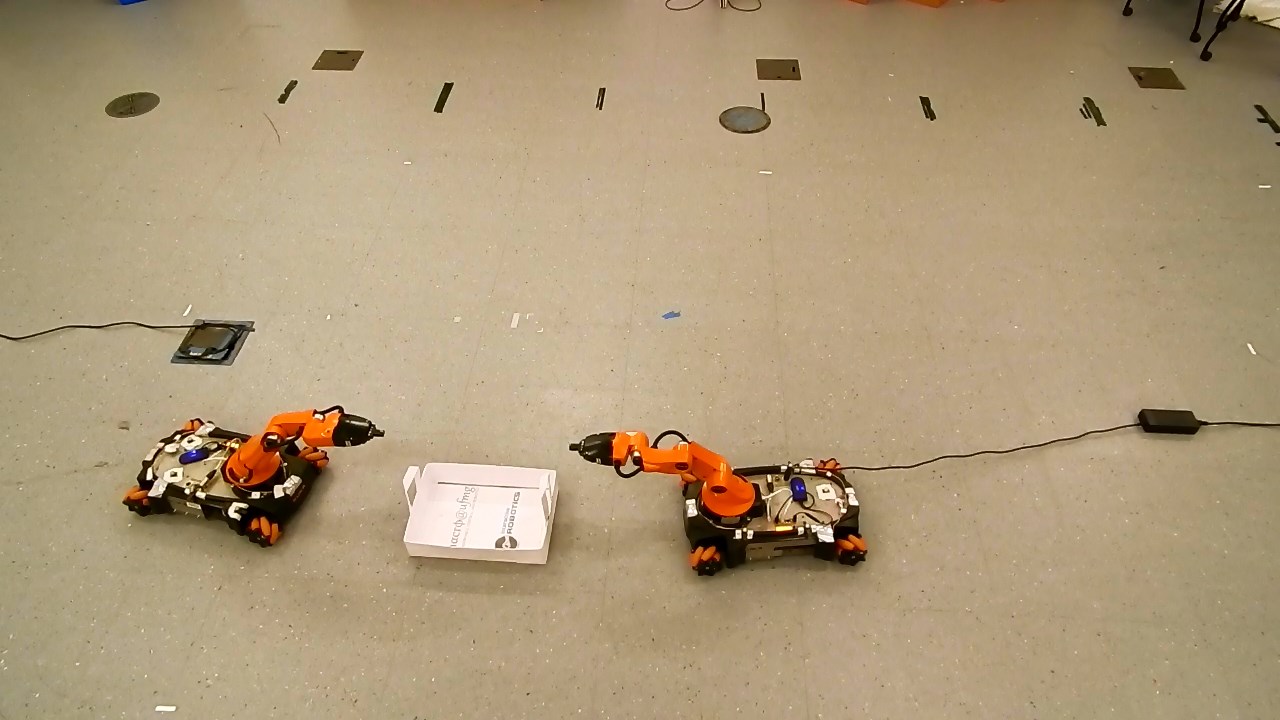}\lyxdeleted{Bruno Vilhena Adorno}{Sat Jun 15 00:05:32 2019}{ }
\par\end{centering}
}\caption{\lyxdeleted{Bruno Vilhena Adorno}{Sat Jun 15 00:05:32 2019}{ }\label{fig:expkuka1s}
Experiment on formation control with two KUKA YouBots. The goal is
to have a final formation where the robots are located around the
box with their end-effectors pointing to the center of the box, opposite
to each other.\lyxdeleted{Bruno Vilhena Adorno}{Sat Jun 15 00:05:32 2019}{ }}
\end{figure}

The state trajectories of the outputs $\dq y_{ce,i}=y_{ce,i,2}\imi+y_{ce,i,3}\imj+y_{ce,i,4}\imk+\dual(y_{ce,i,6}\imi+y_{ce,i,7}\imj+y_{ce,i,8}\imk)$
for each agent are shown in Figure~\ref{fig:expkuka1}. The constant
yellow line represents the leader state (i.e., the box pose), and
the blue and orange lines represent agents $1$ and $2$, respectively.
The continuous lines represent the measurements of the agents outputs,
and the thinner dashed lines represent the solution given by a simulation
carried out with the same initial pose configurations. The states
mainly follow the expected behavior given by the analytical solution,
although noises, delays, and initial conditions on velocities, which
are not explicitly considered in the designed control laws, cause
some deviations from the simulated values, as expected.

\begin{figure}[t]
\noindent \begin{centering}
\subfloat[$y_{ce,i,2}(t)$ for each agent.\label{fig:expkuka1y2}]{\includegraphics[width=0.3\columnwidth]{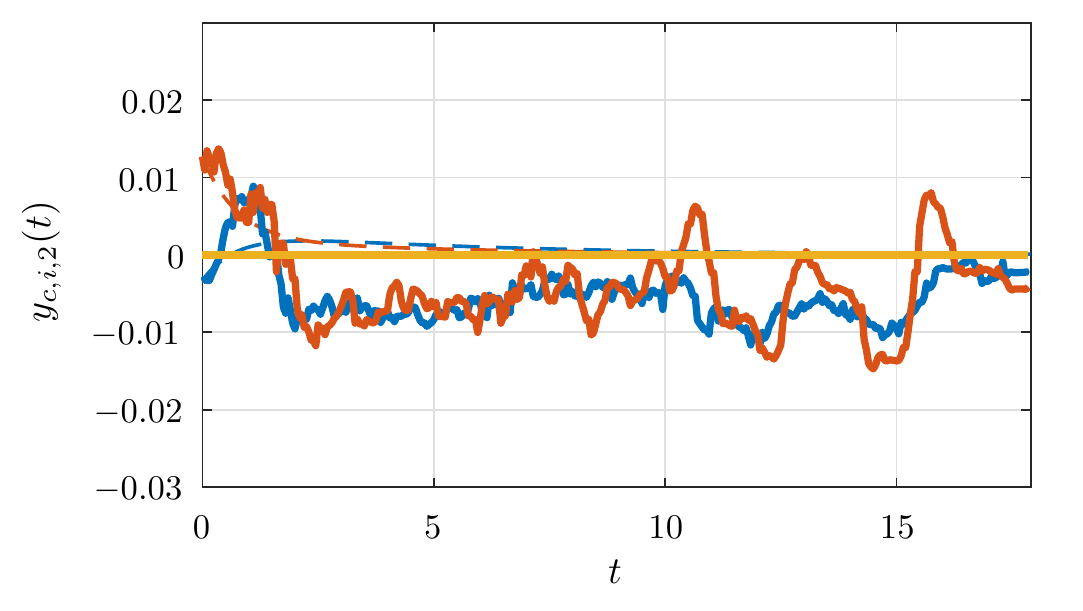}

}\subfloat[$y_{ce,i,3}(t)$ for each agent.\label{fig:expkuka1y3}]{\includegraphics[width=0.3\columnwidth]{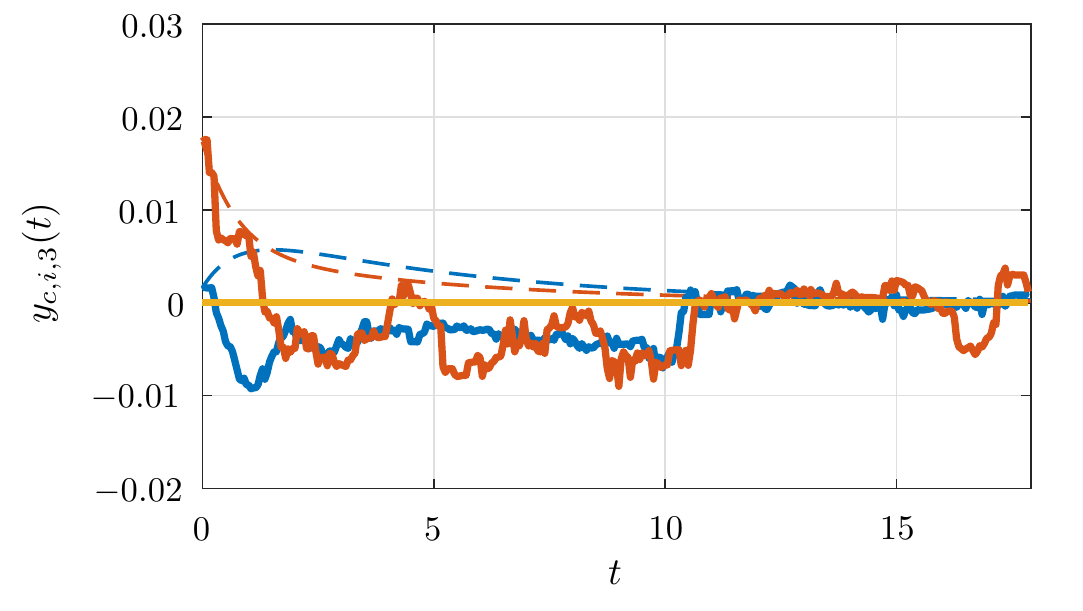}\lyxdeleted{Bruno Vilhena Adorno}{Sat Jun 15 00:05:32 2019}{ }

}\subfloat[$y_{ce,i,4}(t)$ for each agent.\label{fig:expkuka1y4}]{\includegraphics[width=0.3\columnwidth]{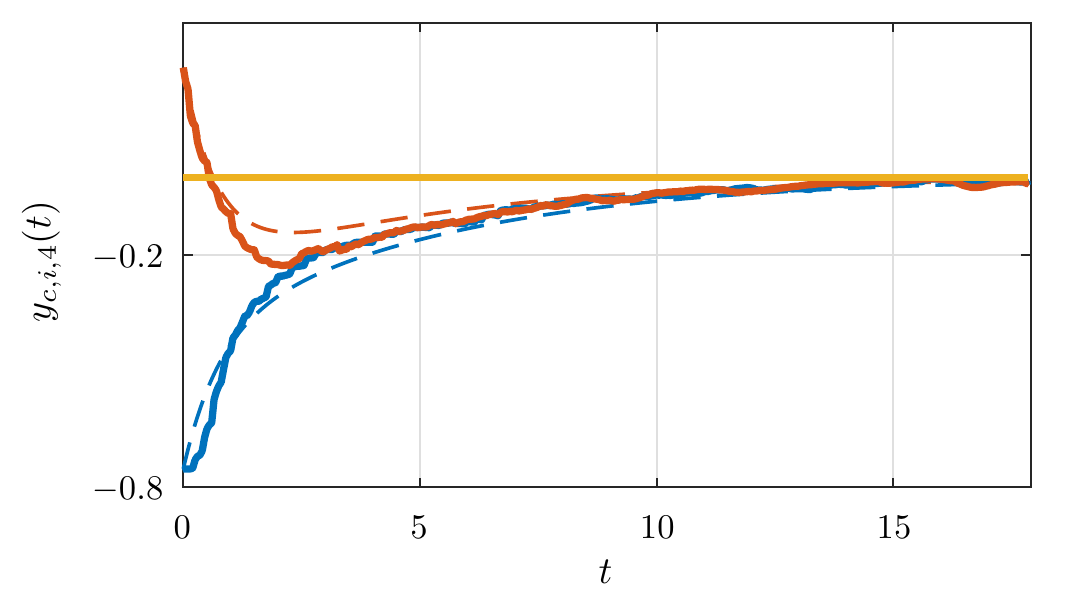}\lyxdeleted{Bruno Vilhena Adorno}{Sat Jun 15 00:05:32 2019}{ }

}
\par\end{centering}
\centering{}\subfloat[$y_{ce,i,6}(t)$ for each agent.\label{fig:expkuka1y6}]{\includegraphics[width=0.3\columnwidth]{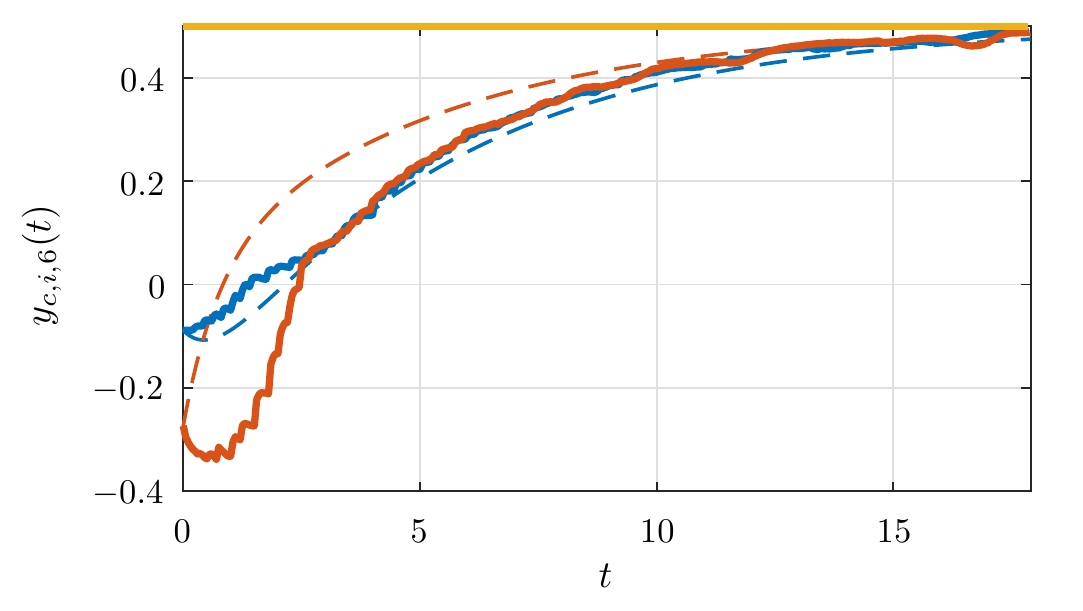}\lyxdeleted{Bruno Vilhena Adorno}{Sat Jun 15 00:05:32 2019}{ }

}\subfloat[$y_{ce,i,7}(t)$ for each agent.\label{fig:expkuka1y7}]{\includegraphics[width=0.3\columnwidth]{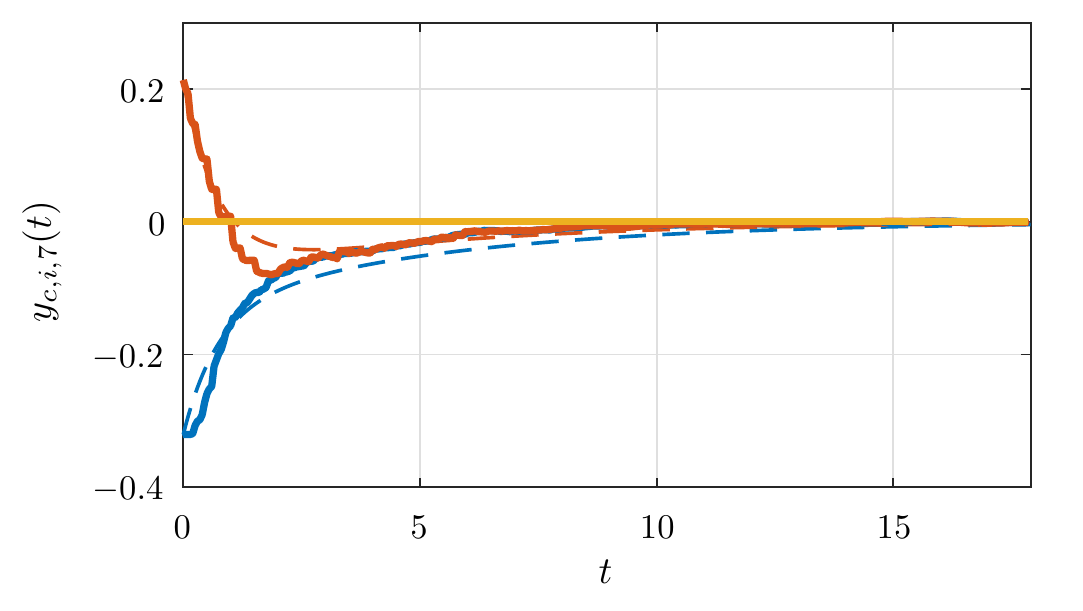}\lyxdeleted{Bruno Vilhena Adorno}{Sat Jun 15 00:05:32 2019}{ }

}\subfloat[$y_{ce,i,8}(t)$ for each agent.\label{fig:expkuka1y8}]{\includegraphics[width=0.3\columnwidth]{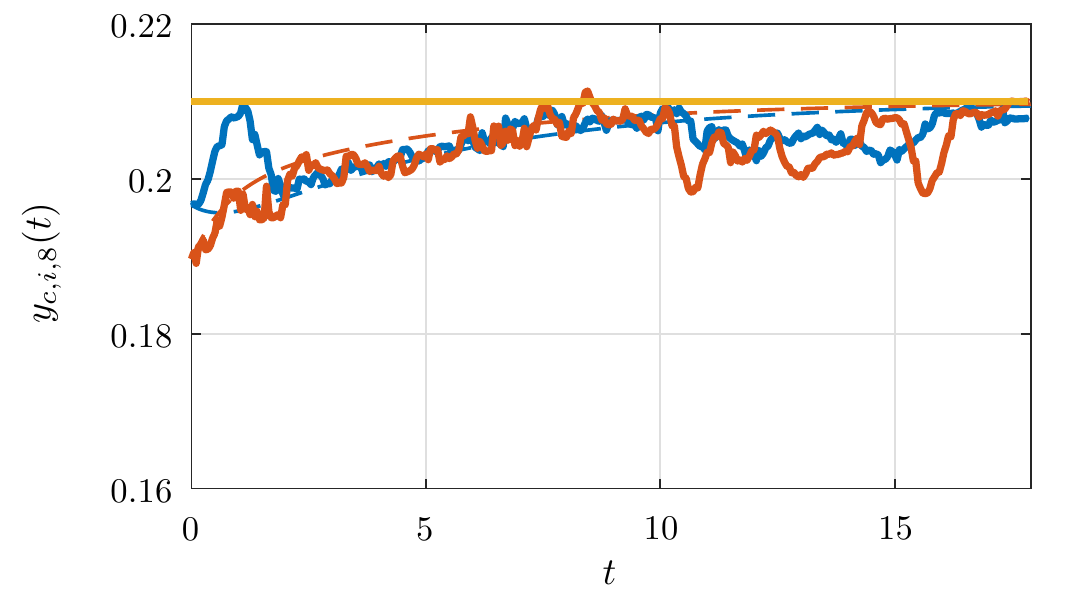}\lyxdeleted{Bruno Vilhena Adorno}{Sat Jun 15 00:05:32 2019}{ }

}\caption{\label{fig:expkuka1} Coefficients of the output $\protect\dq y_{ce,i}$
of each agent in the experiment on formation with two KUKA YouBots.
The dashed curves correspond to the simulated values, whereas the
solid ones correspond to the actual values obtained from the experiments.
The constant curves correspond to the reference provided by the virtual
agent 3.}
\end{figure}

\subsubsection{Cooperative manipulation}

In this second subtask, the goal is to make the robots grasp the box
and then move it around the workspace while maintaining the formation.
To that end, after the robots achieve the formation around the box
in the pre-grasping subtask, as shown in Figure~\ref{fig:expkuka1s4},
the references $\dq{\delta}_{i}$ are changed to a lower position
in the $z$ axis and rotated around the $y$ axis, so that the agents
adjust the grasp (Figure~\ref{fig:expkuka1s5}). By reducing the
distance of each $\dq{\delta}_{i}$ with respect to the center of
formation and returning the reference to a higher position in the
$z$ axis, the agents grasp the box by the flexible straps (Figure~\ref{fig:expkuka1s6}).
Next, the reference corresponding to the box location is changed in
order to drive the agents to a pick up zone, where the box is loaded
(Figure~\ref{fig:expkuka1s7}). After loading the box in the pick-up
zone, the reference is changed again and the agents carry the box
in the direction of a delivery zone, passing through the location
shown in Figure~\ref{fig:expkuka1s8}, then reaching the delivery
zone in Figure~\ref{fig:expkuka1s9}. Once the agents reach the delivery
zone, the value of each $\dq{\delta}_{i}$ is changed in order to
release and deliver the box (Figure~\ref{fig:expkuka1s10}).

\begin{figure}[t]
\centering 
 \subfloat[Adjusting the grasp.\label{fig:expkuka1s5}]{\includegraphics[clip,width=0.4\columnwidth]{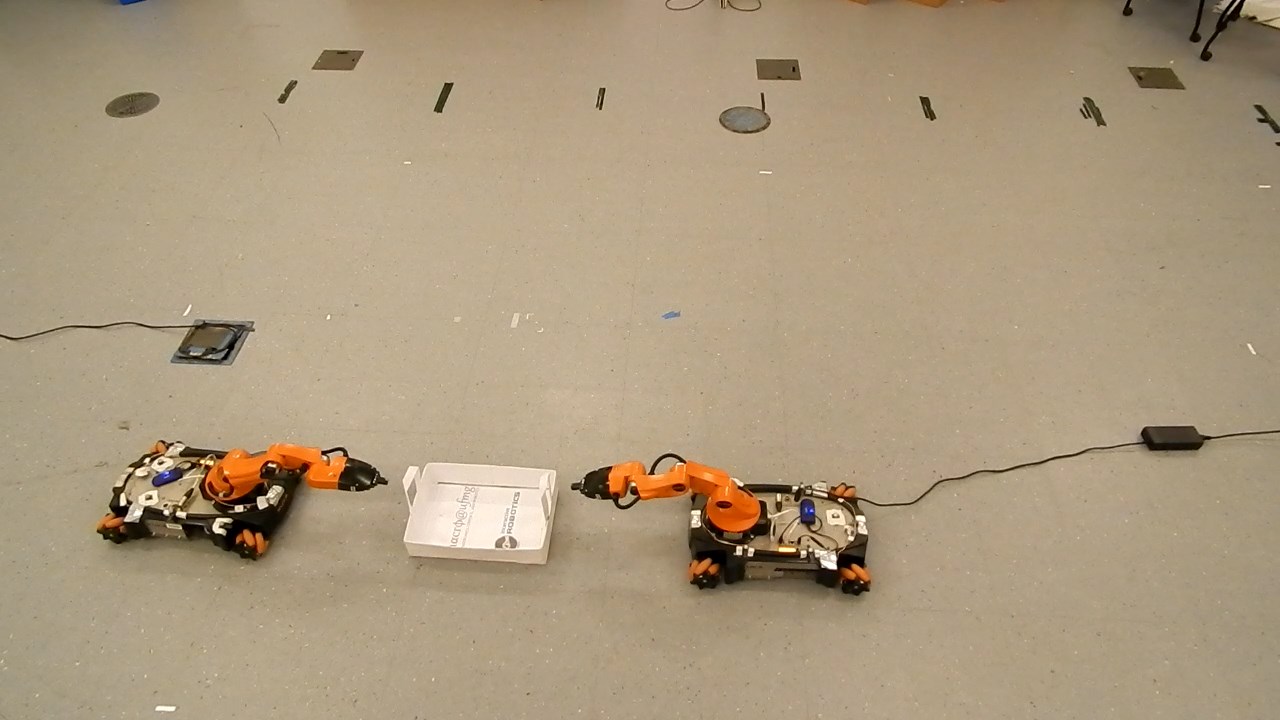}\lyxdeleted{Bruno Vilhena Adorno}{Sat Jun 15 00:05:32 2019}{ }

}\subfloat[Carrying the box.\label{fig:expkuka1s6}]{\includegraphics[clip,width=0.4\columnwidth]{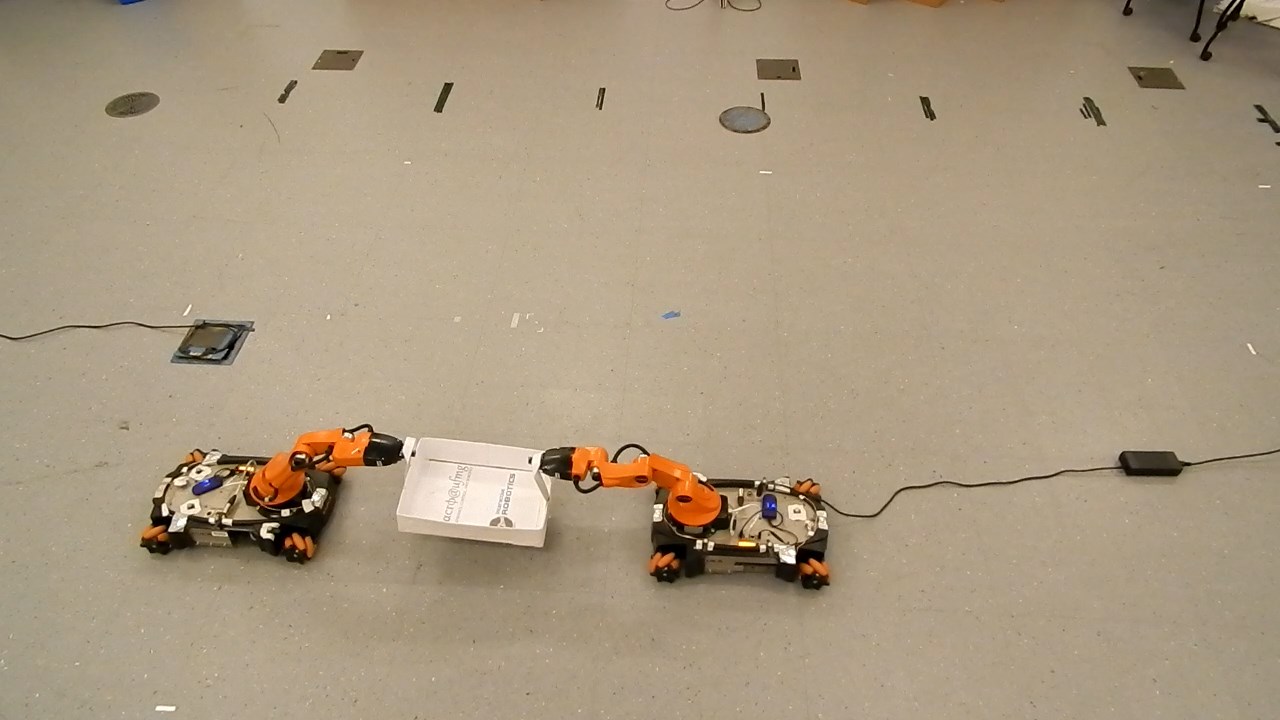}\lyxdeleted{Bruno Vilhena Adorno}{Sat Jun 15 00:05:32 2019}{ }

}\\
 \subfloat[Pick-up zone.\label{fig:expkuka1s7}]{\includegraphics[clip,width=0.4\columnwidth]{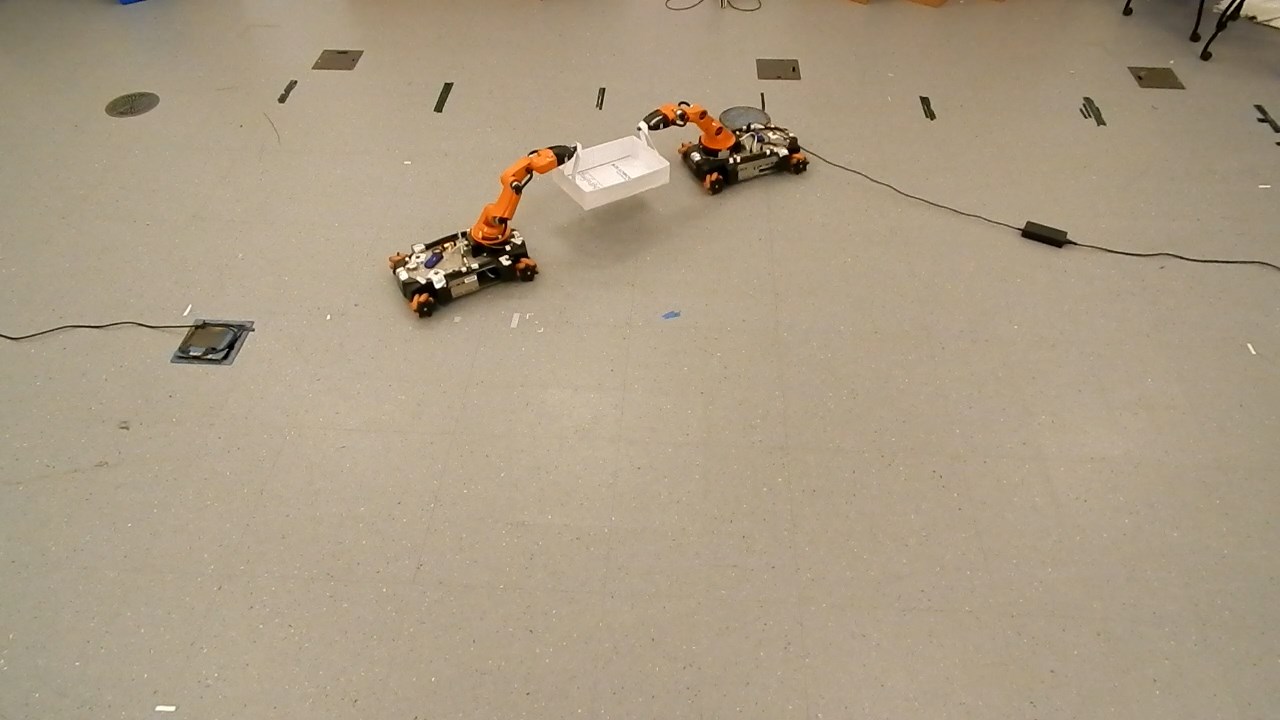}\lyxdeleted{Bruno Vilhena Adorno}{Sat Jun 15 00:05:32 2019}{ }

}\subfloat[Carrying the box to the delivery zone.\label{fig:expkuka1s8}]{\includegraphics[clip,width=0.4\columnwidth]{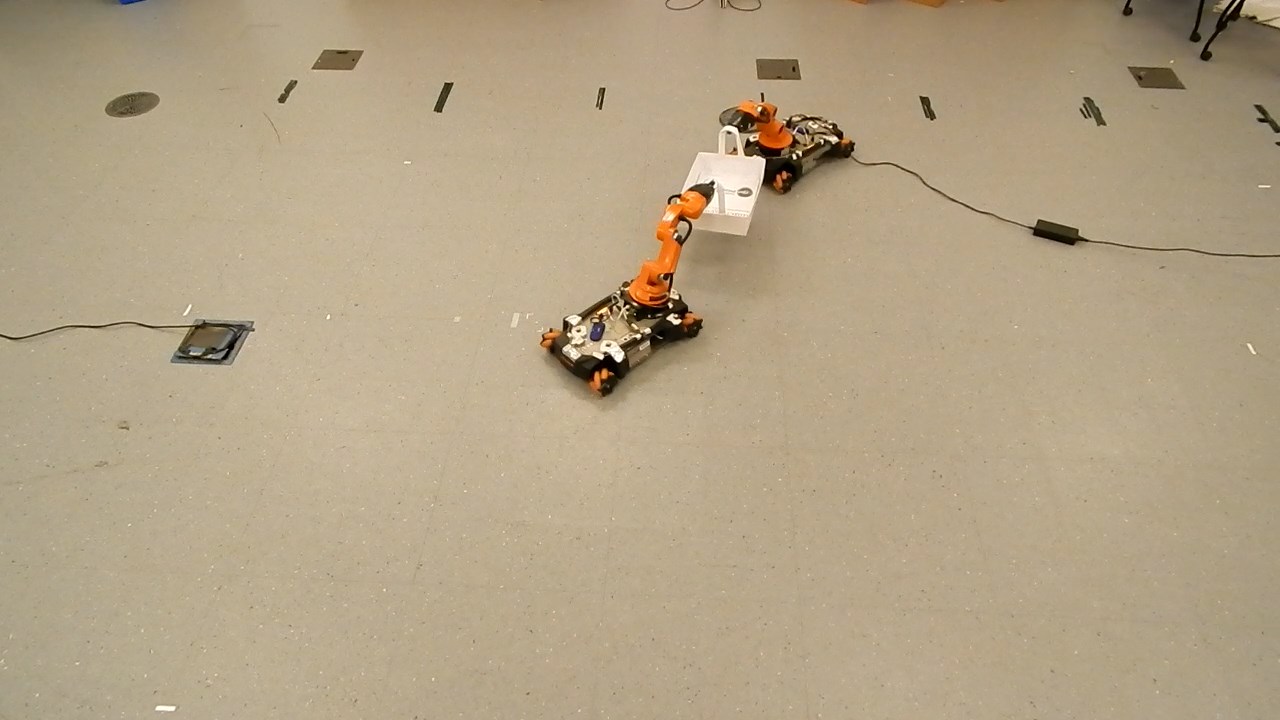}\lyxdeleted{Bruno Vilhena Adorno}{Sat Jun 15 00:05:32 2019}{ }

}\\
 \subfloat[Delivery zone.\label{fig:expkuka1s9}]{\includegraphics[clip,width=0.4\columnwidth]{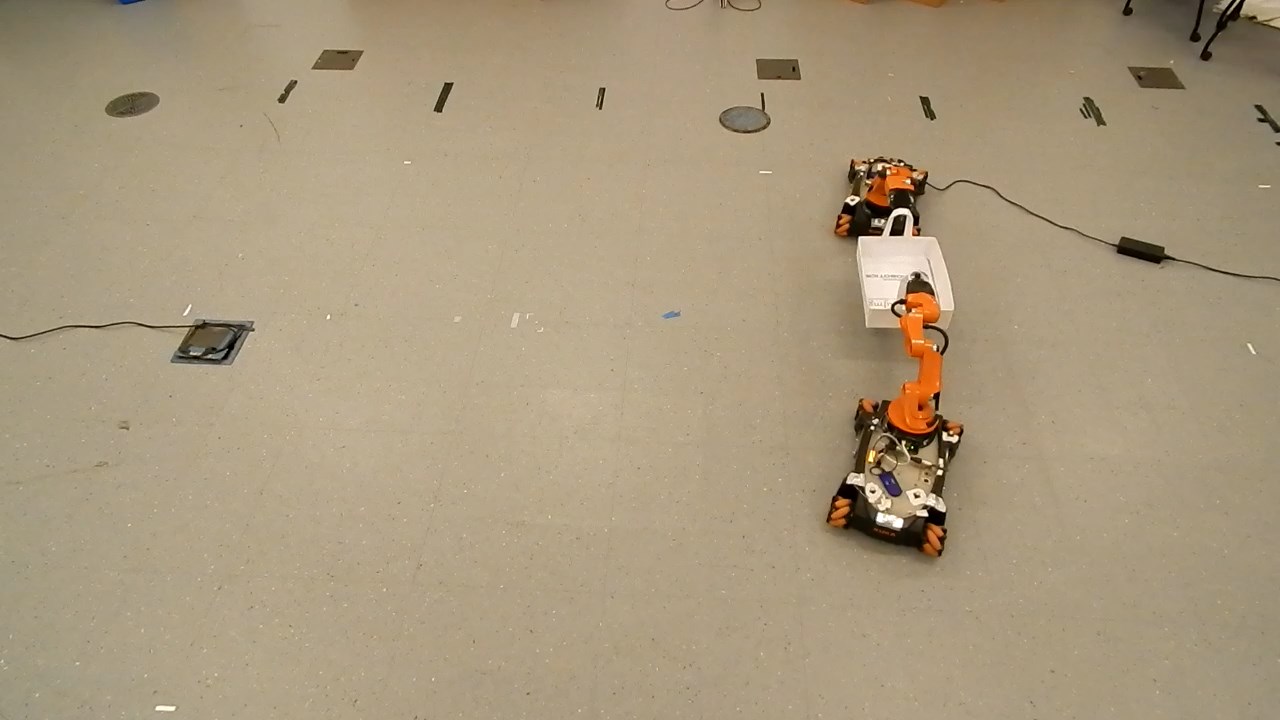}\lyxdeleted{Bruno Vilhena Adorno}{Sat Jun 15 00:05:32 2019}{ }

}\subfloat[Delivering the box.\label{fig:expkuka1s10}]{\includegraphics[clip,width=0.4\columnwidth]{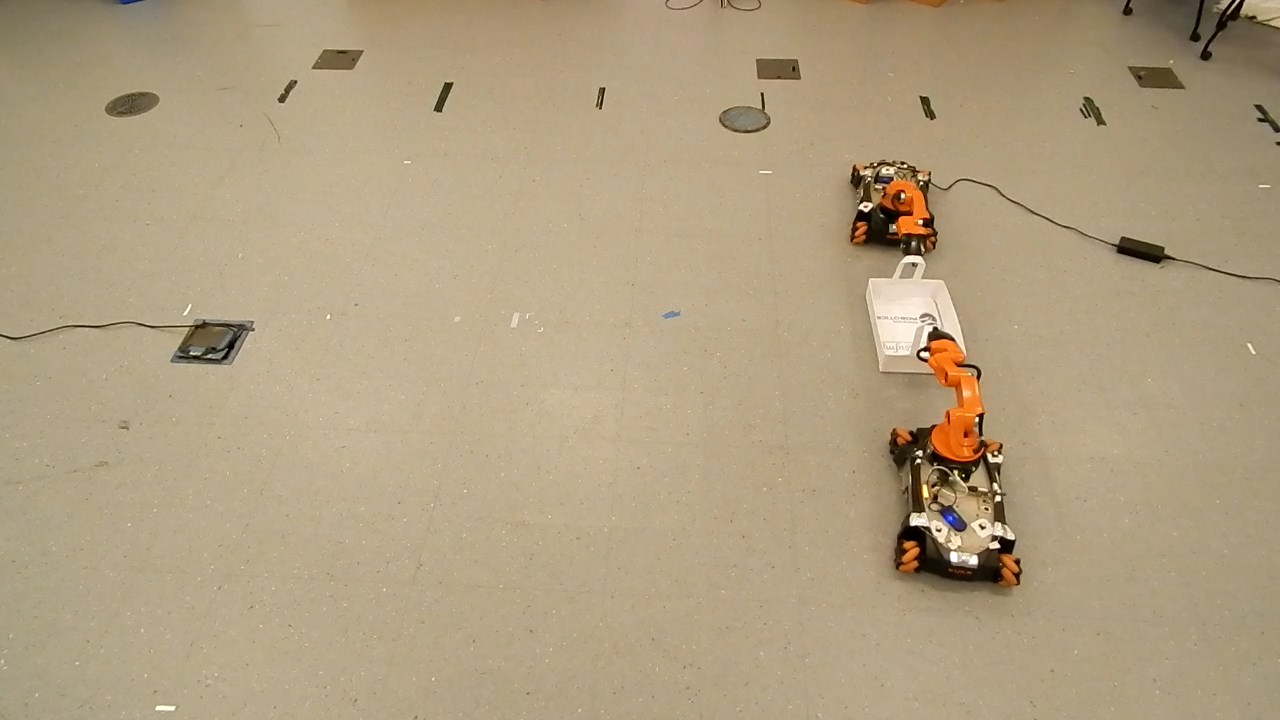}\lyxdeleted{Bruno Vilhena Adorno}{Sat Jun 15 00:05:32 2019}{ }

}\\
 \caption{\lyxdeleted{Bruno Vilhena Adorno}{Sat Jun 15 00:05:32 2019}{ }\label{fig:expkuka1sp2}
Experiment on cooperative manipulation with KUKA YouBots.\lyxdeleted{Bruno Vilhena Adorno}{Sat Jun 15 00:05:32 2019}{ }}
\end{figure}

With the interplay between changing the reference of an object, which
is represented by \textbf{Agent $3$}, and providing different assignments
of $\dq{\delta}_{i}$ for each robot, many different tasks can be
achieved, as depicted in the given example.

\subsubsection{Discussion}

The manipulation task presented in this section can be categorized
as a \emph{leader-following} problem as we have defined the pose of
the box as a single virtual leader. Although we have not explicitly
mentioned the solution of this type of problem in the development
of our theoretical results, the techniques proposed in this work are
general enough to allow its treatment. More specifically, when a single
virtual leader is static, the leader-following problem can also be
defined as a \emph{consensus regulation} problem in which the objective
is to guide the consensus variables of the system to the values of
the leader variables, in contrast to the leaderless consensus problem,
where the variables converge to a set of values that are a function
of the initial values of the agents variables. Therefore, as the first
subtask consists of a pre-grasping formation, where the robots gather
around a box, which is represented by a static virtual leader (\textbf{Agent
3} in Figure~\ref{fig:network2}), the leader-following problem boils
down to a consensus regulation problem with a static leader as the
root of a directed spanning tree, thus satisfying the requirement
of the existence of a directed spanning tree stated in our proofs.
In conclusion, the execution of this subtask can be seen as a real
world verification of the proposed methodology.

On the other hand, in the second subtask, the robots grasp the box
and move it around the workspace. In this case, the agents have to
follow a dynamic virtual leader, as they have to move the box. Although
the design of controllers able to guarantee perfect tracking of a
dynamic leader is out of the scope of this work, by designing a trajectory
in which the virtual leader moves smoothly and slowly enough, the
system has shown to be able to track it with a small error. Indeed,
this demonstrates some robustness of our approach as the independent
dynamic behavior of the leader can be seen as a disturbance to the
system.

\section{Conclusion}

\label{sec:Conclusion}

This paper presented a solution based on dual quaternion algebra to
the general problem of pose consensus for systems composed of multiple
rigid-bodies, and then extended the theory in order to design consensus-based
formation control laws. Since unit dual quaternions belong to a non-Euclidean
manifold, the consensus protocols usually found in the literature
cannot be directly applied to the problem of pose consensus because
those protocols assume an $n$-dimensional Euclidean space. However,
thanks to the isomorphism of pure dual quaternions (i.e., dual quaternions
with real part equal to zero) and $\mathbb{R}^{6}$ under the addition
operation, an \emph{output} consensus protocol was designed and then
we proved that output consensus (i.e., consensus on $\log{\dq x}_{i}$)
implies pose consensus (i.e., consensus on $\dq x_{i}$). This result,
together with the differential logarithm mapping of unit dual quaternions,
allowed the design of pose consensus protocols, which ensures that
the system will achieve consensus as long as the information flow
is described by directed graphs that have a directed spanning tree.

Since dual quaternions are a generalization of quaternions, the corresponding
proofs are much more compact than those obtained when using quaternions.
Usually, proofs are shorter because we do not need to do a separate
analysis for rotation and translation, and, in addition, we usually
exploit the dual quaternion algebra to make those proofs even cleaner
and shorter.

Furthermore, unit dual quaternions capture the intrinsic coupling
between translation and rotation in rigid motions, which has an important
practical consequence: the instantaneous control effort (i.e., the
norm of the control input) of controllers based on dual quaternions
is smaller than the instantaneous control effort of decoupled controllers
that use rotation quaternions and translation vectors separately,
as reported in the literature \citep{Figueredo2018}.

A consensus-based approach for formation control of free-flying rigid-body
teams was also proposed and then applied to the decentralized formation
control of mobile manipulators. In that case, the objective is to
achieve desired formations for the set of end-effectors of mobile
manipulators and let each robot generate its own motion in order to
move the end-effector according to the reference provided by the consensus
protocol. The advantage of using such abstraction is that the consensus
protocols are used to determine, in a decentralized way, how each
robot's end-effector should be, regardless of the topology and dimension
of the robots' configuration spaces. In fact, since the robots use
\emph{local} motion controllers, the consensus-based formation control
can be applied to a highly heterogeneous multi-agent system as long
as each agent is capable of following the reference provided by the
consensus protocols.

Finally, numerical simulations were carried out to illustrate the
applicability and scalability of the proposed method and an experiment
with real mobile manipulators was presented to show the proposed method,
in practice, in a cooperative manipulation scenario.\lyxdeleted{Bruno Vilhena Adorno}{Sat Jun 15 00:05:43 2019}{ }

Although the proposed distributed control laws ensure consensus of
free-flying agents, we have not taken into account the problem of
unwinding. As a result, agents may execute longer trajectories before
the overall system achieves consensus. Future works will be focused
on the unwinding problem in the context of pose consensus protocols,
which can only be solved by using discontinuous or hybrid controllers
\citep{Kussaba2017}, and may also take into account time-delays in
the agents interactions, switching topologies, leader-follower with
multiple dynamic leaders, containment control, and couplings design.

\section*{Acknowledgements}

This work was supported in part by the Coordenação de Aperfeiçoamento
de Pessoal de Nível Superior (CAPES) (Finance Code 88887.136349/2017-00),
Conselho Nacional de Desenvolvimento Científico e Tecnológico (CNPq)
(grant numbers 456826/2013-0, 232985/2014-6, 311063/2017-9, and 303901/2018-7),
Fundação de Amparo à Pesquisa de Minas Gerais (FAPEMIG), and MIT-Brazil
Program -- MISTI.

\appendix

\section{Auxiliary facts and proofs}

\label{sec:AppA}
\begin{fact}
\label{fact:identities with quaternion norm}Given $\quat y=\left(\phi/2\right)\quat n$,
where $\quat n\in\mathbb{S}^{3}\cap\mathbb{H}_{p}$ and $\phi\in\left[0,2\pi\right)$,
\begin{align}
\frac{\cos\norm{\quat y}}{\norm{\quat y}^{m}} & =\frac{\cos\left(\phi/2\right)}{\left(\phi/2\right)^{m}}\tag{i}\label{eq:cos over y square}\\
\frac{\sin\norm{\quat y}}{\norm{\quat y}^{m}} & =\frac{\sin\left(\phi/2\right)}{\left(\phi/2\right)^{m}}.\tag{ii}\label{sin over y to m}
\end{align}
\end{fact}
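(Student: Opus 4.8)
The plan is to reduce both identities to the single elementary observation that $\norm{\quat y}=\phi/2$, after which each claim follows by direct substitution.

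First I would compute $\norm{\quat y}$ from the definition $\norm{\quat h}=\sqrt{\quat h^{*}\quat h}$. Since $\quat n\in\mathbb{H}_{p}$ is a pure quaternion, $\quat n^{*}=-\quat n$; and since $\quat n\in\mathbb{S}^{3}$, $\norm{\quat n}^{2}=\quat n^{*}\quat n=1$. Because $\phi/2$ is a real scalar, $\quat y^{*}\quat y=(\phi/2)^{2}\,\quat n^{*}\quat n=(\phi/2)^{2}$, so $\norm{\quat y}=|\phi/2|$. The hypothesis $\phi\in[0,2\pi)$ forces $\phi/2\geq0$, hence $\norm{\quat y}=\phi/2$.

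Substituting $\norm{\quat y}=\phi/2$ into $\cos\norm{\quat y}/\norm{\quat y}^{m}$ yields identity~(i), and into $\sin\norm{\quat y}/\norm{\quat y}^{m}$ yields identity~(ii). The only point that deserves explicit mention is the sign step used to replace $|\phi/2|$ by $\phi/2$; apart from that there is no real obstacle, the statement being a one-line consequence of the unit-norm property of $\quat n$ together with the range restriction on $\phi$.
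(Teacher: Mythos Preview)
Your proposal is correct and follows essentially the same approach as the paper: both arguments reduce the claim to showing $\norm{\quat y}=\phi/2$ via $\norm{\quat n}=1$ and the nonnegativity of $\phi$, then substitute. Your version merely spells out the conjugate computation $\quat y^{*}\quat y=(\phi/2)^{2}\quat n^{*}\quat n$ in slightly more detail than the paper does.
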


\begin{proof}
Since $\norm{\quat n}=1$, then $\norm{\quat y}=\left|\phi\right|/2=\phi/2$
because $\phi$ is nonnegative. Thus we obtain \eqref{eq:cos over y square}
and \eqref{sin over y to m}.
\end{proof}
\begin{prop}
\label{prop:existence_of_left_pseudo_inverse_and_unicity}Let $\mymatrix A\in\mathbb{R}^{m\times n}$,
$\myvec x\in\mathbb{R}^{n}$, and $\myvec b\in\mathbb{R}^{m}$ such
that 
\begin{equation}
\mymatrix A\myvec x=\myvec b\label{eq:overdetermined linear system}
\end{equation}
and $m\geq n$. If there exists a left pseudoinverse $\mymatrix A^{+}$
such that $\mymatrix A^{+}\mymatrix A=\mymatrix I$, then the solution
to \eqref{eq:overdetermined linear system} given by $\myvec x=\mymatrix A^{+}\myvec b$
is unique and $\myvec b=\myvec 0$ if and only if $\myvec x=\myvec 0$.
\end{prop}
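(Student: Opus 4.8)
The plan is to use the defining property $\mymatrix A^{+}\mymatrix A=\mymatrix I$ directly, with no appeal to rank arguments or explicit formulas for the pseudoinverse. The hypothesis $m\geq n$ and the mere existence of a left inverse are only there to make the setup consistent (they force $\mymatrix A$ to have full column rank), but the algebraic identity is all that the proof actually needs.

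First I would establish uniqueness. Let $\myvec x$ be any solution of $\mymatrix A\myvec x=\myvec b$. Left-multiplying by $\mymatrix A^{+}$ gives $\mymatrix A^{+}\mymatrix A\myvec x=\mymatrix A^{+}\myvec b$, and since $\mymatrix A^{+}\mymatrix A=\mymatrix I$ by hypothesis, this reduces to $\myvec x=\mymatrix A^{+}\myvec b$. Because $\myvec x$ was an arbitrary solution, every solution coincides with $\mymatrix A^{+}\myvec b$; hence the solution is unique and equals $\mymatrix A^{+}\myvec b$.

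Next I would prove the equivalence $\myvec b=\myvec 0\iff\myvec x=\myvec 0$. For the forward implication, if $\myvec b=\myvec 0$ then $\myvec x=\mymatrix A^{+}\myvec b=\mymatrix A^{+}\myvec 0=\myvec 0$ by the expression just derived. For the converse, if $\myvec x=\myvec 0$ then $\myvec b=\mymatrix A\myvec x=\mymatrix A\myvec 0=\myvec 0$ directly from $\mymatrix A\myvec x=\myvec b$. This closes the argument.

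I do not expect any real obstacle here: the statement is an elementary consequence of the left-inverse identity, and the only point worth a sentence is to observe that the existence of $\mymatrix A^{+}$ with $\mymatrix A^{+}\mymatrix A=\mymatrix I$ is precisely what allows the cancellation step $\mymatrix A^{+}\mymatrix A\myvec x=\myvec x$, which is the crux of both parts.
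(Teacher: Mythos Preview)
Your proof is correct and follows essentially the same approach as the paper: left-multiply by $\mymatrix A^{+}$ and use $\mymatrix A^{+}\mymatrix A=\mymatrix I$ to force any solution to equal $\mymatrix A^{+}\myvec b$, then handle the equivalence $\myvec b=\myvec 0\iff\myvec x=\myvec 0$ exactly as the paper does. The only cosmetic difference is that the paper first separately checks that $\mymatrix A^{+}\myvec b$ is a solution (via $\mymatrix A\mymatrix A^{+}\mymatrix A=\mymatrix A$) before arguing uniqueness, whereas you fold both into a single step; your version is slightly more economical but the underlying idea is identical.
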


\begin{proof}
If there exists $\mymatrix A^{+}$ such that $\mymatrix A^{+}\mymatrix A=\mymatrix I$
then $\mymatrix A\mymatrix A^{+}\mymatrix A=\mymatrix A$, thus $\myvec b=\mymatrix A\myvec x=\mymatrix A\mymatrix A^{+}\mymatrix A\myvec x=\mymatrix A\mymatrix A^{+}\myvec b$.
This way, $\myvec x=\mymatrix A^{+}\myvec b$ is clearly a solution
to \eqref{eq:overdetermined linear system} because $\mymatrix A\myvec x=\mymatrix A\mymatrix A^{+}\myvec b=\myvec b$.
Furthermore, suppose that $\myvec x'$ is also a solution to \eqref{eq:overdetermined linear system},
thus $\myvec b=\mymatrix A\myvec x'=\mymatrix A\myvec x$. Since $\mymatrix A^{+}\mymatrix A=\mymatrix I$
then $\mymatrix A^{+}\mymatrix A\myvec x'=\mymatrix A^{+}\mymatrix A\myvec x$
implies $\myvec x'=\myvec x$, hence $\myvec x=\mymatrix A^{+}\myvec b$
is indeed a unique solution.

Lastly, if $\myvec x=\myvec 0$ then $\myvec b=\mymatrix A\myvec x=\mymatrix A\myvec 0=\myvec 0$;
conversely, if $\myvec b=\myvec 0$ then $\myvec x=\mymatrix A^{+}\myvec b=\mymatrix A^{+}\myvec 0=\myvec 0$.
Hence $\myvec b=\myvec 0\iff\myvec x=\myvec 0$.
\end{proof}
\begin{prop}
\label{thm:Q is full column rank}Consider $\quat r\in\mathbb{S}^{3},$
with $\quat r=\cos\left(\phi/2\right)+\quat n\sin\left(\phi/2\right)$
and $\quat n\in\mathbb{S}^{3}\cap\mathbb{H}_{p}$, and $\quat y\in\mathbb{H}_{p}$
such that $\quat y\triangleq\log\quat r$, then 
\[
\mymatrix Q\left(\quat r\right)\triangleq\frac{\partial\vector_{4}\quat r}{\partial\vector_{3}\quat y}
\]
is full column rank for $\phi\in\left[0,2\pi\right)$.
\end{prop}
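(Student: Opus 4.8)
The plan is to prove the Proposition directly, by showing that the only solution of the homogeneous system $\mymatrix Q\left(\quat r\right)\myvec v=\myvec 0$ with $\myvec v\in\mathbb{R}^{3}$ is $\myvec v=\myvec 0$. Using the alternative form of Theorem~\ref{th:rQy}, I would write $\myvec n\triangleq\vector_{3}\quat n\in\mathbb{R}^{3}$ (a unit vector) and observe that $\mymatrix Q\left(\quat r\right)$ splits into its first row $-\sin\left(\phi/2\right)\myvec n^{T}$ and its lower $3\times3$ block $\mymatrix M_{3}\triangleq\Gamma\,\myvec n\myvec n^{T}+\Theta\mymatrix I_{3}$, where $\Gamma=r_{1}-\Theta$ and $r_{1}=\cos\left(\phi/2\right)$. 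The first fact to record is elementary: $\Theta>0$ for every $\phi\in\left[0,2\pi\right)$, since $\Theta=1$ at $\phi=0$, and for $\phi\in\left(0,2\pi\right)$ one has $\phi/2\in\left(0,\pi\right)$, so $\sin\left(\phi/2\right)>0$ and $\Theta=\sin\left(\phi/2\right)/\left(\phi/2\right)>0$.

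Next I would take an arbitrary $\myvec v$ in the kernel and decompose it as $\myvec v=\alpha\myvec n+\myvec w$ with $\myvec w\perp\myvec n$. Since $\myvec n^{T}\myvec n=1$, the lower-block part of the system reads $\mymatrix M_{3}\myvec v=\Gamma\alpha\myvec n+\Theta\left(\alpha\myvec n+\myvec w\right)=r_{1}\alpha\,\myvec n+\Theta\myvec w=\myvec 0$, and orthogonality of $\myvec n$ and $\myvec w$ forces $r_{1}\alpha=0$ and $\Theta\myvec w=\myvec 0$. Because $\Theta>0$, the latter gives $\myvec w=\myvec 0$, hence $\myvec v=\alpha\myvec n$. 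Substituting into the first row of the system yields $-\sin\left(\phi/2\right)\myvec n^{T}\myvec v=-\sin\left(\phi/2\right)\alpha=0$. Thus $\cos\left(\phi/2\right)\alpha=0$ and $\sin\left(\phi/2\right)\alpha=0$ simultaneously; since $\cos\left(\phi/2\right)$ and $\sin\left(\phi/2\right)$ never vanish together, $\alpha=0$, so $\myvec v=\myvec 0$ and $\mymatrix Q\left(\quat r\right)$ has full column rank.

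The point that needs care — and the reason a single determinant computation does not settle the matter — is that the lower block $\mymatrix M_{3}$ is \emph{not} invertible for all $\phi$: its determinant equals $r_{1}\Theta^{2}=\cos\left(\phi/2\right)\Theta^{2}$, which vanishes at $\phi=\pi$. At that value $\ker\mymatrix M_{3}=\mathrm{span}\{\myvec n\}$, but this is precisely the direction on which the first row $-\sin\left(\phi/2\right)\myvec n^{T}$ is nonzero (there $\sin\left(\phi/2\right)=1$), so the full matrix keeps rank $3$. The argument above handles every $\phi$ uniformly by coupling the lower block with the first row instead of analyzing them separately; the degenerate case $\phi=0$, where $\mymatrix Q\left(\quat r\right)=\begin{bmatrix}\mymatrix 0_{1\times3}\\ \mymatrix I_{3}\end{bmatrix}$, is automatically included since there $\cos\left(\phi/2\right)=1\neq0$.
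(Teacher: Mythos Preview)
Your proof is correct and takes a genuinely different route from the paper's. The paper proceeds by direct computation of $\det\bigl(\mymatrix Q(\quat r)^{T}\mymatrix Q(\quat r)\bigr)$: after expanding and using $\norm{\quat n}=1$ together with $\Gamma+\Theta=r_{1}$, the determinant collapses to $\Theta^{4}$, which is nonzero on $[0,2\pi)$. Your argument instead exploits the structure $\mymatrix Q(\quat r)=\begin{bmatrix}-\sin(\phi/2)\,\myvec n^{T}\\ \Gamma\,\myvec n\myvec n^{T}+\Theta\mymatrix I_{3}\end{bmatrix}$ and an orthogonal decomposition $\myvec v=\alpha\myvec n+\myvec w$ to show the kernel is trivial. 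What your approach buys is geometric insight: it makes transparent \emph{why} the singularity of the lower block at $\phi=\pi$ (where $r_{1}=0$) is harmless, since its null direction $\myvec n$ is exactly the direction detected by the first row. What the paper's approach buys is a clean closed form $\det\bigl(\mymatrix Q^{T}\mymatrix Q\bigr)=\Theta^{4}$, which may be handy for later quantitative estimates; it also avoids any need to discuss the choice of $\myvec n$ at $\phi=0$, though as you note that case is unproblematic either way.
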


\begin{proof}
$\mymatrix Q\left(\quat r\right)$ is full column rank if $\det\left(\mymatrix Q\left(\quat r\right)^{T}\mymatrix Q\left(\quat r\right)\right)\neq0$.
Thus, 
\begin{multline*}
\det\left(\mymatrix Q\left(\quat r\right)^{T}\mymatrix Q\left(\quat r\right)\right)=\Theta^{4}\sin^{2}\left(\frac{\phi}{2}\right)\left(n_{x}^{2}+n_{y}^{2}+n_{z}^{2}\right)\\
+\Gamma^{2}\Theta^{4}\left(n_{x}^{4}+n_{y}^{4}+n_{z}^{4}+2n_{x}^{2}n_{y}^{2}+2n_{x}^{2}n_{z}^{2}+2n_{y}^{2}n_{z}^{2}\right)\\
+2\Gamma\Theta^{5}\left(n_{x}^{2}+n_{y}^{2}+n_{z}^{2}\right)+\Theta^{6},
\end{multline*}
where $\Gamma$ and $\Theta$ are defined as in Theorem~\ref{th:rQy}.
Using the fact that $\norm{\quat n}=1$, $\Gamma=r_{1}-\Theta$ and
\begin{align*}
\left(n_{x}^{2}+n_{y}^{2}+n_{z}^{2}\right)^{2} & =n_{x}^{4}+n_{y}^{4}+n_{z}^{4}+2n_{x}^{2}n_{y}^{2}+2n_{x}^{2}n_{z}^{2}+2n_{y}^{2}n_{z}^{2},
\end{align*}
we obtain 
\begin{align*}
\det\left(\mymatrix Q\left(\quat r\right)^{T}\mymatrix Q\left(\quat r\right)\right) & =\Theta^{4}\sin^{2}\left(\frac{\phi}{2}\right)+\Gamma^{2}\Theta^{4}+2\Gamma\Theta^{5}+\Theta^{6}\\
 & =\Theta^{4}\left(\sin^{2}\left(\frac{\phi}{2}\right)+\Gamma^{2}+2\Gamma\Theta+\Theta^{2}\right)\\
 & =\Theta^{4},
\end{align*}
which is different from zero for $\phi\in\left[0,2\pi\right)$.
\end{proof}
\begin{prop}
\label{thm:A is invertible}Given $\quat p\in\mathbb{H}_{p}$ and
$\quat r\in\mathbb{S}^{3}$, the inverse of 
\[
\mymatrix A=\begin{bmatrix}\mymatrix I_{4} & \mymatrix 0_{4\times4}\\
\frac{1}{2}\hamiquat +{\quat p} & \hamiquat -{\quat r}
\end{bmatrix}
\]
is given by 
\[
\mymatrix A^{-1}=\begin{bmatrix}\mymatrix I_{4} & \mymatrix 0_{4\times4}\\
-\frac{1}{2}\hamiquat -{\quat r^{*}}\hamiquat +{\quat p} & \hamiquat -{\quat r^{*}}
\end{bmatrix}.
\]
\end{prop}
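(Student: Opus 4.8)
The plan is to exploit the block lower-triangular structure of $\mymatrix A$. For a block matrix $\begin{bmatrix}\mymatrix A_{11} & \mymatrix 0\\ \mymatrix A_{21} & \mymatrix A_{22}\end{bmatrix}$ with $\mymatrix A_{11}$ and $\mymatrix A_{22}$ invertible, the inverse is
\[
\begin{bmatrix}\mymatrix A_{11} & \mymatrix 0\\ \mymatrix A_{21} & \mymatrix A_{22}\end{bmatrix}^{-1}=\begin{bmatrix}\mymatrix A_{11}^{-1} & \mymatrix 0\\ -\mymatrix A_{22}^{-1}\mymatrix A_{21}\mymatrix A_{11}^{-1} & \mymatrix A_{22}^{-1}\end{bmatrix}.
\]
Here $\mymatrix A_{11}=\mymatrix I_{4}$, $\mymatrix A_{21}=\tfrac{1}{2}\hamiquat +{\quat p}$, and $\mymatrix A_{22}=\hamiquat -{\quat r}$, so the only nontrivial step is to identify the inverse of $\hamiquat -{\quat r}$.

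The claim is that $\hamiquat -{\quat r}^{-1}=\hamiquat -{\quat r^{*}}$ whenever $\quat r\in\mathbb{S}^{3}$. I would derive this from the homomorphism property of the Hamilton operators: from $\vector_{4}(\quat h_{1}\quat h_{2})=\hamiquat -{\quat h_{2}}\vector_{4}\quat h_{1}$ and the associativity of quaternion multiplication one gets $\hamiquat -{\quat a\quat b}=\hamiquat -{\quat b}\hamiquat -{\quat a}$ for all $\quat a,\quat b\in\mathbb{H}$. Taking $\quat a=\quat r^{*}$ and $\quat b=\quat r$ (or vice versa) and using $\quat r\quat r^{*}=\quat r^{*}\quat r=1$, which holds because $\norm{\quat r}=1$, together with $\hamiquat -{1}=\mymatrix I_{4}$, yields $\hamiquat -{\quat r}\hamiquat -{\quat r^{*}}=\hamiquat -{\quat r^{*}}\hamiquat -{\quat r}=\mymatrix I_{4}$. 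Equivalently, one can read this off the explicit entries in \eqref{eq:hamilton-four}: for a unit quaternion $\quat r$ the matrix $\hamiquat -{\quat r}$ is orthogonal and $\hamiquat -{\quat r^{*}}=\hamiquat -{\quat r}^{T}$. Substituting $\mymatrix A_{22}^{-1}=\hamiquat -{\quat r^{*}}$ into the block-inversion formula reproduces exactly the stated expression for $\mymatrix A^{-1}$.

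As a sanity check I would then verify the identity directly by block multiplication: in $\mymatrix A\mymatrix A^{-1}$ the $(1,1)$ block is $\mymatrix I_{4}$, the $(2,2)$ block is $\hamiquat -{\quat r}\hamiquat -{\quat r^{*}}=\mymatrix I_{4}$, the $(1,2)$ block is zero, and the $(2,1)$ block is $\tfrac{1}{2}\hamiquat +{\quat p}-\hamiquat -{\quat r}\bigl(\tfrac{1}{2}\hamiquat -{\quat r^{*}}\hamiquat +{\quat p}\bigr)=\tfrac{1}{2}\hamiquat +{\quat p}-\tfrac{1}{2}\hamiquat -{\quat r}\hamiquat -{\quat r^{*}}\hamiquat +{\quat p}=\mymatrix 0_{4\times4}$; since $\mymatrix A$ is square this one-sided check already establishes that $\mymatrix A^{-1}$ is the two-sided inverse. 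There is no genuine obstacle here; the only subtlety is bookkeeping — $\hamiquat -{\cdot}$ reverses the order of products, so one must be sure that the cancellation in the $(2,1)$ block uses $\hamiquat -{\quat r}\hamiquat -{\quat r^{*}}$ (which, for unit $\quat r$, equals $\mymatrix I_{4}$ regardless of the order).
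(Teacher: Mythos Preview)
Your argument is correct and essentially the same as the paper's: the paper invokes $\hamiquat -{\quat r^{*}}=\hamiquat -{\quat r}^{T}$ and $\hamiquat -{\quat r}\in\mathrm{O}(4)$ (its Propositions~\ref{prop:hamilton quaternion conjugate equals hamilton transpose} and~\ref{prop:hami4_O4}) to get $\hamiquat -{\quat r}^{-1}=\hamiquat -{\quat r^{*}}$, and then says the identity $\mymatrix A\mymatrix A^{-1}=\mymatrix A^{-1}\mymatrix A=\mymatrix I_{8}$ follows by direct calculation---which is exactly the block multiplication you spell out.
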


\begin{proof}
Since $\hamiquat -{\quat r^{*}}=\hamiquat -{\quat r}^{T}$ and $\hamiquat -{\quat r}\in\mathrm{O}\left(4\right)$
by Propositions \ref{prop:hamilton quaternion conjugate equals hamilton transpose}
and \ref{prop:hami4_O4}, the result $\mymatrix A\mymatrix A^{-1}=\mymatrix A^{-1}\mymatrix A=\mymatrix I_{8}$
follows by direct calculation.
\end{proof}

\section{Facts about Hamilton operators}
\begin{prop}
\label{prop:hamilton quaternion conjugate equals hamilton transpose}Let
$\quat h\in\mathbb{H}$, $\hamiquat -{\quat h^{*}}=\hamiquat -{\quat h}^{T}$
and $\hamiquat +{\quat h^{*}}=\hamiquat +{\quat h}^{T}$.
\end{prop}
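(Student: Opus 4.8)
The plan is to verify both identities directly from the explicit $4\times4$ matrix expressions in~\eqref{eq:hamilton-four}. Writing $\quat h=h_{1}+\imi h_{2}+\imj h_{3}+\imk h_{4}$, its conjugate is $\quat h^{*}=h_{1}-\imi h_{2}-\imj h_{3}-\imk h_{4}$, so $\hamiquat +{\quat h^{*}}$ and $\hamiquat -{\quat h^{*}}$ are obtained from $\hamiquat +{\quat h}$ and $\hamiquat -{\quat h}$ by replacing $h_{2},h_{3},h_{4}$ with $-h_{2},-h_{3},-h_{4}$ and keeping $h_{1}$ fixed. One then compares, entry by entry, with $\hamiquat +{\quat h}^{T}$ and $\hamiquat -{\quat h}^{T}$; the comparisons are immediate from the sign pattern in~\eqref{eq:hamilton-four}.

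To keep the bookkeeping transparent and avoid writing out all sixteen entries, I would first record the block decompositions, which are read off directly from~\eqref{eq:hamilton-four}:
\begin{align*}
\hamiquat +{\quat h} & =\begin{bmatrix}h_{1} & -\myvec v^{T}\\
\myvec v & h_{1}\mymatrix I_{3}+\mymatrix S\left(\myvec v\right)
\end{bmatrix}, & \hamiquat -{\quat h} & =\begin{bmatrix}h_{1} & -\myvec v^{T}\\
\myvec v & h_{1}\mymatrix I_{3}-\mymatrix S\left(\myvec v\right)
\end{bmatrix},
\end{align*}
where $\myvec v=\begin{bmatrix}h_{2} & h_{3} & h_{4}\end{bmatrix}^{T}$ and $\mymatrix S\left(\myvec v\right)$ is the skew-symmetric cross-product matrix, so that $\mymatrix S\left(\myvec v\right)^{T}=-\mymatrix S\left(\myvec v\right)$. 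Conjugation sends $\myvec v\mapsto-\myvec v$ and fixes $h_{1}$, whereas transposition interchanges the off-diagonal blocks ($-\myvec v^{T}$ becomes $-\myvec v$ and $\myvec v$ becomes $\myvec v^{T}$) and turns $\pm\mymatrix S\left(\myvec v\right)$ into $\mp\mymatrix S\left(\myvec v\right)$. Performing both operations block by block gives $\hamiquat +{\quat h^{*}}=\hamiquat +{\quat h}^{T}$ and $\hamiquat -{\quat h^{*}}=\hamiquat -{\quat h}^{T}$.

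There is no real obstacle here: the proposition is a direct consequence of the definitions, and the only point needing care is keeping the sign pattern of the off-diagonal entries straight when forming the transpose as opposed to the conjugate. Should one prefer not to introduce the cross-product matrix, the claim follows equally well from a brute-force entrywise check of the four matrices in~\eqref{eq:hamilton-four}.
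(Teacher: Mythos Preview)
Your proof is correct and follows essentially the same approach as the paper, which simply states that the equalities ``can be verified by inspection'' from the explicit matrices in~\eqref{eq:hamilton-four}. Your block decomposition via the skew-symmetric cross-product matrix is a tidy way to carry out that inspection, but the underlying argument is the same direct verification from the definitions.
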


\begin{proof}
Since the Hamilton operators $\hami +$ and $\hami -$ are defined
as in \eqref{eq:hamilton-four}, these equalities can be verified
by inspection.
\end{proof}
\begin{prop}
\label{prop:hami4_O4}If $\quat r\in\mathbb{S}^{3}$ then $\hamiquat +{\quat r},\hamiquat -{\quat r}\in\mathrm{O}\left(4\right)$.
\end{prop}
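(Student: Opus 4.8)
The plan is to verify the defining property of an orthogonal matrix directly, namely $\hamiquat+{\quat r}^{T}\hamiquat+{\quat r}=\mymatrix I_{4}$ and $\hamiquat-{\quat r}^{T}\hamiquat-{\quat r}=\mymatrix I_{4}$, using the fact that the Hamilton operators turn quaternion multiplication into matrix multiplication. First I would record the (anti-)homomorphism property: from the identity $\vector_{4}(\quat h_{1}\quat h_{2})=\hamiquat+{\quat h_{1}}\vector_{4}\quat h_{2}=\hamiquat-{\quat h_{2}}\vector_{4}\quat h_{1}$ together with associativity of quaternion multiplication, evaluating $\vector_{4}$ on $(\quat h_{1}\quat h_{2})\quat h_{3}=\quat h_{1}(\quat h_{2}\quat h_{3})$ in two ways gives $\hamiquat+{\quat h_{1}\quat h_{2}}\vector_{4}\quat h_{3}=\hamiquat+{\quat h_{1}}\hamiquat+{\quat h_{2}}\vector_{4}\quat h_{3}$ for every $\quat h_{3}$, hence $\hamiquat+{\quat h_{1}\quat h_{2}}=\hamiquat+{\quat h_{1}}\hamiquat+{\quat h_{2}}$; an analogous computation gives $\hamiquat-{\quat h_{1}\quat h_{2}}=\hamiquat-{\quat h_{2}}\hamiquat-{\quat h_{1}}$, with the order reversed. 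I would also note that $\hamiquat+{1}=\hamiquat-{1}=\mymatrix I_{4}$, which is immediate from the explicit forms in \eqref{eq:hamilton-four}.

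Next, using Proposition~\ref{prop:hamilton quaternion conjugate equals hamilton transpose}, i.e. $\hamiquat+{\quat r^{*}}=\hamiquat+{\quat r}^{T}$ and $\hamiquat-{\quat r^{*}}=\hamiquat-{\quat r}^{T}$, and the fact that $\quat r\in\mathbb{S}^{3}$ implies $\quat r^{*}\quat r=\quat r\quat r^{*}=\norm{\quat r}^{2}=1$, I would conclude $\hamiquat+{\quat r}^{T}\hamiquat+{\quat r}=\hamiquat+{\quat r^{*}}\hamiquat+{\quat r}=\hamiquat+{\quat r^{*}\quat r}=\hamiquat+{1}=\mymatrix I_{4}$, and likewise $\hamiquat+{\quat r}\hamiquat+{\quat r}^{T}=\hamiquat+{\quat r\quat r^{*}}=\mymatrix I_{4}$; for the minus operator, the anti-homomorphism property gives $\hamiquat-{\quat r}^{T}\hamiquat-{\quat r}=\hamiquat-{\quat r^{*}}\hamiquat-{\quat r}=\hamiquat-{\quat r\quat r^{*}}=\hamiquat-{1}=\mymatrix I_{4}$, and symmetrically for the other product. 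Hence both $\hamiquat+{\quat r}$ and $\hamiquat-{\quat r}$ lie in $\mathrm{O}(4)$.

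A more pedestrian alternative would skip the homomorphism lemma and simply expand $\hamiquat+{\quat r}^{T}\hamiquat+{\quat r}$ (and the corresponding product for $\hamiquat-{\quat r}$) from the explicit $4\times4$ entries in \eqref{eq:hamilton-four}, observing that the off-diagonal terms cancel in pairs and each diagonal term collapses to $r_{1}^{2}+r_{2}^{2}+r_{3}^{2}+r_{4}^{2}=1$. There is no real obstacle here; the only point requiring attention is that $\hamiquat-{}$ is an anti-homomorphism rather than a homomorphism, so one must write $\hamiquat-{\quat r^{*}}\hamiquat-{\quat r}=\hamiquat-{\quat r\quat r^{*}}$ and not $\hamiquat-{\quat r^{*}\quat r}$ — although since $\quat r\quat r^{*}=\quat r^{*}\quat r=1$ in this setting the distinction does not affect the conclusion.
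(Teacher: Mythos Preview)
Your proposal is correct and takes essentially the same approach as the paper: both arguments combine the (anti-)homomorphism property of the Hamilton operators with Proposition~\ref{prop:hamilton quaternion conjugate equals hamilton transpose} and the identity $\quat r^{*}\quat r=1$ to conclude that the transpose is the inverse. The only cosmetic difference is that the paper derives the multiplicative property implicitly via the $\vector_{4}$ mapping and an arbitrary test quaternion $\quat x$, whereas you state the (anti-)homomorphism identities up front; your alternative entrywise verification is also valid but not used in the paper.
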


\begin{proof}
Since $\quat r\in\mathbb{S}^{3}$ then $\quat r^{*}\quat r=1$ and
$\quat x=\quat x\quat r^{*}\quat r$, $\forall\quat x\in\mathbb{H}$,
which implies 
\begin{align*}
\vector_{4}\quat x & =\hamiquat -{\quat r}\vector_{4}\left(\quat x\quat r^{*}\right)\\
 & =\hamiquat -{\quat r}\hamiquat -{\quat r^{*}}\vector_{4}\quat x\\
 & =\hamiquat -{\quat r^{*}\quat r}\vector_{4}\quat x,\qquad\forall\vector_{4}\quat x\in\mathbb{R}^{4}.
\end{align*}
Thus $\hamiquat -{\quat r}\hamiquat -{\quat r^{*}}=\hamiquat -{\quat r^{*}\quat r}=\mymatrix I$,
therefore $\hamiquat -{\quat r^{*}}=\hamiquat -{\quat r}^{-1}$. Furthermore,
from Proposition~\ref{prop:hamilton quaternion conjugate equals hamilton transpose}
we have that $\hamiquat -{\quat r^{*}}=\hamiquat -{\quat r}^{T}$,
which implies $\hamiquat -{\quat r}^{-1}=\hamiquat -{\quat r}^{T}$
and hence $\hamiquat -{\quat r}\in\mathrm{O}\left(4\right)$.

From $\quat x=\quat r^{*}\quat r\quat x$, $\forall\quat x\in\mathbb{H}$,
we apply the same reasoning to conclude that $\hamiquat +{\quat r}\in\mathrm{O}\left(4\right)$.
\end{proof}

\section{Whole Body Kinematics of Holonomic Mobile Manipulators\label{sec:wholebody}}

Consider a holonomic mobile base moving in the plane $XY$ and an
inertial reference frame $\mathcal{F}_{0}$ somewhere in the space.
The position of the local reference frame $\mathcal{F}_{b}$ in the
center of the mobile base is given by the coordinates $\left(x,y\right)$,
and the orientation is given by the rotation angle $\phi$ around
axis $Z$. Thus, the generalized coordinates of the base can be written
as $\myvec q_{b}=\begin{bmatrix}x & y & \phi\end{bmatrix}^{T}$ and
its pose, relative to $\mathcal{F}_{0}$, is given by the following
dual quaternion 
\begin{equation}
\dq x_{b}^{0}=\quat r_{b}^{0}+\dual\frac{1}{2}\quat p_{0,b}^{0}\quat r_{b}^{0},\label{eq:quat_x_hol}
\end{equation}
where $\quat r_{b}^{0}=\cos\left(\phi/2\right)+\imk\sin\left(\phi/2\right)$
and $\quat p_{0,b}^{0}=x\imi+y\imj$ \citep{Adorno2011}.

Taking the first time-derivative of \eqref{eq:quat_x_hol} and mapping
into $\mathbb{R}^{8}$ with the $\vector_{8}$ operator, the differential
forward kinematics of the holonomic mobile base is given by 
\begin{equation}
\vector_{8}\dot{\dq x}_{b}^{0}=\mymatrix J_{b}\dot{\myvec q}_{b},\label{eq:dfkmb}
\end{equation}
where $\mymatrix J_{b}$ is the (dual quaternion) Jacobian matrix
(see page 89 in \citep{Adorno2011}).

Next, consider a manipulator on top of the mobile base. Let the reference
frame of the manipulator's base be $\mathcal{F}_{m}$ and $\dq x_{m}^{b}$
be a constant dual quaternion representing the rigid-motion from $\mathcal{F}_{b}$
to $\mathcal{F}_{m}$. For a serial manipulator with $\eta$ revolute
joints, with $\theta_{k}$ being the angle of the $k$-th joint, for
$k=1,\ldots,\eta$, the forward kinematics that relates the frame
$\mathcal{F}_{e}$ of the end-effector to the base of the manipulator
$\mathcal{F}_{m}$ is a function of all joints. More specifically,
the pose of the end-effector with respect to the base of the manipulator
is given by the unit dual quaternion $\dq x_{e}^{m}=\dq f(\myvec q_{m})$,
with $\myvec q_{m}=\begin{bmatrix}\theta_{1} & \cdots & \theta_{\eta}\end{bmatrix}^{T}$
being the vector containing all the joint angles \citep{Adorno2011}.

The differential forward kinematics is given by $\dot{\dq x}_{e}^{m}=\dq f'(\myvec q_{m})$,
where $\dq f'\triangleq d\dq f/dt$. Thus, applying the $\vector_{8}$
operator, the differential forward kinematics of the manipulator is
\begin{equation}
\vector_{8}\dot{\dq x}_{e}^{m}=\mymatrix J_{m}\dot{\myvec q}_{m},\label{eq:dfkmm}
\end{equation}
where $\mymatrix J_{m}=\partial\dq f/\partial\theta_{m}\in\mathbb{R}^{8\times\eta}$
is the analytical Jacobian relating the joints velocities to the derivative
of the unit dual quaternion that represents the end-effector pose.
Notice that both forward kinematics and differential forward kinematics
are obtained directly in the algebra of dual quaternions \citep{Adorno2011}.

Coupling the manipulator to the mobile base, the pose of the end-effector,
related to the inertial coordinate frame $\mathcal{F}_{0}$, is described
by the composition of each subsystem and its time derivative is given
by 
\begin{equation}
\dq x_{e}^{0}=\dq x_{b}^{0}\dq x_{m}^{b}\dq x_{e}^{m}\implies\dot{\dq x}_{e}^{0}=\dot{\dq x}_{b}^{0}\dq x_{m}^{b}\dq x_{e}^{m}+\dq x_{b}^{0}\dq x_{m}^{b}\dot{\dq x}_{e}^{m}.\label{eq:fkm derivative}
\end{equation}

Mapping \eqref{eq:fkm derivative} into $\mathbb{R}^{8}$, and using
\eqref{eq:hami}, \eqref{eq:dfkmb}, and \eqref{eq:dfkmm}, we obtain
\begin{alignat*}{1}
\end{alignat*}
which can be written as 
\begin{align}
\vector_{8}\dot{\dq x}_{e}^{0} & =\hami -_{8}(\dq x_{m}^{b}\dq x_{e}^{m})\mymatrix J_{b}\dot{\myvec q}_{b}+\hami +_{8}(\dq x_{b}^{0}\dq x_{m}^{b})\mymatrix J_{m}\dot{\myvec q}_{m},\\
 & =\mymatrix J_{w}\dot{\myvec q}_{w},\label{eq:dfkm}
\end{align}
where 
\begin{gather}
\mymatrix J_{w}=\begin{bmatrix}\hami -_{8}(\dq x_{m}^{b}\dq x_{e}^{m})\mymatrix J_{b} &  & \hami +_{8}(\dq x_{b}^{0}\dq x_{m}^{b})\mymatrix J_{m}\end{bmatrix}\label{eq:whole-body-jacobian-of-mobile-manipulators}
\end{gather}
and $\dot{\myvec q}_{w}=\begin{bmatrix}\dot{\myvec q}_{b}\\
\dot{\myvec q}_{m}
\end{bmatrix}$.

\section*{References}

\bibliographystyle{IEEEtran}
\bibliography{library}

\end{document}